\documentclass{article} 
\PassOptionsToPackage{table}{xcolor}
\usepackage{iclr2025_conference,times}
\iclrfinalcopy 

\usepackage{amsmath,amsfonts,bm}

\def\eqref#1{equation~\ref{#1}}

\def\1{\bm{1}}

\DeclareMathAlphabet{\mathsfit}{\encodingdefault}{\sfdefault}{m}{sl}
\SetMathAlphabet{\mathsfit}{bold}{\encodingdefault}{\sfdefault}{bx}{n}

\newcommand{\KL}{D_{\mathrm{KL}}}

\usepackage{hyperref}
\usepackage{url}

\usepackage{graphicx}
\usepackage{booktabs}
\usepackage{multirow}
\usepackage{makecell}
\usepackage{subcaption}
\usepackage{adjustbox}
\usepackage[table,xcdraw]{xcolor}
\usepackage{amsmath}
\usepackage{amssymb}
\usepackage{amsthm}
\usepackage{pifont}
\usepackage{algorithm}
\usepackage{algorithmic}
\usepackage{tikz}
\usetikzlibrary{positioning,arrows.meta,fit,backgrounds,calc,shapes.geometric,shadows}
\usepackage[capitalize,noabbrev]{cleveref}

\definecolor{oursblue}{RGB}{232, 240, 254}   
\definecolor{headgray}{gray}{0.95}
\definecolor{accentblue}{RGB}{26, 86, 173}
\definecolor{edgeorange}{RGB}{201, 96, 20}   
\definecolor{attnblue}{RGB}{76, 114, 176}    
\definecolor{ffnorange}{RGB}{221, 132, 82}   
\newcommand{\rowours}{\rowcolor{oursblue}}
\newcommand{\best}[1]{\textbf{#1}}
\newcommand{\snd}[1]{\underline{#1}}
\newcommand{\dn}{$\downarrow$}
\newcommand{\up}{$\uparrow$}
\newcommand{\xmark}{\ding{55}}
\newcommand{\cmark}{\ding{51}}

\definecolor{linkcol}{RGB}{46,111,176} 
\newcommand{\codeicon}{\raisebox{-0.22em}{\includegraphics[height=1.05em]{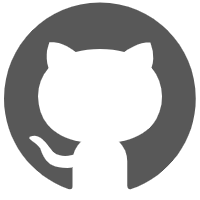}}}
\newcommand{\globeicon}{\raisebox{-0.22em}{\includegraphics[height=1.05em]{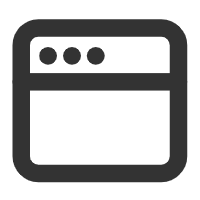}}}
\newcommand{\videoicon}{\raisebox{-0.22em}{\includegraphics[height=1.05em]{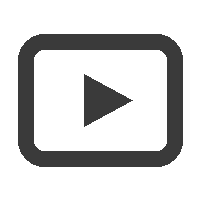}}}
\newcommand{\iconlink}[3]{\href{#1}{#2\,\,{\color{linkcol}\textbf{#3}}}}

\makeatletter
\def\ps@plain{%
  \let\@mkboth\@gobbletwo
  \let\@oddhead\@empty
  \let\@evenhead\@empty
  \def\@oddfoot{\hfil\thepage\hfil}
  \def\@evenfoot{\hfil\thepage\hfil}
}
\makeatother
\newcommand{\method}{CoCurve\xspace}
\newcommand{\methodfull}{CoCurve\xspace}
\newcommand{\Param}{\operatorname{Param}}
\newcommand{\Hmat}{\mathbf{H}}
\newcommand{\Uset}{\mathcal{U}}
\usepackage{xspace}

\theoremstyle{plain}
\newtheorem{proposition}{Proposition}

\theoremstyle{remark}
\newtheorem{remark}{Remark}

\newcommand{\AppBlock}[1]{%
  \par\addvspace{1.3em}%
  {\noindent\color{accentblue}\large\bfseries #1}\par
  \vspace{0.3em}%
  {\color{accentblue}\hrule height 1pt}%
  \addvspace{0.85em}%
}

\title{CoCurve: Cross-Module Co-Pruning Curvature for Training-Free Structured LLM Pruning}

\author{%
Zhiren Gong$^{1,2}$ \quad
Zihao Zeng$^{1}$ \quad
Zijie Wang$^{1}$ \quad
Tiantong Wang$^{1}$ \\
Chau Yuen$^{3}$ \quad
Wei Yang Bryan Lim$^{1}$ \\
$^{1}$College of Computing and Data Science, Nanyang Technological University, Singapore\\
$^{2}$Interdisciplinary Graduate Programme, Nanyang Technological University, Singapore\\
$^{3}$School of Electrical and Electronic Engineering, Nanyang Technological University, Singapore\\
\texttt{zhiren001@e.ntu.edu.sg} \quad
\texttt{bryan.limwy@ntu.edu.sg}
}

\begin{document}

\maketitle
\thispagestyle{plain} 

\vspace{-6mm}
\begin{center}\vspace{-2pt}
{\hypersetup{hidelinks}%
\iconlink{https://github.com/GongZhiren/CoCurve}{\codeicon}{Code}\hspace{20pt}%
\iconlink{https://gongzhiren.github.io/CoCurve-website/}{\globeicon}{Project Page}\hspace{20pt}%
\iconlink{https://gongzhiren.github.io/CoCurve-website/tutorial.html}{\videoicon}{Tutorial}}
\end{center}
\vspace{2mm}

\begin{abstract}
Structured pruning compresses large language models (LLMs) by removing whole computational units, such as attention heads and feed-forward (FFN) channel groups.
Most training-free methods, however, rank these units independently, implicitly treating the loss from pruning a set as the sum of its individual losses.
This view fails for Transformers, whose sublayers are coupled through a shared residual stream. Two individually weak units can thus be jointly indispensable, yet independent scoring is blind to such dependence and removes them together.
We introduce \method (Cross-Module Co-Pruning Curvature), a calibration-only, fine-tuning-free method that prunes attention and FFN units jointly.
A second-order Taylor expansion of the token-level KL between the frozen model and its masked copy yields a single Fisher matrix whose diagonal is classical node saliency and whose off-diagonal entries are \emph{co-pruning curvature edges}: the extra damage of removing two units together.
Under a single-ablation additivity approximation this matrix reduces to a Gram product of single-unit ablation features, so the full $M\times M$ interaction is recovered from $M$ forward passes, with no pairwise sweeps or gradients. 
Pruning then reduces to one budgeted quadratic program, solved in a single shot under a shared attention--FFN budget, with no labels, fine-tuning, or recovery. 
Across four 3B–8B LLMs and a 24B-model scaling study, CoCurve consistently outperforms strong training-free baselines, with the largest gains on fragile generative tasks and as compression increases---achieving up to 14.5× lower perplexity than the strongest valid baseline at 40\% pruning of the 24B model.
Ablations attribute the gain to the cross-module edges rather than node saliency, and a single calibration statistic predicts a priori when the edges should be trusted or damped.
\method reframes structured LLM pruning: \emph{prune the edges, not just the nodes.}
\end{abstract}

\section{Introduction}
\label{sec:intro}

\begin{figure*}[t]
\centering
\includegraphics[width=\textwidth]{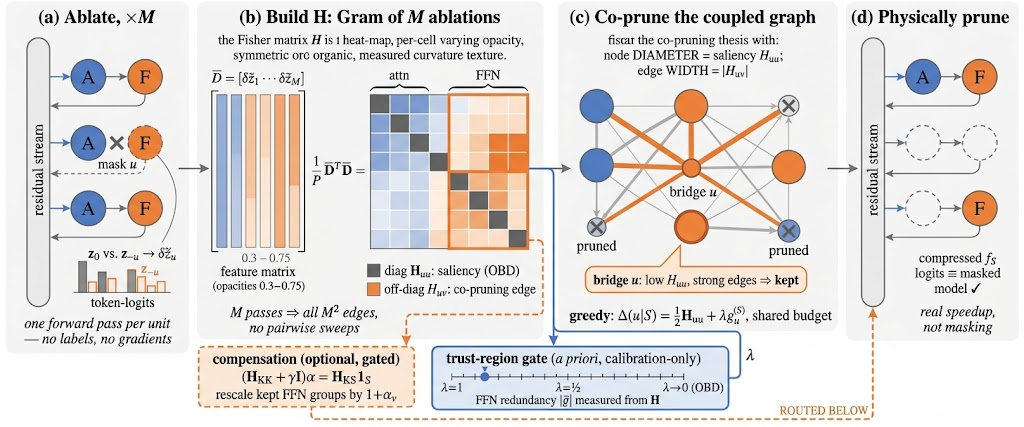}
\caption{\textbf{Overview of \methodfull.} \textbf{(a)} Transformer sublayers read and write a shared
residual stream, so a low-saliency \emph{bridge} unit $u$ can be jointly indispensable with units many
layers away; scoring each unit in isolation misses this. \textbf{(b)} A second-order expansion of the
token-level KL yields one Fisher matrix $\Hmat$ whose diagonal is classical node saliency and whose
off-diagonal is the \emph{co-pruning curvature} between attention and FFN units. The full matrix is a
Gram product of $M$ single-unit ablations (no $O(M^2)$ pairwise sweeps), solved once by cost-normalized
greedy co-pruning with no weight updates or recovery.}
\label{fig:overview}
\end{figure*}

Large language models (LLMs) deliver their capabilities at a parameter and latency cost that is
prohibitive for most deployments~\citep{vaswani2017attention,dubey2024llama3}. Structured
pruning removes whole computational units, such as attention heads and feed-forward (FFN) channel
groups, and is the most deployment-friendly route to compression: the pruned model stays dense and
runs faster on commodity hardware without the specialized kernels that unstructured or semi-structured
sparsity needs~\citep{frantar2023sparsegpt,sun2024wanda}. The most practical setting is training-free:
one-shot pruning from a small unlabeled calibration set, which avoids the cost and data dependence of
retraining or distillation~\citep{ma2023llmpruner,an2024flap}.

\textbf{The blind spot: nodes without edges.}
Most structured pruning methods rank removable units using weight~\citep{han2015learning}, activation~\citep{sun2024wanda,an2024flap}, or local loss statistics~\citep{ma2023llmpruner,frantar2022obc} and prune the lowest-ranked ones, implicitly assuming that set-level damage is additive,
$\Delta\mathcal{L}(S)\approx\sum_{u\in S}\Delta\mathcal{L}(u)$.
This assumption is fragile in Transformers, where attention and FFN units are coupled across modules and layers through the shared residual stream.
Pruning is therefore a joint operation on a coupled system: two units
that look weak alone may be jointly indispensable, while two salient-looking ones may be partly redundant
when their perturbations overlap. Independent scoring is blind to this coupling even when attention and
FFN units share one global ranking, and it will co-prune a low-saliency \emph{bridge} unit alongside the
units that depend on it. Such interactions must be modeled, not approximated away by ranking:
\emph{prune the edges, not just the nodes.}

\textbf{From heuristic edges to derived edges.}
One could add interactions heuristically, e.g.\ by activation correlation $E_{uv}=\mathrm{corr}(a_u,a_v)$,
but correlation is not pruning dependence: it is dominated by activation scale, inflated by units that
merely co-fire on frequent tokens, and disconnected from the output distribution we want to preserve. We
instead derive the interactions from the pruning objective itself. We define pruning risk as the
token-level KL divergence between the frozen model (teacher) and its masked copy (student) over unlabeled
text. Since the student equals the teacher at zero pruning, both the value and gradient of this risk
vanish there, and the leading Taylor term is purely second order: a quadratic form $\tfrac12 s^\top
\Hmat s$ in the structured mask $s$. This single matrix $\Hmat$ has a direct, actionable reading (\cref{fig:overview}):
\begin{center}
\emph{its diagonal $\Hmat_{uu}$ is classical node saliency; its off-diagonal $\Hmat_{uv}$ is a
\textbf{co-pruning curvature edge}, the extra distortion from removing $u$ and $v$ together.}
\end{center}
These edges are not hand-designed information-flow arrows; they are symmetric interactions in the same
Fisher geometry that defines the saliency, spanning attention$\leftrightarrow$FFN unit pairs across all
layers. This turns pruning from independent node ranking into global co-pruning-risk minimization, and
connects it to classical second-order pruning~\citep{lecun1989obd,hassibi1993obs}: diagonal \method is an
Optimal-Brain-Damage saliency, and the full method is its cross-module generalization.

\textbf{Making it practical.}
Edge-aware pruning looks expensive: a full interaction matrix would demand $O(M^2)$ pairwise
ablations. We avoid this. Written in logit space, $\Hmat_{uv}$ is, to leading order, a Gram product of
single-unit Fisher-weighted logit perturbations $\widetilde{\delta z}_u$, so the whole $M\times M$ matrix
follows from only $M$ single-unit ablations, with no gradients, Hessian-vector products, or pairwise
sweeps. This reconstruction is a deliberate \emph{surrogate} (finite ablations in place of infinitesimal
derivatives, on the teacher's top-$r$ support), and \cref{sec:exp-analysis} shows it predicts true
joint-pruning damage. We then solve the resulting binary quadratic program with a one-shot,
cost-normalized greedy solver under a shared parameter budget, so the attention-versus-FFN split is
decided by the objective rather than fixed by hand. The method uses no labels, LoRA recovery,
fine-tuning, or iterative re-estimation; for aggressive ratios we optionally add a gated, closed-form
compensation that reuses the same edge blocks (\cref{sec:method-comp}), still label- and gradient-free.

\textbf{Results and findings.}
We evaluate \method on four dense LLMs (3B--8B, three families, base and instruction-tuned; headline
Llama-3.1-8B-Instruct) against nine training-free baselines on a broad suite spanning language modeling,
commonsense, world knowledge, and code, including the generative code tasks prior pruning work often omits
and on which every method is most brittle. On the headline model \method is the only training-free method
that stays strong in every capability class (\cref{tab:main-8b}), and it best preserves the fragile code
generation that others lose; the advantage narrows but persists across the other three models. It scales
to a 24B model, where the edge term delivers its largest gains under aggressive compression---up to
$14.5\times$ lower perplexity than the strongest baseline at $40\%$. Ablations
isolate the cause: the cross-module edge term, not node saliency, drives the gain, and joint
attention--FFN co-pruning beats pruning either module alone. Finally, a single calibration statistic
predicts when the edges help, the edge benefit scaling inversely with a model's FFN-channel redundancy,
turning an empirical observation into an architecture-level rule.

\textbf{Contributions.}
\begin{itemize}
\item \textbf{A node-to-edge objective.} We derive from a token-level KL risk a quadratic whose diagonal
is node saliency and whose off-diagonal is cross-module co-pruning curvature between attention and
FFN units (\cref{sec:method}).
\item \textbf{A practical estimator and solver.} We show the edge matrix is a Gram product of
single-unit ablation features (recovered from $M$ ablations, no pairwise sweeps or gradients) and solve
it in one shot under a shared budget, with a single damping scalar $\lambda$ carrying a trust-region
justification (\cref{sec:method-solver}, \cref{prop:trust}).
\item \textbf{An interpretable mechanism.} Co-derived with the saliency, these edges expose node-first
scoring's failure mode, the deletion of low-saliency bridge units that others depend on, and place
\method on a continuum with classical OBD/OBS (\cref{sec:exp-analysis}).
\item \textbf{A predictive rule, evaluated on four LLMs.} With no recovery \method outperforms
strong training-free baselines, and the same calibration statistic predicts in advance when its edges
help: edge benefit grows as FFN channels become less redundant (\cref{sec:experiments}).
\end{itemize}

\section{Method}
\label{sec:method}

We target dense decoder-only LLMs and prune two kinds of structured units: attention units
(a head in multi-head attention, or a KV-aligned query-head group in grouped-query
attention~\citep{ainslie2023gqa}) and FFN groups (contiguous blocks of intermediate
channels). Let $\Uset=\{u_1,\dots,u_M\}$ collect all such units across all layers, with a removable
parameter cost $c_u=\Param(u)$ each. For each unit we use a keep mask $m_u\in\{0,1\}$ and a pruning
indicator $s_u=1-m_u$; the unpruned model is $s=\mathbf{0}$. A unit's runtime mask zeroes exactly its
residual-stream contribution (the output-projected write of an attention unit, or the down-projected
write of an FFN group), leaving normalization, RoPE, residual structure, and all other units
untouched (formal definitions and the runtime-mask/physical-prune equivalence are in
\cref{app:units}). We verify that physically deleting the selected units reproduces the masked
model's logits to numerical precision, so reported speedups reflect real removal.

\subsection{Pruning risk as second-order self-distillation}
\label{sec:method-risk}

Let $f_0$ be the frozen full model (the teacher) with next-token distribution
$p_0(\cdot\mid x_{\le t})=\mathrm{softmax}(z_0(x,t))$ over a vocabulary of size $V$, and let $f_s$ be
the masked model with distribution $p_s$. Given a small unlabeled calibration set
$\mathcal{C}$ and a deterministic set of token positions $\mathcal{T}(x)$ per sequence, we define the
pruning risk as the token-level self-distillation KL divergence
\begin{equation}
\mathcal{R}(s)=\mathbb{E}_{x\in\mathcal{C}}\Big[\tfrac{1}{|\mathcal{T}(x)|}
\textstyle\sum_{t\in\mathcal{T}(x)}\KL\big(p_0(\cdot\mid x_{\le t})\,\|\,p_s(\cdot\mid x_{\le t})\big)\Big].
\label{eq:risk}
\end{equation}
This objective is label-free, ties directly to output behavior, and, unlike a supervised loss on a
tiny calibration set, cannot overfit task labels. Its defining property is its local geometry.
Because the masked model equals the teacher at $s=\mathbf{0}$, the two distributions coincide there;
since $\KL$ is non-negative and minimized at equality, both the value and the gradient vanish:
\begin{equation}
\mathcal{R}(\mathbf{0})=0,\qquad \nabla_s\mathcal{R}(\mathbf{0})=0.
\label{eq:vanish}
\end{equation}
A second-order Taylor expansion around the full model therefore has no first-order term, and
the leading behavior is a pure quadratic:
\begin{equation}
\mathcal{R}(s)\;\approx\;\tfrac{1}{2}\,s^\top \Hmat\, s,
\qquad
\Hmat=\nabla_s^2\mathcal{R}(s)\big|_{s=\mathbf{0}}\in\mathbb{R}^{M\times M}.
\label{eq:quadratic}
\end{equation}
For binary masks $s_u^2=s_u$, so expanding \cref{eq:quadratic} gives the central decomposition
\begin{equation}
\mathcal{R}(s)\approx
\underbrace{\tfrac12\textstyle\sum_{u}\Hmat_{uu}\,s_u}_{\text{node saliency (diagonal)}}
\;+\;
\underbrace{\textstyle\sum_{u<v}\Hmat_{uv}\,s_u s_v}_{\text{co-pruning edges (off-diagonal)}}.
\label{eq:decomp}
\end{equation}

\begin{remark}[Node saliency is the diagonal special case]
Dropping the off-diagonal terms reduces \cref{eq:decomp} to a sum of independent saliencies
$\tfrac12\sum_u \Hmat_{uu}s_u$, i.e.\ an Optimal-Brain-Damage--style criterion~\citep{lecun1989obd}
lifted to structured units (formalized in \cref{prop:obd}). \method keeps the cross-module edges $\Hmat_{uv}$, which encode how the
removal of $u$ and $v$ interact: $\Hmat_{uv}>0$ means co-pruning is more harmful than the
units' individual terms predict; $\Hmat_{uv}<0$ means their perturbations partly cancel under the
Fisher metric; $\Hmat_{uv}\approx 0$ means they are second-order independent. A low-saliency
\emph{bridge} unit (small $\Hmat_{uu}$) with many strong edges is thus protected, because pruning it
together with its partners is costly, exactly the failure mode node-first scoring misses.
\end{remark}

\subsection{Estimating the edge matrix from single-unit ablations}
\label{sec:method-estimate}

The matrix $\Hmat$ is the Gauss--Newton/Fisher Hessian of \cref{eq:risk}. For one token position the
KL Fisher in logit space is $F_{x,t}=\mathrm{diag}(p_0)-p_0 p_0^\top$, and to second order the KL
between teacher and masked model is the Fisher-weighted norm of the logit perturbation
$\Delta z_s=z_0-z_s$. The key practical step is that this perturbation is, to leading order,
additive across units: if $\delta z_u(x,t)=z_0(x,t)-z_{-u}(x,t)$ is the logit shift from
masking unit $u$ alone, then $\Delta z_s\approx\sum_u s_u\,\delta z_u$. Substituting and taking
expectations yields a closed form for every entry,
\begin{equation}
\Hmat_{uv}=\mathbb{E}_{x,t}\big[\,\delta z_u(x,t)^\top F_{x,t}\,\delta z_v(x,t)\,\big].
\label{eq:huv}
\end{equation}
Whitening the perturbation into its probability-weighted, mean-centered form
$\widetilde{\delta z}_u=\sqrt{p_0}\odot(\delta z_u-\mathbb{E}_{y\sim p_0}[\delta z_{u,y}])$ turns the
Fisher inner product into a plain dot product (\cref{app:estimation}), so
\begin{equation}
\Hmat_{uv}=\tfrac{1}{P}\,\bar{D}_u^\top \bar{D}_v,
\qquad \bar{D}_u=\mathrm{vec}\big(\{\widetilde{\delta z}_u(x,t)\}_{x,t}\big),
\label{eq:gram}
\end{equation}
where $P$ is the total number of calibration positions. Thus the entire $M\times M$ edge
matrix is a Gram matrix of \emph{single-unit} feature vectors: it costs $M$ single-unit ablation
passes (one per unit), not $O(M^2)$ pairwise ablations, and requires no gradients or
Hessian-vector products. We never materialize the $V\times V$ Fisher: features are computed on the
teacher's top-$r$ logit coordinates (shared across all units so the Gram is valid) and reduced
blockwise on CPU. The diagonal $\Hmat_{uu}=\tfrac1P\|\bar D_u\|^2$ is non-negative by construction,
a useful implementation check. Implementation, top-$r$ alignment, prefix-cache replay, and complexity
are detailed in \cref{app:estimation}.

\subsection{One-shot cost-normalized greedy co-pruning}
\label{sec:method-solver}

Given a target ratio $\rho$, pruning solves the budgeted binary quadratic program
\begin{equation}
\min_{s\in\{0,1\}^M}\;\tfrac12\,s^\top \Hmat\, s
\quad\text{s.t.}\quad \textstyle\sum_{u}c_u s_u\;\ge\;\rho\textstyle\sum_u c_u .
\label{eq:objective}
\end{equation}
Attention units and FFN groups share one budget pool, so the realized per-type ratios
$\rho_H,\rho_F$ emerge from the optimization rather than being fixed. Exact solution is intractable,
so we use a one-shot greedy solver on the fixed surrogate (no weight updates, no re-estimation of
$\Hmat$). Maintaining the cumulative perturbation of the current set $S$, the marginal risk of adding
unit $u$ is
\begin{equation}
\Delta(u\mid S)=\tfrac12 \Hmat_{uu}+\textstyle\sum_{v\in S}\Hmat_{uv}=\tfrac12\Hmat_{uu}+g_u^{(S)},
\label{eq:marginal}
\end{equation}
where $g_u^{(S)}=\sum_{v\in S}\Hmat_{uv}$ is a running interaction accumulator updated in $O(M)$ by
$g\!\leftarrow\! g+\Hmat_{:,u^\star}$ after each selection. At each step we pick the unit with the
smallest cost-normalized marginal risk, $u^\star=\arg\min_{u\notin S}\Delta(u\mid S)/c_u$, and
stop once the budget is met (overshoot is bounded by the largest unit cost and reported as
$\rho_{\mathrm{actual}}$). We do not clip negative marginals: a unit with $\Delta(u\mid S)<0$
partly cancels the accumulated distortion and is a legitimate selection that follows directly from
the signed Fisher geometry. The solver is $O(M|S|)$ for $|S|$ selected units; \cref{alg:edge-taylor} (\cref{app:solver}) summarizes the
full pipeline, and two light anti-collapse guards (a per-layer cap and first/last-layer protection)
keep pruning distributed without changing the objective.

\subsection{A trust-region view of the edge term}
\label{sec:method-lambda}

The surrogate \cref{eq:quadratic} is a local model: exact as $s\!\to\!\mathbf 0$ and degrading as
the removed set grows, since the additivity behind \cref{eq:huv} ignores interactions among three
or more removed units. As $\rho$ increases, the pairwise term is the first to leave its trust region, since it aggregates
second-order interactions that the diagonal captures pointwise. We therefore expose the edge contribution
through a single \emph{interaction strength} $\lambda\ge0$,
\begin{equation}
\mathcal{R}_\lambda(s)=\tfrac12\textstyle\sum_u\Hmat_{uu}s_u
+\lambda\textstyle\sum_{u<v}\Hmat_{uv}s_u s_v,
\label{eq:lambda}
\end{equation}
recovering pure node saliency at $\lambda\!=\!0$ and the full edge model at $\lambda\!=\!1$. This is
not a free hyperparameter but a \emph{trust-region radius} on the quadratic: trusting the pairwise
term fully is optimal only while the expansion is accurate. Two properties make this a single, principled
method rather than a per-model knob. First, $\lambda$ is set by a label-free rule: the
trust-region optimum of the calibration risk (\cref{eq:risk}, \cref{prop:trust}) together with the
redundancy gate (\cref{app:scope}). The single exception is Falcon3 at the redundancy floor, where we
expose a transparent perplexity-vs-capability operating point ($\lambda{=}0.5$; \cref{sec:exp-ablation})
rather than folding it into the gate. Second, the
$\lambda\!=\!0$ endpoint is a proven floor: it is exactly the OBD diagonal selector
(\cref{app:classical}), so the edge term is a strict refinement that, at worst, falls back to a strong
classical second-order baseline, never below it. The whole method is thus one continuous family on
$\lambda\in[0,1]$, anchored by a label-free criterion that predicts a priori how far up that range
to go (\cref{app:scope}).

\begin{proposition}[The optimal interaction strength decreases with the ratio]
\label{prop:trust}
Let the true risk admit a third-order remainder $\mathcal{R}(s)=\tfrac12 s^\top\Hmat s+\mathcal{O}(\|s\|^3)$
whose leading correction is non-negative on the feasible set (co-removal is, on average,
super-additively harmful). Then the population-optimal $\lambda^\star(\rho)$ minimizing the gap
between $\mathcal{R}_\lambda$ and $\mathcal{R}$ over budget-$\rho$ masks is non-increasing in $\rho$,
with $\lambda^\star(\rho)\!\to\!1$ as $\rho\!\to\!0$ and $\lambda^\star(\rho)\!<\!1$ for large $\rho$.
\end{proposition}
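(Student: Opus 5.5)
The plan is to make ``population-optimal'' concrete as a least-squares fit over budget-$\rho$ masks. Let $\mathcal{S}_\rho$ be the masks $s\in\{0,1\}^M$ with $\sum_u c_us_u\ge\rho\sum_uc_u$, and let $\mu_\rho$ be a distribution on $\mathcal{S}_\rho$. Assume the family $\{\mu_\rho\}$ is monotone, in the sense that higher $\rho$ places mass on larger removed sets (for example, uniform over masks at the budget boundary). Write $A(s)=\tfrac12\sum_u\Hmat_{uu}s_u$ for the node part and $B(s)=\sum_{u<v}\Hmat_{uv}s_us_v$ for the edge part, so that $\mathcal{R}_\lambda=A+\lambda B$. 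For binary masks \cref{eq:decomp} gives the identity $\tfrac12 s^\top\Hmat s=A+B$. Hence the true risk is $\mathcal{R}=A+B+C$, where $C(s)=\mathcal{O}(\|s\|^3)$ is the remainder and its leading part is the term assumed non-negative. The gap is then $\mathbb{E}_{\mu_\rho}[(\mathcal{R}-\mathcal{R}_\lambda)^2]=\mathbb{E}[((1-\lambda)B+C)^2]$. This is a one-dimensional quadratic in $\lambda$, and it is minimized at the closed form
\[
\lambda^\star(\rho)=1+\frac{\mathbb{E}_{\mu_\rho}[B\,C]}{\mathbb{E}_{\mu_\rho}[B^2]},
\]
assuming $\mathbb{E}[B^2]>0$.

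The second step is the small-$\rho$ limit. Under the monotone family, a budget-$\rho$ mask has $\|s\|$ of order $\rho M$ in count. Since $|C|\le K\|s\|^3$ while $B$ scales like $\|s\|^2$ generically, Cauchy--Schwarz gives $|\mathbb{E}[BC]|/\mathbb{E}[B^2]\le K'\,\mathbb{E}[\|s\|^6]^{1/2}/\mathbb{E}[B^2]^{1/2}\to0$. The ratio vanishes like $\|s\|$, which gives $\lambda^\star\to1$. To make this honest I would add a nondegeneracy assumption: $\mathbb{E}[B^2]\ge\kappa\,\mathbb{E}[\|s\|^4]$ for some $\kappa>0$. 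Without it the edge term could vanish identically and $\lambda^\star$ would be undefined.

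The third step carries the sign and monotonicity. The hypothesis ``co-removal is super-additively harmful'' has to be read as a statement about the correlation of $C$ with $B$, not merely $C\ge0$. Otherwise the sign of $\mathbb{E}[BC]$ is not controlled. I would therefore formalize it as $C$ acting, to leading order, as a damping of the pairwise term along feasible directions: $\mathbb{E}_{\mu_\rho}[BC]=-\gamma(\rho)\,\mathbb{E}_{\mu_\rho}[B^2]$ with $\gamma\ge0$. One concrete source of this is a cubic term $\sum_{u<v<w}T_{uvw}s_us_vs_w$ whose sign structure opposes $\Hmat_{uv}$. With this, $\lambda^\star=1-\gamma(\rho)\le1$.

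Monotonicity then reduces to showing that $\gamma(\rho)$ is non-decreasing. The cleanest route I would try first is a homogeneity argument. $B$ is degree 2 and the leading part of $C$ is degree 3 in the number of removed units. Under a scaling family, where the masks at ratio $\rho$ are distributed like a $\rho$-fraction sample, $\mathbb{E}[BC]/\mathbb{E}[B^2]$ is proportional to $\rho$ times a fixed negative constant. That makes $\lambda^\star(\rho)=1-c\rho$, which is non-increasing and strictly below $1$ once $\rho>0$ is large enough for the leading term to dominate the higher-order terms.

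This last step is the main obstacle. For general $\mu_\rho$ and higher-order terms, monotonicity of a ratio of expectations need not hold, so I expect to have to state the result under the scaling/leading-order regime explicitly. In particular, the ``large $\rho$'' claim should be taken to mean the regime where the cubic correction dominates the quartic and higher terms.
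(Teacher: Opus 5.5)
Your least-squares set-up and your stationary point $\lambda^\star(\rho)=1+\mathbb{E}_{\mu_\rho}[BC]/\mathbb{E}_{\mu_\rho}[B^2]$ are correct, and structurally this is the paper's route. The paper fixes a single budget-$\rho$ mask $s_\rho$ and writes $\mathcal{R}_\lambda(s_\rho)-\mathcal{R}(s_\rho)=(\lambda-1)E(s_\rho)-b(\rho)$, with edge mass $E>0$ and unmodeled super-additive mass $b\ge0$. It then argues by degree counting, $b\sim|S|^3$ against $E\sim|S|^2$. However, the paper records the stationary point as $1-b/E$. Solving $(\lambda-1)E=b$ actually gives $1+b/E$, which is your formula specialized to one mask. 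So the paper's argument contains a sign error, and the monotone decrease it derives comes from that slip.

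\textbf{The gap in your proposal is step three.} That is where your proof of the statement as written breaks. Take the stated hypothesis (non-negative leading correction) together with the paper's standing assumption that the edge mass is positive on the masks in question. Then the leading part of $BC$ is non-negative, so $\mathbb{E}_{\mu_\rho}[BC]\ge0$ and $\lambda^\star\ge1$. Moreover $\lambda^\star$ grows with the super-additive mass, so it is non-decreasing in $\rho$. If $\lambda$ is confined to $[0,1]$, the convex objective is minimized at $\lambda^\star=1$ for every $\rho$. Either way, ``$\lambda^\star<1$ for large $\rho$'' fails.

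Your remark that the sign of $\mathbb{E}[BC]$ ``is not controlled'' holds only if $B$ changes sign on the feasible set. Your formalization $\mathbb{E}[BC]=-\gamma(\rho)\,\mathbb{E}[B^2]$ with $\gamma\ge0$ is therefore not a reading of super-additivity but its reversal: it posits a remainder that saturates, i.e.\ opposes, the pairwise term. What your argument actually establishes is a corrected proposition with a sub-additive remainder. That version is consistent with the paper's own pairwise measurement $\mathcal R(\{u,v\})\approx0.92\,[\mathcal R(\{u\})+\mathcal R(\{v\})]$ in the surrogate-fidelity appendix. You should say openly that the hypothesis has been changed, and that the literal hypothesis gives the opposite conclusion.

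\textbf{A smaller point concerns the $\rho\to0$ limit.} For binary masks $\|s\|\ge1$, so the cubic remainder is small relative to the quadratic only through the per-unit perturbation scale ($\max_u\|\delta z_u\|$, as in the paper's additivity bound), not through $\|s\|\to0$. Also, once the budget admits only a single unit, $B\equiv0$. Your nondegeneracy condition then fails and $\lambda^\star$ is undefined; the paper's ``$b/E\to0$'' has the same $0/0$ defect. The limit $\lambda^\star\to1$ should therefore be stated as a scaling-regime statement, as you already do for the monotonicity claim.
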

A short argument and the exact stationarity condition are given in \cref{app:trust}. The prediction
is operational: at deployment-scale ratios ($\rho\!\le\!20\%$) the edges should be trusted in full,
while at aggressive ratios the optimizer should partially discount them. We confirm this empirically
in \cref{sec:exp-highratio}: a single sweep of $\lambda\in\{0,\tfrac12,1\}$ at each ratio recovers an
interior optimum that shifts from $\lambda^\star\!=\!1$ toward $\lambda^\star\!\le\!\tfrac12$ as
$\rho$ grows, exactly as \cref{prop:trust} predicts. Selecting $\lambda$ uses only the calibration
risk (no labels), so it adds no supervision and leaves the one-shot, weight-update-free selection
intact.

\subsection{Edge-derived coupling-aware compensation}
\label{sec:method-comp}

Selection alone discards every removed unit's contribution, but the same edge matrix that scored
the interactions tells us how to absorb that lost contribution. Removing $S$ shifts the residual stream by
$\Delta z_S\approx\sum_{u\in S}\delta z_u$; among the kept FFN groups $\mathcal K$ we choose a per-group
output rescale $1+\alpha_v$ that cancels $\Delta z_S$ in the Fisher geometry, minimizing
$\big(\textstyle\sum_{v\in\mathcal K}\alpha_v\delta z_v-\Delta z_S\big)^{\!\top} F \big(\textstyle\sum_{v\in\mathcal K}\alpha_v\delta z_v-\Delta z_S\big)+\gamma\|\alpha\|_2^2$
over the coefficient vector $\alpha$. Its normal equations are a small ridge system in the
already-computed edge blocks,
\begin{equation}
(\Hmat_{\mathcal K\mathcal K}+\gamma I)\,\alpha=\Hmat_{\mathcal K S}\,\mathbf 1_S,
\qquad
\text{gain}_v=\mathrm{clip}(1+\alpha_v,\,1\!-\!\tau,\,1\!+\!\tau),
\label{eq:comp}
\end{equation}
with ridge strength $\gamma$ and clip radius $\tau$ selected on held-out calibration (\cref{app:protocol}).
This is one $|\mathcal K|\times|\mathcal K|$ CPU solve, with no gradients, no data beyond the calibration
used to build $\Hmat$, and, unlike OBS-style reconstruction, no per-layer weight inverse. We report
\method and \method$^{\!+}$ as separate rows, so the reported number is never degraded: compensation helps
the low-redundancy models most (Mistral, Falcon3) as they enter collapse, and is gated off on Llama. Selection thus
stays one-shot and weight-update-free, while compensation is a single closed-form step that
sharply reduces the high-ratio degradation (\cref{sec:exp-highratio}): the couplings we
measure to prune are the couplings we correct for.

\section{Experiments}
\label{sec:experiments}

\subsection{Setup}
\label{sec:exp-setup}

\textbf{Models.} We evaluate on four modern dense decoder-only LLMs spanning three families and the
3B--8B regime: \textbf{Llama-3.1-8B-Instruct}~\citep{dubey2024llama3},
\textbf{Mistral-7B}~\citep{jiang2023mistral}, \textbf{Llama-3.2-3B}~\citep{dubey2024llama3}, and
\textbf{Falcon3-7B}~\citep{falcon3}. This deliberately spans both pretraining regimes: three are
base pretrained checkpoints and the headline model is instruction-tuned (Llama-3.1-8B-Instruct, SFT/RLHF);
its cleanest across-the-board win comes
on the post-trained model. A larger \textbf{Mistral-Small-24B-Base}~\citep{jiang2023mistral} is evaluated
under the same full protocol as a scaling study (\cref{fig:scale24b}, \cref{app:scope}). All use
grouped-query attention, so attention units are KV-aligned query-head groups.
The interplay between our cross-module edges and a model's intrinsic redundancy (which families
benefit most, and the predictable graceful fallback elsewhere) is analyzed in \cref{app:scope}.

\textbf{Benchmarks.} Following and extending the standard structured-pruning protocol, we report
14 benchmarks across five capability families: language modeling (WikiText-2~\citep{merity2017wikitext},
PTB~\citep{marcus1993ptb}, C4~\citep{raffel2020c4}, perplexity); commonsense (ARC-easy/challenge~\citep{clark2018arc},
HellaSwag~\citep{zellers2019hellaswag}, WinoGrande~\citep{sakaguchi2021winogrande}, PIQA~\citep{bisk2020piqa},
OpenBookQA~\citep{mihaylov2018openbookqa}, BoolQ~\citep{clark2019boolq}); knowledge (MMLU~\citep{hendrycks2021mmlu});
math (GSM8K~\citep{cobbe2021gsm8k}); and code (HumanEval~\citep{chen2021humaneval},
MBPP~\citep{austin2021mbpp}, pass@1 with real execution). We deliberately include math and code, which most
pruning papers omit and on which all methods are most fragile (\cref{sec:exp-main}); accuracy uses
log-likelihood for multiple choice and greedy generation for math/code, via a single shared harness.

\textbf{Baselines and the comparison regime.} We compare against nine training-free baselines
under identical budgets and harness: \textbf{Random}, \textbf{Magnitude}~\citep{han2015learning},
\textbf{Wanda-sp}~\citep{sun2024wanda}, \textbf{FLAP}~\citep{an2024flap}, \textbf{AMP}~\citep{sultan2025amp},
\textbf{LLM-Pruner}~\citep{ma2023llmpruner}, \textbf{SlimGPT}~\citep{ling2024slimgpt},
\textbf{SlimLLM}~\citep{tang2025slimllm}, and the two-stage \textbf{2SSP}~\citep{sandri2025twossp} (the two
most recent, AMP and 2SSP, contrast our single-shot joint objective; \cref{sec:related}). We study
the \emph{selection-only} (no-recovery) regime, where the question is which units to remove, so for
every method, ours included, we disable post-pruning weight repair or fine-tuning (LLM-Pruner's
LoRA~\citep{hu2022lora}, SlimLLM's regression update, SlimGPT's OBS update). This isolates selection
quality; it is not a claim about these methods' full recovery-enabled pipelines (composing \method
with light recovery is in \cref{app:limitations}). We also report head-only, FFN-only, and diagonal-only
variants of our own pipeline as controlled ablations.

\textbf{Protocol.} Calibration is 128 sequences of C4 text (the Wanda/SparseGPT convention); we use
top-$r{=}256$ logits, 16 FFN groups per layer, and parameter cost. Per pruning ratio we pick the
single anti-collapse configuration that minimizes proxy WikiText perplexity (\cref{app:solver}) and
then run the full benchmark suite once on the resulting single compressed model. Hyperparameters
and per-model details are in \cref{app:protocol}.

\begin{table}[t]
\centering
\caption{\textbf{Main comparison on Llama-3.1-8B-Instruct at $20\%$ structured pruning.} All methods are
training-free under one shared harness. PPL lower is better (\dn), accuracy higher (\up); \best{bold} =
best among pruned, \snd{underline} = second; Dense is the unpruned reference. \textbf{Avg} is the 7-task
commonsense mean; $\rho{=}20\%$ is the removed non-embedding parameter fraction, PPL at context $2048$.
GSM8K is omitted (every method is near floor; \cref{app:evalproto}).}
\label{tab:main-8b}
\setlength{\tabcolsep}{3.2pt}\renewcommand{\arraystretch}{1.05}
\adjustbox{max width=\textwidth}{
\begin{tabular}{l ccc cccccccc c cc}
\toprule
& \multicolumn{3}{c}{\textbf{Perplexity}\,\dn} & \multicolumn{7}{c}{\textbf{Commonsense (acc)}\,\up} & & \textbf{Know.}\,\up & \multicolumn{2}{c}{\textbf{Code}\,\up}\\
\cmidrule(lr){2-4}\cmidrule(lr){5-11}\cmidrule(lr){12-12}\cmidrule(lr){13-13}\cmidrule(lr){14-15}
\textbf{Method} & Wiki & PTB & C4 & ARC-c & ARC-e & HellaS & WinoG & PIQA & OBQA & BoolQ & \textbf{Avg} & MMLU & HEval & MBPP\\
\midrule
\rowcolor{headgray} Dense (0\%) & 6.40 & 10.36 & 10.67 & .544 & .781 & .774 & .662 & .809 & .476 & .850 & .699 & .478 & .610 & .556\\
\midrule
Random      & 21.20 & 30.44 & 36.77 & .294 & .503 & .538 & .545 & .682 & .314 & .654 & .504 & .318 & .024 & .012\\
Magnitude   & 20.49 & 31.38 & 34.93 & .314 & .546 & .555 & .559 & .689 & .324 & .652 & .520 & .328 & .024 & .002\\
Wanda-sp    & 20.02 & 35.29 & 30.09 & .388 & .625 & .574 & .570 & .737 & .364 & .722 & .569 & .356 & .049 & .002\\
FLAP        & 20.88 & 36.77 & 29.06 & .377 & .625 & .590 & .549 & .732 & .382 & .709 & .566 & .353 & .030 & .006\\
AMP         & 20.39 & 47.27 & 33.26 & .388 & .639 & .581 & .567 & .747 & .388 & .694 & .572 & .357 & .012 & .000\\
LLM-Pruner  & \snd{13.23} & \snd{19.36} & \snd{21.98} & .372 & .604 & \snd{.646} & .574 & .750 & .368 & \best{.791} & \snd{.586} & .368 & \snd{.061} & .066\\
SlimGPT     & 19.24 & 35.52 & 29.19 & .374 & .619 & .588 & .566 & .734 & .354 & .643 & .554 & .351 & .000 & .004\\
SlimLLM     & 17.36 & 30.59 & 29.36 & .395 & .652 & .600 & .562 & \snd{.752} & \snd{.398} & .658 & .574 & .372 & .037 & .002\\
2SSP        & 16.29 & 24.07 & 29.00 & \best{.413} & \best{.668} & .619 & \snd{.597} & .736 & .386 & .643 & .580 & \snd{.379} & .043 & \snd{.100}\\
\rowours \method (ours) & \best{12.91} & \best{18.35} & \best{21.52} & \snd{.402} & \snd{.654} & \best{.656} & \best{.615} & \best{.757} & \best{.418} & \snd{.771} & \best{.610} & \best{.389} & \best{.073} & \best{.110}\\
\bottomrule
\end{tabular}}
\end{table}

\subsection{Main results}
\label{sec:exp-main}

\textbf{\method leads the training-free baselines across capability classes.} \cref{tab:main-8b} compares nine training-free
methods on Llama-3.1-8B-Instruct at $20\%$. \method attains the best score among all pruned models on
all three perplexities, world knowledge, both code tasks, and four of the seven commonsense tasks, and
it posts the highest commonsense average. It is the only method never weak in any capability class:
it simultaneously holds the best language-modeling, world-knowledge, and code scores and the best
commonsense average, a breadth no baseline matches. Where a specialized baseline edges ahead it is by
$\le\!.02$ on individual commonsense tasks (BoolQ $.771$ vs.\ LLM-Pruner $.791$; ARC-c/-e, where 2SSP
leads by $\le\!.014$). The 2SSP result is instructive: at $20\%$ its staged budget prunes zero attention
heads (spending the whole ratio on FFN width), so it keeps attention-heavy reasoning but costs $3.4$
perplexity ($16.3$ vs.\ our $12.9$), gives no attention-side speedup, and collapses beyond the moderate
regime once its rule must start pruning attention (\cref{sec:exp-pareto}). The margin is largest where
pruning is most destructive. On the fragile code-generation tasks, most
baselines collapse toward zero, whereas \method posts the best code pass@1 among training-free methods on
both generative tasks (MBPP $.110$ vs.\ next-best $.100$, HumanEval $.073$ vs.\ $.061$), and it leads code
on three of the four models (\cref{tab:cross-model}). This matches our
thesis: multiple-choice accuracy masks the damage pruning does to the precise, multi-step computation
that generation requires, and preserving cross-module curvature protects those paths.

\textbf{The advantage holds across all four models.}
\cref{tab:cross-model} summarizes all four models against a strict ``best-of-baselines-per-metric'' bar.
\method leads WikiText perplexity on all four models and code on three, and is the clear
aggregate winner on the two lowest-FFN-redundancy models: Llama-3.1-8B-Instruct (all four scores) and
Falcon3-7B, which retains the best code pass@1 of any pruned model--method pair ($.188$ vs.\ $.157$;
reported at the transparent $\lambda{=}0.5$ operating point of \cref{sec:exp-ablation}). On
Mistral-7B and Llama-3.2-3B it still leads WikiText perplexity, while the strongest specialized baseline
edges individual metrics by $\le\!.02$. This ordering is consistent with the redundancy
rule: the edge benefit is largest at the redundancy floor (Falcon3) and is damped on
high-redundancy families, a prediction we validate causally (\cref{app:scope}). Full per-task tables are
in \cref{app:full-results}.

\textbf{Scaling to 24B: the advantage widens with compression.}
\method scales to Mistral-Small-24B, our largest model, evaluated under the complete protocol---all seven
baselines at every ratio from $10\%$ to $50\%$, fourteen tasks, and efficiency (\cref{tab:full-mistral-24b},
\cref{app:scope}). It leads WikiText perplexity at \emph{every} ratio and the multiple-choice average from
$20\%$ up. The behavior is the redundancy rule playing out at scale (\cref{fig:scale24b}): a $24$B model is
so over-parameterized at low ratios that every method stays near dense (a near-tie at $10\%$), but as
compression exhausts that redundancy the co-pruning edges turn decisive---at $30\%$ \method is the
\emph{only} method still holding a usable model ($20.3$ perplexity versus the strongest baseline's collapsed
$99.1$), and its lead widens to $14.5\times$ at $40\%$. Physically removing $37.8\%$ of the parameters gives
a $1.46\times$ prefill speedup at $30.1$\,GB peak memory (\cref{tab:efficiency-full}).

\begin{figure}[t]
\centering
\includegraphics[width=\columnwidth]{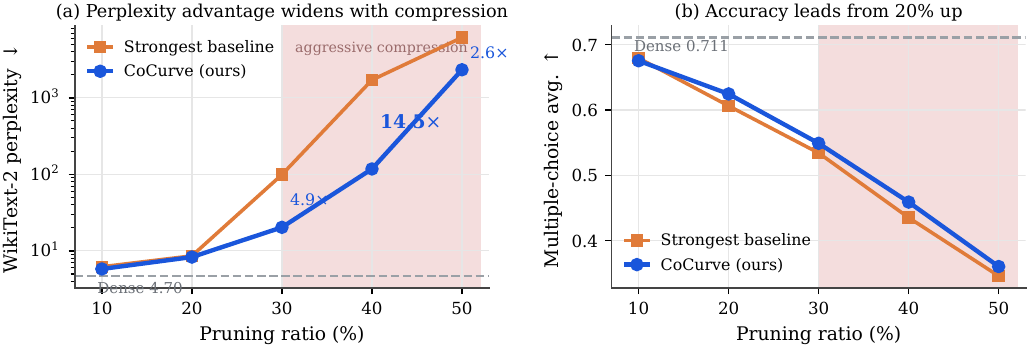}
\caption{\textbf{Scaling to Mistral-Small-24B (real measured data).} \textbf{(a)} WikiText-2 perplexity
(log scale) versus pruning ratio: \method leads at every ratio, and its advantage over the strongest
\emph{valid} baseline widens with compression---$4.9\times$ at $30\%$ and $14.5\times$ at $40\%$, exactly
where redundancy is exhausted and co-pruning curvature becomes decisive. \textbf{(b)} Multiple-choice
average: \method leads from $20\%$ up, with a near-tie at $10\%$ where a $24$B model is highly redundant.
Under-pruned baseline points are excluded (\cref{tab:full-mistral-24b}).}
\label{fig:scale24b}
\end{figure}

\begin{table}[t]
\centering
\caption{\textbf{Cross-model summary at $20\%$.} Four capability scores (commonsense = 7-task average,
code = HumanEval/MBPP mean). ``Best base.'' is the per-metric best training-free baseline (nine methods
for Llama-3.1-8B-Instruct, seven for the others); \best{bold} = better of Best base.\ and \method, which
is occasionally the baseline. \method leads WikiText perplexity on all models and code on three; the reported
config is $\lambda{=}1$ except Falcon3 ($\lambda{=}0.5$; \cref{sec:exp-ablation}). A larger $24$B model is
evaluated separately (\cref{fig:scale24b}, \cref{tab:full-mistral-24b}).}
\label{tab:cross-model}
\footnotesize
\setlength{\tabcolsep}{4pt}\renewcommand{\arraystretch}{1.05}
\begin{tabular}{ll ccccc}
\toprule
\textbf{Model} & \textbf{Method} & Wiki\dn & Cmn.\up & MMLU\up & Code\up\\
\midrule
\multirow{3}{*}{Llama-3.1-8B-Instruct}
 & \cellcolor{headgray}Dense & \cellcolor{headgray}6.40 & \cellcolor{headgray}.699 & \cellcolor{headgray}.478 & \cellcolor{headgray}.583\\
 & Best base. & 13.23 & .586 & .379 & .072\\
 & \cellcolor{oursblue}\method & \cellcolor{oursblue}\best{12.91} & \cellcolor{oursblue}\best{.610} & \cellcolor{oursblue}\best{.389} & \cellcolor{oursblue}\best{.092}\\
\midrule
\multirow{3}{*}{Mistral-7B}
 & \cellcolor{headgray}Dense & \cellcolor{headgray}4.78 & \cellcolor{headgray}.675 & \cellcolor{headgray}.441 & \cellcolor{headgray}.322\\
 & Best base. & 8.11 & \best{.607} & .367 & \best{.095}\\
 & \cellcolor{oursblue}\method & \cellcolor{oursblue}\best{8.10} & \cellcolor{oursblue}.605 & \cellcolor{oursblue}\best{.374} & \cellcolor{oursblue}.079\\
\midrule
\multirow{3}{*}{Llama-3.2-3B}
 & \cellcolor{headgray}Dense & \cellcolor{headgray}6.91 & \cellcolor{headgray}.628 & \cellcolor{headgray}.418 & \cellcolor{headgray}.323\\
 & Best base. & 16.90 & \best{.518} & \best{.335} & .024\\
 & \cellcolor{oursblue}\method & \cellcolor{oursblue}\best{16.71} & \cellcolor{oursblue}.512 & \cellcolor{oursblue}.325 & \cellcolor{oursblue}\best{.046}\\
\midrule
\multirow{3}{*}{Falcon3-7B}
 & \cellcolor{headgray}Dense & \cellcolor{headgray}5.43 & \cellcolor{headgray}.670 & \cellcolor{headgray}.452 & \cellcolor{headgray}.532\\
 & Best base. & 7.43 & .591 & \best{.398} & .157\\
 & \cellcolor{oursblue}\method($\lambda{=}0.5$) & \cellcolor{oursblue}\best{7.03} & \cellcolor{oursblue}\best{.602} & \cellcolor{oursblue}.390 & \cellcolor{oursblue}\best{.188}\\
\bottomrule
\end{tabular}
\end{table}

\subsection{Accuracy--efficiency across pruning ratios}
\label{sec:exp-pareto}
\label{sec:exp-highratio}

\cref{fig:pareto} (full sweep in \cref{tab:ratio-full}) traces \method from $10\%$ to $40\%$. Degradation is graceful through the moderate
regime, its intended operating point, and, as is universal for training-free structured
pruning without recovery~\citep{jaiswal2024kick}, all methods enter a super-linear collapse beyond
$\sim 30\%$ on the most demanding tasks. The efficiency measurements (\cref{tab:efficiency-full},
real physical removal, not masking) show that at iso-budget the wall-clock prefill speedup is
governed by the removed parameter fraction and is essentially shared across methods; the differentiator
is therefore accuracy at a given speedup, and the matched-compression (at-ratio) comparison in
\cref{tab:atratio} shows \method retains more
capability than the baselines at equal realized pruning. The one-time pruning cost is likewise
modest: \method needs only $M$ forward-only single-unit ablations to build $\Hmat$, in contrast to methods that
backpropagate curvature (SOSP, SAViT, OBC) or fine-tune/LoRA-recover after pruning (\cref{app:efficiency}).

\begin{figure}[t]
\centering
\includegraphics[width=0.99\columnwidth]{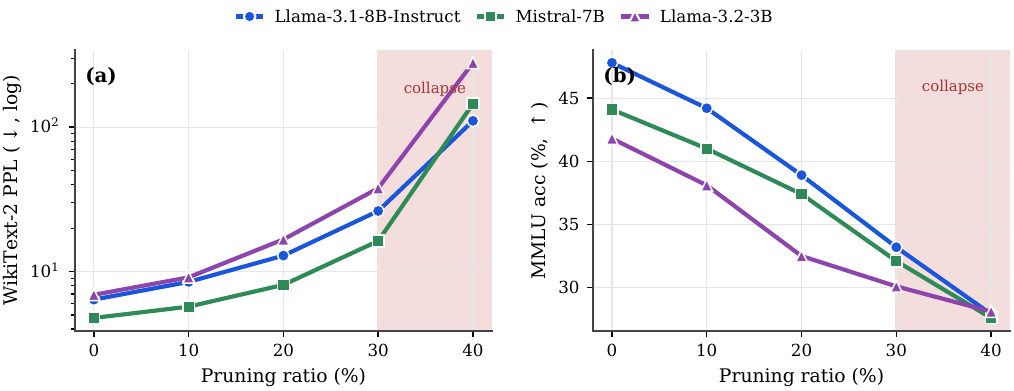}
\caption{\textbf{\method across pruning ratios.} WikiText-2 perplexity (left, log scale) and MMLU
(right) vs.\ ratio for three models. Degradation is graceful through the moderate regime; beyond
$\sim$30\% all training-free methods enter the shaded collapse zone. Curves are a single compressed
model per point; baselines and the accuracy--efficiency frontier are in \cref{app:efficiency}.}
\label{fig:pareto}
\end{figure}

\paragraph{High-ratio standings at matched realized ratio.}
Aggressive pruning invites two confounds we control for. First, several baselines silently
under-prune (SlimLLM realizes only $26.3\%$ when asked for $30\%$; at $50\%$ every baseline plateaus
near $46.5\%$ under the cap while \method reaches $50.1\%$), so we compare only against baselines whose
realized ratio matches within $\pm1.2\%$ and mark ($\dagger$) cells where none does. Second, we
report the optional compensation (\cref{sec:method-comp}) only when it helps. In \cref{tab:atratio},
\method wins outright on Falcon3 at every ratio including the deep $50\%$ collapse and on
Llama-3.1-8B-Instruct through $40\%$; Llama-3.2-3B still leads at $30\%$ and Mistral-7B is within $5\%$
there, with gated compensation closing much of the residual gap. Where a baseline leads (Mistral-7B at
$40\%$, Llama-3.2-3B at $40$--$50\%$, won by SlimGPT/SlimLLM, which reach the target there) we report it. The moderate regime is a clean win, and in the high-ratio regime \method sharply reduces the degradation rather than universally dominating.

\paragraph{Two controls confirm the mechanism, not a tuned knob} (details in \cref{app:ratio},
\cref{app:recovery}). First, the trust-region $\lambda^\star$, selected from calibration risk
alone, decreases with the ratio ($1\!\to\!\tfrac12\!\to\!0$ for Llama-3.1-8B-Instruct/Mistral over
$20\!\to\!40\!\to\!50\%$), as \cref{prop:trust} predicts. Second, equipping SlimGPT with its full
OBS weight reconstruction still loses to \method's selection alone on Llama-3.1-8B-Instruct ($23.4$
vs.\ $37.9$ at $30\%$; $89$ vs.\ $120$ at $40\%$): the advantage is in which units are removed. On
Mistral-7B the heavier reconstruction does overtake \method$^{\!+}$ at $30$--$40\%$, which we report
(\cref{app:recovery}); recovery is thus orthogonal and composable with our selection rather than a confound.

\subsection{Ablations: the edge term is decisive}
\label{sec:exp-ablation}

\textbf{Cross-module edges vs.\ diagonal saliency.} The single most important ablation removes the
off-diagonal terms, reducing \method to an OBD-style diagonal saliency (\cref{eq:decomp}).
\cref{tab:ablation-edge} shows that the cross-module edges are what drive the gain: on Mistral-7B
they cut the perplexity gap to the dense model roughly in half ($8.1$ vs.\ diagonal $13.3$, dense
$4.8$) and lift MBPP pass@1 to roughly $20\times$ the diagonal variant's, with the damped
variant ($\lambda{=}0.5$ scaling of the edge term) roughly in between. This is direct evidence that
which units co-prune well, not just which units are individually weak, drives quality.
The size of the effect tracks the FFN-redundancy rule of \cref{sec:exp-analysis}: on lower-redundancy
Mistral the cross-module term is large across every metric, while on the
headline model its benefit concentrates in the fragile generative capability and in the bridge-unit
protection we establish causally (\cref{tab:bridge-causal}). We accordingly read the headline mechanism
from this downstream and causal evidence, and the perplexity story from the lower-redundancy models
(\cref{app:surrogate}).

\textbf{The cross-module coupling is load-bearing, not decoration.} A sharper test isolates the
\emph{attention$\leftrightarrow$FFN} edges specifically. We compare, on
Llama-3.1-8B-Instruct under one fixed configuration (\cref{tab:ablation-full}), three edge structures: the diagonal-only model, an edge
model that keeps only within-module edges (attn--attn and FFN--FFN, with the cross-module
block zeroed), and the full \method. The key observation is that the within-module variant is
worse than the pure diagonal ($15.0$ vs.\ $13.1$ perplexity, code collapsing to $.000$ in both):
an edge model that sees attn--attn and FFN--FFN interactions but is blind to how attention and FFN
co-prune is internally inconsistent and misranks units, whereas adding the cross-module block
back (full \method) recovers both perplexity and code. The cross-module term is therefore not an
optional refinement layered on top of within-module edges; it is what makes the edge geometry
coherent. This is direct support for our thesis: the attention$\leftrightarrow$FFN interactions that a
node-first or single-module method discards carry real signal. This reading rests on the qualitative
collapse (within-module perplexity rising \emph{above} the diagonal, code falling to zero), not on any
narrow margin; the shipped full-matrix default is chosen on the complete $14$-task suite and for
cross-model robustness, not on a single $3$-task row (\cref{app:ablation-extra}).

\textbf{A perplexity--capability trade-off at the redundancy floor.} Sweeping $\lambda\in\{0,0.5,1\}$
on Falcon3-7B (our lowest-FFN-redundancy model) reveals a consistent trade-off: every downstream
capability metric improves monotonically with edge strength (commonsense
$.583\!\to\!.602\!\to\!.603$, MMLU $.388\!\to\!.390\!\to\!.398$, MBPP $.215\!\to\!.235\!\to\!.240$,
GSM8K $.045\!\to\!.065\!\to\!.070$), while WikiText perplexity moves the other way
($6.97\!\to\!7.03\!\to\!7.68$). The calibration objective (token-level KL, which most directly tracks
perplexity) and downstream generative ability are correlated but not identical: stronger edges
better preserve the multi-step functional computation that math/code exercise, at a small cost in
average next-token likelihood. We take no position on which metric is canonical; we expose the trade-off
through $\lambda$ and report the balanced $\lambda{=}0.5$ on this model, which keeps near-best perplexity
and near-best capability (the per-$\lambda$ rows are in \cref{app:full-results}).

\textbf{Joint co-pruning vs.\ single-module pruning.} Restricting \method to prune only attention
units or only FFN groups (same objective, same budget) is strictly worse than joint co-pruning
(\cref{tab:ablation-edge}, right): head-only pruning is catastrophic at $20\%$ (it must over-prune
attention to meet the budget), and FFN-only is consistently behind joint pruning, confirming that the
\emph{attention$\leftrightarrow$FFN} cross-type edges carry signal unavailable to any single-module method.

\begin{table}[t]
\centering
\caption{\textbf{Ablations at $20\%$ on Mistral-7B.} Left: scaling the off-diagonal edge term
($\lambda{=}1$ full, $0.5$ damped, $0$ diagonal-only). Right: budget restricted to one module type;
Joint (ours) repeats the Full \method row for reference. Dense WikiText $=4.78$.}
\label{tab:ablation-edge}
\footnotesize
\setlength{\tabcolsep}{4pt}\renewcommand{\arraystretch}{1.05}
\begin{tabular}{l ccc @{\hskip 1.2em} l cc}
\toprule
\textbf{Edge term} & Wiki\dn & MMLU\up & MBPP\up & \textbf{Units} & Wiki\dn & MBPP\up\\
\midrule
Diagonal ($\lambda{=}0$) & 13.26 & .366 & .006 & Head-only & 1526 & .000\\
Damped ($\lambda{=}0.5$) & 8.04 & .374 & .086 & FFN-only & 9.10 & .076\\
\rowcolor{oursblue} Full \method ($\lambda{=}1$) & 8.10 & .374 & .122 & \cellcolor{oursblue}Joint (ours) & \cellcolor{oursblue}8.10 & \cellcolor{oursblue}.122\\
\bottomrule
\end{tabular}
\end{table}

\begin{figure}[t]
\centering
\includegraphics[width=0.66\columnwidth]{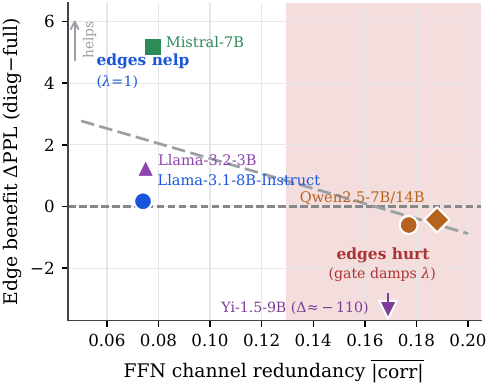}
\caption{\textbf{When cross-module edges help.} The edge benefit (perplexity gain of the full edge model
over the diagonal-only variant) scales inversely with FFN channel redundancy
$\overline{|{\rm corr}|}$, measured directly from $\Hmat$ before any benchmark: a predictive,
calibration-only rule for whether cross-module curvature should be applied at full strength
($\lambda{=}1$) or damped. Low-redundancy models (left) gain; high-redundancy Qwen/Yi (right) do not,
and the gate damps $\lambda{\to}0$ there (\cref{prop:when}, \cref{app:scope}).}
\label{fig:edgecorr}
\end{figure}

\section{Why Cross-Module Edges Matter: A Mechanistic View}
\label{sec:exp-analysis}

\textbf{When: edge benefit vs.\ FFN redundancy.} The edge term helps most when units carry
distinct information. We make this predictive by measuring, directly from $\Hmat$, the mean
absolute off-diagonal correlation among FFN units. Across architectures the benefit of the full edge
term over the diagonal variant scales inversely with this FFN redundancy (\cref{fig:edgecorr}): on the
low-redundancy models in our suite (mean $|{\rm corr}|\!\approx\!0.07$--$0.08$) edges help strongly,
whereas on families with highly correlated FFN channels the edge term should be damped (we expose a
single scalar $\lambda$ for this; the per-$\lambda$ ablation is \cref{tab:ablation-edge}). This turns an empirical observation into an
architecture-level rule and is, to our knowledge, the first such characterization for cross-module
pruning (the cross-family measurement is in \cref{app:mechanism}). Because the redundancy signature is a
property of the architecture, it is unperturbed by post-training: the instruction-tuned
Llama-3.1-8B-Instruct sits at $0.074$, squarely in the Llama-3 base band, so the a-priori gate
transfers to SFT/RLHF checkpoints without change. The rule holds across an order of
magnitude in scale: on \textbf{Qwen2.5-32B}, our largest model, the FFN redundancy is the highest we
measure ($0.242$, up from $0.188$ at $14$B), and a direct $\lambda$-sweep confirms the prediction
causally, with the pruned-model calibration KL minimized at $\lambda{=}0$ and worsening
monotonically by $47.5\%$ toward the full edge model (\cref{app:scope}).

\textbf{The coupling is measured, not posited.} Measured directly from the estimated $\Hmat$ of
Llama-3.1-8B-Instruct (\cref{fig:evidence}), the curvature correlation
$|\varrho_{uv}|{=}|\Hmat_{uv}|/\sqrt{\Hmat_{uu}\Hmat_{vv}}$ is far from diagonal: the cross-module
attn$\leftrightarrow$FFN block carries a mean correlation of $0.086$, comparable to the within-block
$0.10/0.07$ (\cref{fig:blockcorr}); the interactions node-first or single-module methods discard are
quantitatively on par with the within-module ones they already use. The same $\Hmat$ exposes $87$
\emph{bridge} units (of $768$) with low saliency but high connectivity (saliency and connectivity are
decoupled, correlation $0.63$) that diagonal pruning removes but
\method protects, and a shared budget spent unevenly across depth (\cref{fig:evidence-layer}). A matched-pair
causal test confirms these units carry the damage: force-pruning the $87$ bridge units costs $10.5$ MMLU
points and all of code pass@1 over a control set matched on saliency, type, and cost but of low
connectivity, so it is the edges, not the saliency, that node-first scoring mis-reads
(\cref{tab:bridge-causal}). A surrogate-fidelity study (predicted $\tfrac12 s^\top\Hmat s$ vs.\ measured
KL) is in \cref{app:surrogate}.

\begin{table}[t]
\centering
\footnotesize
\caption{\textbf{Matched-pair causal test that bridge connectivity carries the damage.}
(Llama-3.1-8B-Instruct, $20\%$.) The $87$ low-saliency \emph{bridge} units (high connectivity) versus a
saliency/type/cost-matched control of low connectivity; both completed by the identical greedy fill to a
identical pruned cost ($166$ units, ratio $0.2007$, $79$ shared), so connectivity is the only
difference. Higher MMLU/pass@1 and lower PPL are better; \textbf{bold} = better-preserved per column.
Capability is decisive; PPL moving the other way is expected for a node-first proxy (\cref{app:mechanism}).}
\label{tab:bridge-causal}
\begin{tabular}{lccc}
\toprule
Forced set ($87$ low-saliency units) & MMLU\up & MBPP pass@1\up & WikiText PPL\dn \\
\midrule
Control (low connectivity, saliency-matched) & \textbf{.414} & \textbf{.075} & 23.3 \\
Bridge (high connectivity) & .309 & .000 & \textbf{21.2} \\
\midrule
$\Delta$ (bridge $-$ control) & $-.105$ & $-.075$ & $-2.1$ \\
\bottomrule
\end{tabular}
\end{table}

\section{Related Work}
\label{sec:related}

\textbf{Training-free structured pruning of LLMs.} The dominant paradigm scores each removable unit
independently and deletes the lowest-ranked ones, using first/second-order Taylor
importance~\citep{ma2023llmpruner}, activation statistics~\citep{an2024flap,sun2024wanda}, rotate-and-slice
projections~\citep{ashkboos2024slicegpt}, or per-layer OBS/reconstruction~\citep{ling2024slimgpt,
tang2025slimllm,frantar2023sparsegpt,frantar2022obc}. A parallel line prunes whole
layers~\citep{men2024shortgpt,kim2024shortened,yang2024laco} or searches a global layer-wise
budget~\citep{li2025tyr,jansen2026blockcbo}. All share a node-first assumption: set-level pruning
damage is treated as additive, and even when several module types are scored they are merged into one
independent ranking. These are our baselines and concurrent comparisons; \method departs by making the
cross-unit interaction a first-class, derived quantity. \Cref{tab:positioning} positions it against
interaction-aware and modern structured pruners.

\begin{table}[t]
\centering
\caption{\textbf{Where \method sits among structured and interaction-aware pruners.} Cross-module
off-diag.: an explicit attention$\leftrightarrow$FFN / cross-layer curvature term, not just within-group
or node-independent scores. Selection-only: no post-pruning weight update, LoRA, or fine-tuning.
Single-shot: no iterative re-estimation or search. Gradient-free counts closed-form
calibration-Hessian importance. Properties describe each method's \emph{native} form; in our experiments
all recovery (AMP's and LLM-Pruner's LoRA, etc.) is disabled for a selection-only comparison
(\cref{sec:exp-setup}). \cmark/\xmark: property present/absent; \method is the only method with all six.}
\label{tab:positioning}
\footnotesize
\setlength{\tabcolsep}{5pt}\renewcommand{\arraystretch}{1.05}
\adjustbox{max width=\textwidth}{\begin{tabular}{l cccccc}
\toprule
\textbf{Method} & Structured & \makecell{Cross-module\\off-diag.} & \makecell{Training-\\free} & \makecell{Selection-\\only} & \makecell{Gradient-\\free} & \makecell{Single-\\shot}\\
\midrule
AMP~\citep{sultan2025amp}                    & \cmark & \xmark & \xmark & \xmark & \cmark & \cmark\\
2SSP~\citep{sandri2025twossp}                & \cmark & \xmark & \cmark & \cmark & \cmark & \xmark\\
LLM-Surgeon~\citep{vanderouderaa2024llmsurgeon} & \cmark & \cmark & \xmark & \xmark & \xmark & \xmark\\
SlimGPT\,/\,ZipLM~\citep{ling2024slimgpt,kurtic2023ziplm} & \cmark & \xmark & \xmark & \xmark & \cmark & \xmark\\
SAViT~\citep{zheng2022savit}                 & \cmark & \cmark & \xmark & \xmark & \xmark & \xmark\\
Opt.\ Brain Connection~\citep{obc2025}       & \cmark & \xmark & \xmark & \xmark & \xmark & \xmark\\
D$^2$Prune~\citep{xiong2026d2prune}          & \xmark & \xmark & \cmark & \xmark & \cmark & \cmark\\
\rowours \method (ours)                      & \cmark & \cmark & \cmark & \cmark & \cmark & \cmark\\
\bottomrule
\end{tabular}}
\end{table}

\textbf{Beyond the diagonal: interaction-aware pruning.} \method descends from classical second-order
saliency~\citep{lecun1989obd,hassibi1993obs} and Fisher/Taylor importance for modern
networks~\citep{theis2018faster,molchanov2019importance}: diagonal \method is precisely an OBD-style
structured saliency (\cref{eq:decomp}). What we add is the off-diagonal Fisher curvature across
heterogeneous modules (attention$\leftrightarrow$FFN) and all layers, derived from a token-level
distributional objective and recovered from $M$ single-unit ablations without gradients or pairwise
computation. The few methods that model interactions each differ from \method on a defining property of
\cref{tab:positioning}. SOSP~\citep{nonnenmacher2022sosp} captures global cross-structure correlations
but predates LLMs and relies on backpropagated Hessian-vector products; the Combinatorial Brain
Surgeon~\citep{yu2022cbs} adds a pairwise weight term via mixed-integer programming, whereas
\method lifts co-pruning curvature to structured units with a linear-time greedy solve. Among
structured methods, LLM-Surgeon~\citep{vanderouderaa2024llmsurgeon} carries KFAC curvature that is
layer-block-diagonal and framed around multi-shot weight reconstruction;
ZipLM/SlimGPT~\citep{kurtic2023ziplm,ling2024slimgpt} capture only intra-group coupling with
recovery; Optimal Brain Connection~\citep{obc2025} confines its Gauss--Newton interactions within a
group and fine-tunes; and SAViT~\citep{zheng2022savit}, closest in spirit, argues for cross-structure
joint importance but for vision Transformers via collaborative optimization. On the attention/FFN axis,
AMP~\citep{sultan2025amp} co-prunes both module types yet scores each unit independently and
LoRA-recovers, while 2SSP~\citep{sandri2025twossp} prunes FFN width then attention in two disjoint
stages. The concurrent D$^2$Prune~\citep{xiong2026d2prune} shares the ``dual/Taylor'' name but
discards the off-diagonal cross terms and is unstructured. \method alone combines an
attention$\leftrightarrow$FFN, cross-layer off-diagonal block, recovered by a Gram construction from
single-unit ablations, under a training-free selection-only solve.

\textbf{Evaluating compressed LLMs.} Perplexity and multiple-choice accuracy overstate the ability of
compressed models, whose generative reasoning is far more fragile~\citep{jaiswal2024kick}. We therefore
treat code (HumanEval/MBPP) and math (GSM8K) as first-class pressure tests, where training-free pruning
collapses for all methods and \method preserves them best. Recovery pipelines such as
Minitron~\citep{muralidharan2024minitron} and evolutionary search~\citep{tang2025darwinlm} can restore
these abilities at substantial training cost and are complementary to our no-recovery setting.

\section{Conclusion}
\label{sec:conclusion}

We argued that structured LLM pruning has been operating at the wrong granularity: it ranks units as
independent nodes, while Transformer computation couples them through a shared residual stream.
\methodfull replaces node ranking with global co-pruning-risk minimization, derived cleanly from a
token-level self-distillation KL whose second-order expansion yields one Fisher matrix that is
simultaneously classical node saliency (its diagonal) and \emph{cross-module co-pruning curvature
edges} (its off-diagonal). The full edge matrix is recovered from single-unit ablations alone, with no
pairwise sweeps, gradients, labels, or recovery, and solved once by cost-normalized greedy
co-pruning under a shared budget. Across four LLMs and a broad benchmark suite, \method is the strongest training-free method overall. On
the two lowest-FFN-redundancy models (Llama-3.1-8B-Instruct, Falcon3-7B) it is the clear aggregate winner
and best preserves fragile code generation; on the other two it trails the strongest specialized baseline
by $\le\!.02$ on individual capability metrics. It leads WikiText perplexity on all four models at $20\%$,
and scales to a 24B model where the advantage widens with compression to $14.5\times$ at $40\%$. Our
ablations isolate the cross-module edge term as the source of the gain, and a calibration-only statistic
predicts when the edges should be damped: their benefit falls as FFN channel redundancy rises, which we
confirm causally on high-redundancy models (Qwen2.5-32B).

\textbf{Limitations and outlook.} Like all training-free structured pruning, \method degrades
super-linearly beyond $\sim$30\% without recovery, so its sweet spot is moderate compression; the edge
benefit is architecture-dependent (we expose a single damping scalar for high-FFN-redundancy
families); and physical removal for architectures with non-standard FFN/logit operators requires a
per-operator slicing path. Our fully no-recovery-evaluated models span 3B--8B, and the cross-model FFN-redundancy rule already holds
consistently from $3$B to $32$B (\cref{app:scope}); a complete no-recovery comparison at $\ge\!13$B is
compute-bound and left to future work. Promising directions include coupling \method with light recovery,
iterative re-estimation for the high-ratio regime, and extending co-pruning curvature to
mixture-of-experts routing. We hope the broader message, \emph{prune the edges, not just the
nodes}, informs how structured compression is designed.

\bibliography{references}
\bibliographystyle{iclr2025_conference}

\newpage
\appendix
\AppBlock{Appendix Overview}
\label{app:overview}
This appendix provides algorithmic details, a self-contained theoretical analysis, the full
experimental setup, complete results with interpretability evidence, and an outlook. We organize it
into five Parts following the paper's logic, and report only measured numbers throughout.

\paragraph{Part A: Algorithmic Details (\cref{app:notation}--\cref{app:solver}).}
A notation summary and complete specifications of every component.
\begin{itemize}
\item \cref{app:notation}: summary of notation.
\item \cref{app:units}: structured units, runtime-mask semantics, physical pruning, and the
$<\!10^{-3}$ mask/prune equivalence test.
\item \cref{app:estimation}: the single-unit \method estimator and top-$r$ Fisher-Gram construction
(\cref{alg:estimate}).
\item \cref{app:solver}: the one-shot cost-normalized greedy co-pruning solver with anti-collapse
guards (\cref{alg:solve}), and its time and memory complexity (\cref{app:complexity}).
\end{itemize}

\paragraph{Part B: Theoretical Analysis (\cref{app:th-geom}--\cref{app:th-surrogate}).}
A closed-loop theory in which each result states how it is validated empirically.
\begin{itemize}
\item \cref{app:th-geom}: risk geometry---vanishing zeroth/first order and the Fisher quadratic that
make pruning purely second order.
\item \cref{app:th-est}: the Gram identity recovering the full edge matrix from $M$ ablations, and the
third-order additivity bound that justifies the second-order surrogate.
\item \cref{app:th-obd}: diagonal \method equals structured Optimal Brain Damage.
\item \cref{app:th-when}: a theoretical criterion for when cross-module edges help, predicting
the inverse dependence on FFN redundancy measured in \cref{fig:edgecorr}.
\item \cref{app:th-solver}, \cref{app:th-surrogate}: solver marginal-risk / overshoot guarantees and
the surrogate-fidelity bound.
\end{itemize}

\paragraph{Part C: Experimental Setup and Reproducibility (\cref{app:datasets}--\cref{app:evalproto}).}
\begin{itemize}
\item \cref{app:datasets}: datasets, metrics, and splits. \cref{app:protocol}: models and selected
per-model configurations.
\item \cref{app:baselines}: baseline descriptions. \cref{app:evalproto}: evaluation protocol.
\end{itemize}

\paragraph{Part D: Complete Results and Interpretability (\cref{app:full-results}--\cref{app:efficiency}).}
\begin{itemize}
\item \cref{app:full-results}: full per-model $14$-task tables. \cref{app:ratio}: ratio sweep.
\cref{app:ablation-extra}: extended ablations.
\item \cref{app:surrogate}: surrogate fidelity. \cref{app:mechanism}: interpretability evidence
(edge-curvature spectra, bridge units, layer patterns). \cref{app:efficiency}: efficiency.
\end{itemize}

\paragraph{Part E: Discussion and Reproducibility (\cref{app:planned}--\cref{app:repro}).}
\begin{itemize}
\item \cref{app:planned}: scope of the empirical evaluation. \cref{app:related-ext}: extended related work.
\item \cref{app:limitations}: limitations. \cref{app:impact}: broader impact. \cref{app:repro}:
reproducibility checklist.
\end{itemize}

\clearpage
\AppBlock{Part A: Algorithmic Details}
\section{Notation}
\label{app:notation}
\cref{tab:notation} collects the symbols used throughout the paper.
\begin{table}[h]
\centering\footnotesize
\caption{\textbf{Summary of notation.}}
\label{tab:notation}
\setlength{\tabcolsep}{7pt}\renewcommand{\arraystretch}{1.18}
\begin{tabular}{@{}l@{\quad}p{0.70\linewidth}@{}}
\toprule
Symbol & Meaning\\
\midrule
\multicolumn{2}{l}{\emph{Units, masks, and budget}}\\
$\Uset=\{u_1,\dots,u_M\}$ & the $M$ removable structured units (attention heads, FFN channel groups)\\
$s_u\in\{0,1\}$,\, $s$ & pruning indicator ($s_u{=}1$ if unit $u$ is pruned) and the full mask vector\\
$m_u=1-s_u$ & keep mask\\
$c_u=\Param(u)$ & removable parameter cost of unit $u$\\
$\rho$ & target pruning ratio (removed non-embedding parameter fraction)\\
$S$,\, $|S|$ & selected (pruned) unit set and its size; $\mathcal K$: kept FFN groups (compensation)\\
\midrule
\multicolumn{2}{l}{\emph{Curvature matrix (the central object)}}\\
$\Hmat\in\mathbb{R}^{M\times M}$ & co-pruning curvature (Fisher) matrix\\
$\Hmat_{uu}$ & \textbf{node saliency} (diagonal): an OBD-style importance\\
$\Hmat_{uv}$ & \textbf{co-pruning curvature edge} (off-diagonal): the joint-removal cost of $u$ and $v$\\
$\varrho_{uv}$ & curvature correlation $|\Hmat_{uv}|/\sqrt{\Hmat_{uu}\Hmat_{vv}}$\\
$\overline{|{\rm corr}|}$ & mean off-diagonal FFN correlation: the redundancy statistic gating $\lambda$\\
\midrule
\multicolumn{2}{l}{\emph{Objective and estimation}}\\
$\mathcal R(s)$ & pruning risk: token-level KL between teacher and masked model (\cref{eq:risk})\\
$p_0,z_0$\, / \,$p_s,z_s$ & teacher (frozen) and masked-model next-token distributions / logits\\
$F$ & per-token Fisher $\mathrm{diag}(p_0)-p_0p_0^{\!\top}$ in logit space\\
$\delta z_u$,\, $\widetilde{\delta z}_u$ & single-unit logit perturbation from ablating $u$; its whitened feature ($\Hmat$ is their Gram)\\
$\mathcal C$,\, $r$,\, $n_f$ & calibration set, top-$r$ logit support, FFN groups per layer\\
\midrule
\multicolumn{2}{l}{\emph{Solver and compensation}}\\
$\lambda\in[0,1]$ & interaction strength / trust-region scalar on the edge term\\
$\gamma,\, \tau,\, \alpha$ & compensation ridge strength, clip radius, and rescale coefficients (\cref{eq:comp})\\
\bottomrule
\end{tabular}
\end{table}

\section{Structured Units, Mask Semantics, and Physical Pruning}
\label{app:units}

\textbf{Attention units.} For multi-head attention (MHA), one head is one unit; pruning head $i$
removes the corresponding rows of $W_Q,W_K,W_V$ and the input columns of $W_O$. For grouped-query
attention (GQA)~\citep{ainslie2023gqa} with $r=n_q/n_{kv}$ query heads per KV head, the unit is the
KV-aligned query-head group $\mathcal{I}_g=\{gr,\dots,(g{+}1)r{-}1\}$; pruning group $g$ removes the
rows of $W_Q$ for $i\in\mathcal{I}_g$, the rows of $W_K,W_V$ for KV head $g$, and the matching input
columns of $W_O$. The head dimension $d_h$ is kept fixed (we never re-infer $d_h=d/n_q$ after pruning,
which would corrupt models that derive $d_h$ from the head count); a patched attention module stores
the reduced $n_q',n_{kv}'$ explicitly.

\textbf{FFN groups.} The intermediate dimension is partitioned into $n_f$ contiguous groups. Because
the map from the (gated) intermediate activation to the FFN output is linear, the output is exactly the
sum of per-group contributions, so pruning group $j$ removes the rows of $W_{\text{gate}},W_{\text{up}}$
(or $W_1$) and the columns of $W_{\text{down}}$ (or $W_2$) for that group. The runtime mask zeros the
gated product before the down projection; equivalently it zeros those up/gate rows, since the
SwiGLU/GELU gate satisfies $\sigma(0)\cdot(\cdot)=0$.

\textbf{Mask boundary and equivalence.} A runtime mask zeros only a unit's residual-stream
contribution and never alters normalization, RoPE, residual addition, causal/padding masks, GQA repeat
rules, or dropout (the model is in eval mode). We verify on a held-out batch that the physically pruned
model reproduces the runtime-masked logits to within $10^{-3}$ max-absolute error for every model and
pruning set, validating attention slicing, FFN slicing, and GQA metadata updates; consequently all
reported speedups come from genuine removal, not zeroing. We also use this equivalence to validate new
architectures before estimation (e.g.\ models whose FFN/logit operators differ).

\section{The CoCurve Estimator}
\label{app:estimation}

\cref{alg:estimate} details the estimator. The crux (\cref{prop:gram}) is that gathering every unit's
ablated logits on the same teacher top-$r$ support $\mathcal{V}_r(x,t)$ places all single-unit
features in one coordinate system, so $\Hmat$ is a Gram matrix and the off-diagonal edges cost nothing
beyond the $M$ diagonal ablations. We never materialize the $V\times V$ Fisher.

\begin{algorithm}[t]
\caption{\methodfull\ matrix estimation (single-unit, top-$r$ Fisher Gram)}
\label{alg:estimate}
\begin{algorithmic}[1]
\REQUIRE frozen $f_0$, calibration $\mathcal{C}$, units $\Uset$, top-$r$, token budget $P_{\rm tok}$
\STATE \textbf{Teacher pass:} for each $x\in\mathcal{C}$, select token positions $\mathcal{T}(x)$; store
  top-$r$ indices $\mathcal{V}_r(x,t)$ and renormalized probs $p_0$
\FOR[{$M$ passes; embarrassingly parallel}]{each unit $u\in\Uset$}
  \STATE run $f_{-u}$ (mask only $u$); gather $z_{-u}$ on the teacher support $\mathcal{V}_r(x,t)$
  \STATE $\delta z_u\leftarrow z_0-z_{-u}$;\quad
         $\widetilde{\delta z}_u\leftarrow \sqrt{p_0}\odot(\delta z_u-\mathbb{E}_{p_0}[\delta z_u])$
  \STATE append $\widetilde{\delta z}_u$ to feature store $\bar D_u$ (CPU/memmap)
\ENDFOR
\STATE $\Hmat\leftarrow \tfrac1P\,\bar D^\top\bar D$ (blockwise Gram); $\Hmat\leftarrow\tfrac12(\Hmat+\Hmat^\top)$
\STATE assert $\min_u\Hmat_{uu}\ge -\epsilon$ \COMMENT{non-negativity check (\cref{prop:gram})}
\RETURN $\Hmat$
\end{algorithmic}
\end{algorithm}

\textbf{Top-$r$ coordinate consistency.} Using a per-unit top-$r$ would place different units in
different coordinate systems and invalidate the Gram; the shared teacher support is essential.
Defaults: $r{=}256$, $128$ C4 sequences of length $2048$, deterministic token subsampling
($P_{\rm tok}{=}64$). Increasing $r$ beyond $256$ leaves $\Hmat$ nearly unchanged (\cref{app:ablation-extra}).

\textbf{Optional prefix-cache replay.} An exact speedup caches residual states before the masked
computation and replays everything after: attention ablations replay from the pre-attention residual,
FFN ablations from the pre-FFN residual. Multiple masks can share an input batch via the batch
dimension. This changes wall-clock, not the estimate.

\section{The Greedy Co-Pruning Solver}
\label{app:solver}

\cref{alg:edge-taylor} summarizes the full \method pipeline end to end (calibrate, ablate, build the
Gram edge matrix, greedily co-prune, physically remove).
\begin{algorithm}[h]
\caption{\methodfull: end-to-end co-pruning pipeline.}
\label{alg:edge-taylor}
\begin{algorithmic}[1]
\REQUIRE frozen model $f_0$, calibration set $\mathcal{C}$, units $\Uset$ with costs $\{c_u\}$, ratio $\rho$
\STATE run $f_0$ on $\mathcal{C}$; cache token positions, teacher top-$r$ indices and probabilities
\FOR{each unit $u\in\Uset$}
  \STATE mask only $u$; gather ablated logits on the teacher's top-$r$ indices; form $\widetilde{\delta z}_u$ \hfill // $M$ ablations
\ENDFOR
\STATE $\Hmat\leftarrow \tfrac1P\,\bar D^\top \bar D$; symmetrize $\Hmat\leftarrow\tfrac12(\Hmat+\Hmat^\top)$ \hfill // Gram, \cref{eq:gram}
\STATE $S\leftarrow\emptyset,\;g\leftarrow\mathbf{0},\;C_S\leftarrow0$
\WHILE{$C_S<\rho\sum_u c_u$}
  \STATE $u^\star\leftarrow\arg\min_{u\notin S}\big(\tfrac12\Hmat_{uu}+g_u\big)/c_u$ \hfill // \cref{eq:marginal}
  \STATE $S\leftarrow S\cup\{u^\star\}$;\quad $g\leftarrow g+\Hmat_{:,u^\star}$;\quad $C_S\leftarrow C_S+c_{u^\star}$
\ENDWHILE
\STATE physically prune $S$; verify mask/physical logit equivalence
\RETURN compressed model $f_S$ and $\rho_{\mathrm{actual}}=C_S/\sum_u c_u$
\end{algorithmic}
\end{algorithm}
\cref{alg:solve} gives the solver. By \cref{prop:greedy}, maintaining the running interaction vector
$g_u=\sum_{v\in S}\Hmat_{uv}$ makes each step $O(M)$. Two guards keep pruning distributed without
changing the objective: a per-layer cost cap and first/last-layer protection. Per ratio we select the
(cap, protection) minimizing proxy WikiText perplexity on a calibration holdout (\cref{tab:hparam}).

\begin{algorithm}[t]
\caption{One-shot cost-normalized greedy co-pruning}
\label{alg:solve}
\begin{algorithmic}[1]
\REQUIRE $\Hmat$, costs $\{c_u\}$, ratio $\rho$, per-layer cap $\kappa$, protected layers $\mathcal{L}_p$
\STATE $S\leftarrow\emptyset$;\; $g\leftarrow\mathbf{0}$;\; $C_S\leftarrow 0$
\WHILE{$C_S<\rho\sum_u c_u$}
  \STATE $\mathcal{A}\leftarrow\{u\notin S:\ \mathrm{layer}(u)\notin\mathcal{L}_p,\ \text{layer cost cap }\kappa\text{ not exceeded}\}$
  \STATE $u^\star\leftarrow\arg\min_{u\in\mathcal{A}}\big(\tfrac12\Hmat_{uu}+g_u\big)/c_u$ \COMMENT{signed; not clipped}
  \STATE $S\leftarrow S\cup\{u^\star\}$;\; $g\leftarrow g+\Hmat_{:,u^\star}$;\; $C_S\leftarrow C_S+c_{u^\star}$
\ENDWHILE
\STATE physically prune $S$; verify mask/physical logit equivalence
\RETURN $f_S$, $\rho_{\rm actual}=C_S/\sum_u c_u$
\end{algorithmic}
\end{algorithm}

\subsection{Complexity and cost}
\label{app:complexity}
Estimation is $M$ forward-only ablation passes over $|\mathcal{C}|$ sequences (no backward pass,
Hessian-vector product, or pairwise ablation); the Gram is $O(M^2 P r)$ multiply-adds, accumulated
blockwise. The solver is $O(M|S|)$ for $|S|$ selected units. With FFN grouping ($M\!\sim\!10^3$) the whole
pipeline is one-shot per (model, ratio) and fits commodity hardware; wall-clock is reported in
\cref{app:efficiency}.

\clearpage
\AppBlock{Part B: Theoretical Analysis}
We collect the formal results behind \cref{sec:method} and, for each, state how it is validated
empirically. Notation: $p_0$ is the teacher distribution, $z_0$ its logits,
$F=\mathrm{diag}(p_0)-p_0p_0^\top$ the KL Fisher, and $\delta z_u=z_0-z_{-u}$ the single-unit logit
perturbation. We use local additivity $z_0-z_s\approx\sum_u s_u\delta z_u$, whose error we bound in
\cref{prop:additivity}.

\section{Geometry of the Pruning Risk}
\label{app:th-geom}
\paragraph{Why pruning is purely second order.}
\begin{proposition}[Vanishing zeroth/first order]\label{prop:vanish}
$\mathcal{R}(\mathbf 0)=0$ and $\nabla_s\mathcal{R}(\mathbf 0)=0$; the Taylor expansion at the full model
begins at second order.
\end{proposition}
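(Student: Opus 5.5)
The plan is to treat $\mathcal{R}$ as a smooth function of a \emph{relaxed} mask $s\in[0,1]^M$. Each unit's residual-stream write is multiplied by $m_u=1-s_u$, so for fixed $(x,t)$ the logits $z_s(x,t)$ are a composition of smooth maps (linear projections, SwiGLU/GELU, RMSNorm away from zero, softmax attention) in $s$. Consequently $p_s=\mathrm{softmax}(z_s)$ is $C^2$ in $s$, and so is $\mathcal{R}$, which is a finite average over $x\in\mathcal{C}$ and $t\in\mathcal{T}(x)$. The Taylor statement is then read for this relaxation, with binary masks being particular evaluation points. I would first state this smoothness explicitly, using the mask semantics of \cref{app:units}: $s=\mathbf 0$ leaves every unit's contribution untouched, so $z_{\mathbf 0}=z_0$ exactly.

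The zeroth-order claim is then immediate. Since $p_{\mathbf 0}=p_0$ at every position, each $\KL(p_0\|p_{\mathbf 0})=0$, and hence $\mathcal{R}(\mathbf 0)=0$.

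For the first-order claim I would give two arguments. The first is variational. Each summand is nonnegative, so $\mathcal{R}\ge 0$ on the relaxed domain, and it attains its minimum $0$ at $\mathbf 0$. The point $\mathbf 0$ is a boundary point of $[0,1]^M$, but the map $s\mapsto z_s$ extends smoothly to a neighbourhood of $\mathbf 0$ in $\mathbb{R}^M$, since negative $s_u$ simply amplifies a unit's write. On that neighbourhood KL is still nonnegative, so $\mathbf 0$ is an interior minimizer of a differentiable function, and therefore $\nabla_s\mathcal{R}(\mathbf 0)=0$. The second argument is a direct calculation that makes the mechanism explicit. Write $\KL(p_0\|p_s)=\sum_y p_{0,y}(\log p_{0,y}-z_{s,y})+\log\sum_y e^{z_{s,y}}$. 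Its gradient with respect to $z_s$ is $p_s-p_0$, which vanishes at $s=\mathbf 0$. By the chain rule, $\partial_{s_u}\KL=(p_s-p_0)^\top\partial_{s_u}z_s$, so every component of the gradient is zero at $\mathbf 0$ regardless of the Jacobian. Averaging over the finitely many positions preserves this.

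Finally, Taylor's theorem with the $C^2$ regularity gives $\mathcal{R}(s)=\tfrac12 s^\top\Hmat s+o(\|s\|^2)$, and the expansion starts at second order. As a check, differentiating the chain-rule expression once more at $\mathbf 0$ kills the term involving $\partial^2 z$, again because of the factor $p_s-p_0$. This leaves $J^\top F J$ with $F=\mathrm{diag}(p_0)-p_0p_0^\top$, which is positive semidefinite and consistent with \cref{eq:huv}. The main obstacle is the boundary and smoothness issue, namely justifying differentiability and an interior minimum at $\mathbf 0$ even though masks are constrained to $[0,1]$ or to binary values. I would handle it with the smooth extension to negative $s$. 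If that extension were unavailable, the direct chain-rule computation still gives a vanishing one-sided gradient, because the factor $p_s-p_0$ is zero at $\mathbf 0$.
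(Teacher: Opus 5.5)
Your proposal is correct and follows the paper's proof: the zeroth-order claim comes from $p_{\mathbf 0}=p_0$. Your chain-rule identity $\partial_{s_u}\KL(p_0\|p_s)=(p_s-p_0)^\top\partial_{s_u}z_s$ is exactly the paper's cancellation $-\mathbb{E}_{p_0}[\partial_{s_u}z_s]+\mathbb{E}_{p_s}[\partial_{s_u}z_s]=0$ at $s=\mathbf 0$, followed by averaging over positions. The interior-minimum argument, the relaxation to continuous masks with its $C^2$ justification, and the $J^\top F J$ Hessian check are sound additions; they make explicit the differentiability in $s$ that the paper assumes without comment.
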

\begin{proof}
At $s=\mathbf 0$, $p_{\mathbf 0}=p_0$, so every term is $\KL(p_0\|p_0)=0$. For the gradient, with
$g(s)=\KL(p_0\|p_s)$ and $p_s$ a softmax,
$\partial_{s_u}g=-\mathbb{E}_{p_0}[\partial_{s_u}z_s]+\mathbb{E}_{p_s}[\partial_{s_u}z_s]$; at
$s=\mathbf 0$, $p_s=p_0$ and the two expectations cancel. Averaging over positions preserves equality.
\end{proof}
\noindent\emph{Significance.} The absence of a first-order term is what makes the off-diagonal
curvature, not a gradient, the carrier of interaction information; it also means saliency itself
is second order, unifying our diagonal with classical second-order pruning.

\begin{proposition}[Fisher quadratic]\label{prop:fisher}
To second order, $\KL(p_0\|p_s)\approx\tfrac12(z_s-z_0)^\top F(z_s-z_0)$; hence under local additivity
$\Hmat_{uv}=\mathbb{E}_{x,t}[\delta z_u^\top F\,\delta z_v]$ (\cref{eq:huv}).
\end{proposition}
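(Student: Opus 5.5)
I will prove the statement in two stages: a pointwise second-order expansion of the KL in logit space, followed by substitution of the local-additivity ansatz and averaging.

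\emph{Stage 1: the Fisher quadratic.} Fix a calibration position $(x,t)$ and write $\Delta=z_s-z_0$. Consider the function
\[
\phi(\Delta)=\KL\bigl(p_0\,\|\,\mathrm{softmax}(z_0+\Delta)\bigr)=-\textstyle\sum_y p_{0,y}\Delta_y+\log\textstyle\sum_y p_{0,y}e^{\Delta_y}.
\]
The log-partition term has been rewritten relative to $z_0$, so it is exactly the cumulant generating function of $\Delta$ under $p_0$.

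Expanding the log term to second order gives $\mathbb{E}_{p_0}[\Delta]+\tfrac12\mathrm{Var}_{p_0}(\Delta)+O(\|\Delta\|^3)$. The linear terms cancel, which is the same cancellation used in \cref{prop:vanish}. We are left with
\[
\phi(\Delta)=\tfrac12\mathrm{Var}_{p_0}(\Delta)+O(\|\Delta\|^3).
\]
Finally, $\mathrm{Var}_{p_0}(\Delta)=\Delta^\top(\mathrm{diag}(p_0)-p_0p_0^\top)\Delta=\Delta^\top F\Delta$, and this form is invariant under $\Delta\mapsto-\Delta$. That establishes the first claim with $z_s-z_0$ or $z_0-z_s$ interchangeably.

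For the remainder, I note that the third cumulant is bounded by $\max_y|\Delta_y|^3$. The error is therefore uniformly $O(\|\Delta\|_\infty^3)$. This matters only for stating the approximation honestly.

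\emph{Stage 2: from logits to the mask Hessian.} Substitute the local additivity $z_0-z_s\approx\sum_u s_u\delta z_u$ into the quadratic. This gives
\[
\KL\approx\tfrac12\textstyle\sum_{u,v}s_us_v\,\delta z_u^\top F\,\delta z_v.
\]
Averaging over $t\in\mathcal T(x)$ and $x\in\mathcal C$ as in \cref{eq:risk} yields $\mathcal R(s)\approx\tfrac12 s^\top \Hmat s$ with $\Hmat_{uv}=\mathbb{E}_{x,t}[\delta z_u^\top F\delta z_v]$.

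Since $F$ is symmetric, this matrix is symmetric. Hence $\tfrac12\nabla_s^2$ of the quadratic is precisely $\Hmat$, matching the definition in \cref{eq:quadratic}.

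To connect this with the true Hessian at $s=\mathbf 0$, I will differentiate directly in the masked model. Let $J_u=\partial_{s_u}z_s|_{s=0}$. The exact Hessian is then
\[
\mathbb{E}[J_u^\top F J_v]+\mathbb{E}\bigl[(\nabla_z\KL)^\top\partial^2_{s_us_v}z_s\bigr].
\]
The second term vanishes because $\nabla_z\KL=p_s-p_0=0$ at $s=\mathbf 0$; this is the Gauss--Newton identity. Replacing the derivative $J_u$ by the finite ablation difference $-\delta z_u$ is exactly the local-additivity surrogate.

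\emph{Main obstacle.} The delicate point is that the masks are binary, so $\delta z_u$ is a finite difference rather than a derivative. The equality $\Hmat_{uv}=\mathbb{E}[\delta z_u^\top F\delta z_v]$ is therefore exact only for the surrogate, and holds for the true Hessian up to the higher-order error in $\delta z_u-(-J_u)$.

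I plan to state this explicitly rather than hide it. The Hessian identity becomes an equality of the Gauss--Newton form under the additivity hypothesis, with the deviation deferred to the third-order additivity bound of \cref{prop:additivity}.

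A second, smaller check is the whitening step. For each $(x,t)$ I need $\delta z_u^\top F\delta z_v=\langle\widetilde{\delta z}_u,\widetilde{\delta z}_v\rangle$, which is a one-line covariance identity under $p_0$. This identity confirms that the expectation is consistent with the Gram form in \cref{eq:gram}.
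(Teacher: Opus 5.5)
Your proof is correct and is essentially the paper's argument: your cumulant-generating-function expansion of $\log\sum_y p_{0,y}e^{\Delta_y}$ is the paper's second-order expansion of the log-partition $A(z)=\log\sum_y e^{z_y}$ about $z_0$ (with $\nabla A=p_0$, $\nabla^2A=F$); the linear terms cancel, and substituting local additivity and averaging gives $\Hmat_{uv}=\mathbb{E}_{x,t}[\delta z_u^\top F\,\delta z_v]$, while your Gauss--Newton and finite-difference caveats match the paper's separate remark on exact versus surrogate curvature. Two small fixes: the Hessian of $\tfrac12 s^\top\Hmat s$ is $\Hmat$ itself, so $\nabla_s^2$ (not $\tfrac12\nabla_s^2$) of that quadratic matches \cref{eq:quadratic}; and your uniform $O(\|\Delta\|_\infty^3)$ remainder should use the third cumulant under the exponentially tilted distributions in the Lagrange remainder rather than under $p_0$, though those obey the same bound.
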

\begin{proof}
For $A(z)=\log\sum_y e^{z_y}$ we have $\nabla A=p_0,\nabla^2A=F$, so with $\Delta=z_s-z_0$,
$A(z_0{+}\Delta)=A(z_0)+p_0^\top\Delta+\tfrac12\Delta^\top F\Delta+o(\|\Delta\|^2)$. Then
$\KL(p_0\|p_s)=-p_0^\top\Delta+A(z_s)-A(z_0)=\tfrac12\Delta^\top F\Delta+o(\|\Delta\|^2)$. Substituting
$\Delta\approx\sum_u s_u\delta z_u$ and taking expectations gives the matrix.
\end{proof}

\section{Tractable Estimation}
\label{app:th-est}
\paragraph{The full edge matrix from $M$ ablations.}
\begin{proposition}[Gram equivalence]\label{prop:gram}
With $\widetilde{\delta z}_u=\sqrt{p_0}\odot(\delta z_u-\mathbb{E}_{p_0}[\delta z_u])$,
$\delta z_u^\top F\delta z_v=\widetilde{\delta z}_u^\top\widetilde{\delta z}_v$. Hence
$\Hmat=\tfrac1P\bar D^\top\bar D\succeq0$ with $\Hmat_{uu}=\tfrac1P\|\bar D_u\|^2\ge0$, recoverable from
$M$ single-unit ablations with no pairwise computation.
\end{proposition}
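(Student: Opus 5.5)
The argument is a direct algebraic identity at a single token position, then lifted to the matrix by stacking positions. Fix $(x,t)$, write $p=p_0(\cdot\mid x_{\le t})$, $a=\delta z_u(x,t)$, $b=\delta z_v(x,t)$, and let $\mu_a=\mathbb{E}_{p}[a]=p^\top a$ and $\mu_b=p^\top b$. Expanding the Fisher form gives
\begin{equation*}
a^\top F b \;=\; a^\top\mathrm{diag}(p)\,b-(p^\top a)(p^\top b)\;=\;\textstyle\sum_y p_y a_y b_y-\mu_a\mu_b .
\end{equation*}
On the other side,
\begin{equation*}
\widetilde{a}^\top\widetilde{b}=\textstyle\sum_y p_y(a_y-\mu_a)(b_y-\mu_b)=\sum_y p_y a_yb_y-\mu_a\mu_b-\mu_a\mu_b+\mu_a\mu_b\sum_y p_y .
\end{equation*}
Since $\sum_y p_y=1$, the last three terms collapse to $-\mu_a\mu_b$. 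So both sides equal the $p$-weighted covariance of $a$ and $b$, which proves the pointwise identity. The only care needed is that the square root $\sqrt{p}$ is applied to both features, so that the product gives back $p_y$.

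To lift this to $\Hmat$, I substitute the identity into \cref{eq:huv}, taking as given the Fisher-quadratic representation from \cref{prop:fisher} under local additivity. The expectation over positions is the uniform average over the $P$ calibration positions; with per-sequence weights $1/|\mathcal T(x)|$ one rescales the features by the square roots of those weights. This gives
\begin{equation*}
\Hmat_{uv}=\tfrac1P\textstyle\sum_{x,t}\widetilde{\delta z}_u(x,t)^\top\widetilde{\delta z}_v(x,t)=\tfrac1P\bar D_u^\top\bar D_v ,
\end{equation*}
where $\bar D_u$ is the concatenation over positions. Stacking the columns $\bar D_u$ into $\bar D$ yields $\Hmat=\tfrac1P\bar D^\top\bar D$.

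Positive semidefiniteness follows because $w^\top\Hmat w=\tfrac1P\|\bar D w\|^2\ge0$ for every $w$. In particular, $\Hmat_{uu}=\tfrac1P\|\bar D_u\|^2\ge0$.

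For the computational claim, note that each column $\bar D_u$ depends only on $\delta z_u$, that is, on the single run $f_{-u}$ plus the shared teacher pass. Hence $M$ ablations determine every entry, including the off-diagonal ones, with no pairwise passes.

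The main obstacle is the top-$r$ implementation rather than the algebra. The identity requires that all features live in the same coordinates at a given $(x,t)$ and that the weights used form a probability vector. I would therefore state the result for the renormalized teacher distribution on the shared support $\mathcal V_r(x,t)$, where the Fisher is restricted to that support. The same computation then goes through verbatim, since it used nothing but $\sum_y p_y=1$ and a common index set. This explains why a per-unit support would break the Gram structure.
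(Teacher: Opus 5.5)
Your proposal is correct and follows essentially the same route as the paper. Both expand $a^\top F b$ into the $p_0$-weighted covariance $\sum_y p_{0,y}(a_y-\bar a)(b_y-\bar b)$, then stack positions into a Gram matrix, which is positive semidefinite with nonnegative diagonal. Your added remarks are sound refinements that the paper's short proof leaves implicit: rescaling features by the square roots of non-uniform per-sequence weights, and stating the identity for the renormalized teacher distribution on the shared top-$r$ support.
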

\begin{proof}
$a^\top Fb=\sum_y p_{0,y}a_yb_y-(\sum_y p_{0,y}a_y)(\sum_y p_{0,y}b_y)=\sum_y p_{0,y}(a_y-\bar a)(b_y-\bar b)
=(\sqrt{p_0}\odot(a-\bar a))^\top(\sqrt{p_0}\odot(b-\bar b))$. Set $a=\delta z_u,b=\delta z_v$; stack over
positions to get the Gram, which is PSD with non-negative diagonal.
\end{proof}
\noindent\emph{Significance.} A model with $O(M^2)$ edges costs only $O(M)$ ablations
because each edge is an inner product of two single-unit features---this is what makes edge-aware
structured pruning practical, and the non-negative diagonal is our runtime correctness check.

\paragraph{Exact curvature vs.\ the estimator we use.} It is worth separating what is exact from what
is approximated, since our claims rest only on the former being a good surrogate, not on equality.
\begin{remark}[What is exact, what is a surrogate]\label{rem:exact}
Two statements are exact (to the stated order): (a) the KL between teacher and masked model is
the Fisher-weighted logit norm to second order, $\KL(p_0\|p_s)=\tfrac12\Delta^\top F\Delta+o(\|\Delta\|^2)$
(\cref{prop:fisher}); and (b) the Gram identity
$\delta z_u^\top F\,\delta z_v=\widetilde{\delta z}_u^\top\widetilde{\delta z}_v$ (\cref{prop:gram}).
What we estimate rather than compute exactly is the curvature matrix itself. The exact
Gauss--Newton Hessian is $\Hmat^\star=\mathbb{E}[J^\top F J]$ with $J=\partial z_s/\partial s|_{s=0}$;
we replace the infinitesimal column $J_u$ by the finite single-unit ablation
$\delta z_u=z_0-z_{-u}$, and evaluate $F$ on the teacher's top-$r$ support. Hence our $\Hmat$ is a
finite-difference, top-$r$ Fisher-Gram surrogate for $\Hmat^\star$, exact in the
linear-network / infinitesimal-mask / full-vocabulary limit and an empirical proxy otherwise. We never
rely on $\Hmat=\Hmat^\star$; we rely on $\Hmat$ ranking co-pruning damage, which we validate
directly (\cref{app:surrogate}, \cref{sec:exp-analysis}).
\end{remark}

\paragraph{Why the second-order surrogate suffices at moderate $\rho$.}
\begin{proposition}[Local approximation error]\label{prop:additivity}
Assume the network's logit map is twice differentiable in the mask variables on a neighborhood of
$s=\mathbf 0$ with bounded second derivatives. Then the additive second-order surrogate approximates the
risk with error of third order in the pruning vector,
$\big|\mathcal{R}(s)-\tfrac12 s^\top\Hmat s\big|=O(\|s\|_1^3\,\max_u\|\delta z_u\|^3)$, with the
constant controlled by the assumed second-derivative bound.
\end{proposition}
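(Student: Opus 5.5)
I will split the error $\mathcal{R}(s)-\tfrac12 s^\top\Hmat s$ into two pieces and bound each at third order. Fix a calibration position $(x,t)$, let $\Delta_s=z_s-z_0$ be the exact logit perturbation of the mask $s$, and let $\Delta^{\rm add}_s=-\sum_u s_u\delta z_u$ be its additive surrogate. Then
\[
\KL(p_0\|p_s)-\tfrac12(\Delta^{\rm add}_s)^\top F\Delta^{\rm add}_s
=\underbrace{\big[\KL(p_0\|p_s)-\tfrac12\Delta_s^\top F\Delta_s\big]}_{\text{(I)}}
+\underbrace{\tfrac12\big[\Delta_s^\top F\Delta_s-(\Delta^{\rm add}_s)^\top F\Delta^{\rm add}_s\big]}_{\text{(II)}}.
\]
Averaging over positions turns the surrogate term into $\tfrac12 s^\top\Hmat s$ by \cref{prop:fisher} and \cref{prop:gram}. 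So it suffices to bound (I) and (II) uniformly in $(x,t)$.

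For (I), I will sharpen the proof of \cref{prop:fisher} to a Lagrange remainder. Since $\KL(p_0\|p_s)=A(z_0+\Delta)-A(z_0)-p_0^\top\Delta$ with $A$ the log-partition function, the Taylor remainder is $\tfrac16\nabla^3A(\xi)[\Delta,\Delta,\Delta]$. The third cumulant tensor of a softmax is bounded by an absolute constant times $\|\Delta\|_\infty^3$, because its entries are central moments of a bounded-range variable. This gives $|\text{(I)}|\le C\|\Delta_s\|^3$.

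To bound $\|\Delta_s\|$, I view the logit map $s\mapsto z_s$ as twice differentiable on $[0,1]^M$ near $\mathbf 0$, as assumed. Writing $\Delta_s$ as a telescoping sum over units, switching them on one at a time, gives $\|\Delta_s\|\le\sum_u s_u(\|\delta z_u\|+O(\cdot))\lesssim\|s\|_1\max_u\|\delta z_u\|$. Hence (I) is $O(\|s\|_1^3\max_u\|\delta z_u\|^3)$.

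For (II), the key quantity is the additivity defect $e_s=\Delta_s-\Delta^{\rm add}_s$. For a $C^2$ map, the first-order terms of $\Delta_s$ and $\Delta^{\rm add}_s$ agree. The single-unit ablations $\delta z_u$ already carry the pure second-order terms $\tfrac12\partial_{uu}z$. So $e_s$ consists only of mixed second derivatives and remainders, and $\|e_s\|\le B\,\|s\|_1^2$ with $B$ the assumed second-derivative bound. Then $\text{(II)}=\tfrac12(2\Delta^{\rm add}_s+e_s)^\top F e_s$, and since $\|F\|\le1$, $|\text{(II)}|\lesssim\|\Delta^{\rm add}_s\|\,\|e_s\|+\|e_s\|^2$, which is cubic in $\|s\|_1$.

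\textbf{Main obstacle.} To reach the stated form $O(\|s\|_1^3\max_u\|\delta z_u\|^3)$, I must express $B$ in units of $\max_u\|\delta z_u\|$. I would handle this by rescaling the mask coordinates so that $\delta z_u$ sets the unit scale, which is the sense in which the constant is ``controlled by the second-derivative bound.'' If a clean scaling fails, I would state the bound with an explicit $B$-dependent constant instead. The other point needing care is uniformity over the compact mask cube and over calibration positions; a finite calibration set makes this immediate by taking the maximum constant.
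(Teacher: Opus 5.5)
Your split into (I), the cubic log-partition remainder, and (II), the additivity defect, is the same two-source decomposition as the paper's sketch. Your version is more careful: the paper inserts the additive $\sum_u s_u\delta z_u$ directly into the cubic remainder, whereas you correctly apply it to the exact $\Delta_s$. Your Lagrange/third-cumulant bound on (I) and the identity $\mathrm{(II)}=\tfrac12(2\Delta^{\rm add}_s+e_s)^\top Fe_s$ are fine. The gap is the one you call the main obstacle, and it is substantive, not presentational. Your fallback of an explicit $B$-dependent constant cannot work, because under a raw bound $B$ on second derivatives no bound of the stated form holds. Take $z(s)=z(\mathbf 0)+s_us_v\,w$ with $w$ not a multiple of the all-ones vector. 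It is smooth with $\|\partial_{uv}z\|=\|w\|$, yet $\delta z_u=\delta z_v=0$, so $\Hmat=0$ and $\max_u\|\delta z_u\|=0$ while $\mathcal R(e_u+e_v)>0$.

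Correspondingly, two of your intermediate steps are false as written. Write $m=\max_u\|\delta z_u\|$. The telescoping gives only $\|\Delta_s\|\le\|s\|_1m+B\|s\|_1^2$, not $\lesssim\|s\|_1m$. Also $\|e_s\|^2\le B^2\|s\|_1^4$ is quartic, so (I) and (II) pick up $B^3\|s\|_1^6$ and $B^2\|s\|_1^4$ terms. On binary masks, where $\|s\|_1=|S|\ge1$, these cannot be absorbed into $C(B)\,\|s\|_1^3m^3$. A common rescaling $s\mapsto cs$ does not help either, since $B\|s\|_1^2$ is invariant under it while $m$ is fixed.

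The fix is to make your per-unit rescaling an explicit hypothesis. Bound the mixed partials relative to the single-unit effects, $\|\partial_{uv}z\|\le\tilde B\,\|\delta z_u\|\,\|\delta z_v\|$ for $u\ne v$, on the whole mask cube and uniformly over positions. Then run the defect through your telescoping rather than a Taylor expansion at $\mathbf 0$. Order $S=\{u_1,\dots,u_k\}$ and set $s^{(j-1)}=\sum_{i<j}e_{u_i}$. The $j$-th increment differs from $-\delta z_{u_j}$ by $\int_0^1[\partial_{u_j}z(s^{(j-1)}+te_{u_j})-\partial_{u_j}z(te_{u_j})]\,dt$. This involves only mixed partials $\partial_{u_iu_j}$ with $i<j$, so $\|e_s\|\le\tfrac12\tilde B\,\epsilon^2$ with $\epsilon=\sum_{u\in S}\|\delta z_u\|\le\|s\|_1m$.

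This route also removes your reliance on $s_u^2=s_u$ to cancel the diagonal terms. The argument is inherently about binary masks: for relaxed $s\to\mathbf 0$, the finite difference $\delta z_u$ leaves a first-order defect, and the surrogate is only second-order accurate. For $\epsilon\le1$, your bounds on (I) and (II) give $C(\tilde B)\,\epsilon^3$. For $\epsilon>1$, the crude bounds $\KL(p_0\|p_s)\le2\|\Delta_s\|_\infty\le2(\epsilon+\tfrac12\tilde B\epsilon^2)$ and $\tfrac12(\Delta^{\rm add}_s)^\top F\Delta^{\rm add}_s\le\tfrac12\epsilon^2$ are also $O(\epsilon^3)$. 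Together this yields $|\mathcal R(s)-\tfrac12s^\top\Hmat s|\le C(\tilde B)\,(\|s\|_1\max_u\|\delta z_u\|)^3$. That is the precise sense in which the constant is ``controlled by the second-derivative bound.'' The paper's sketch relies on the same relative-smoothness assumption without stating it, so this is a hypothesis both arguments need rather than a divergence from the paper.
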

\begin{proof}[Proof sketch]
Two sources contribute. (i) The KL expansion (\cref{prop:fisher}) drops an $o(\|\Delta\|^2)$ term whose
leading part is the cubic log-partition derivative, $O(\|\Delta\|^3)$ with $\Delta=\sum_u s_u\delta z_u$,
hence $O(\|s\|_1^3\max_u\|\delta z_u\|^3)$. (ii) Local additivity neglects second-order cross-effects of
simultaneously masking $u$ and $v$ on the residual stream; under the smoothness assumption these enter
the logit perturbation at order $s_us_v$, contributing to $\Hmat$ at second order and to the residual at
third. Summing gives the bound. The assumption is what makes ``finite-ablation columns'' a controlled
proxy for the derivative columns of \cref{rem:exact}; it is an approximation statement, not an identity.
\end{proof}
\noindent\emph{Significance.} The surrogate is second-order accurate and errs at third order, which both
justifies dropping higher orders at moderate $\rho$ and predicts the super-linear breakdown as
$\|s\|_1$ grows---validated by the surrogate-vs-measured-KL fidelity in \cref{app:surrogate} and the
collapse zone in \cref{fig:pareto}.

\section{Connection to Classical Pruning}
\label{app:classical}
\label{app:th-obd}
\begin{proposition}[Diagonal \method is structured OBD]\label{prop:obd}
Dropping off-diagonals reduces \cref{eq:decomp} to $\tfrac12\sum_u\Hmat_{uu}s_u$ with
$\Hmat_{uu}=\mathbb{E}[\delta z_u^\top F\delta z_u]\ge0$: an Optimal-Brain-Damage saliency
\citep{lecun1989obd} on structured units. \method adds the signed cross-module curvature $\Hmat_{uv}$,
which the diagonal cannot express.
\end{proposition}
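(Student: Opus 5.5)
The argument is essentially a specialization of \cref{eq:decomp} combined with \cref{prop:fisher} and \cref{prop:gram}, so the plan is a short chain of identities.

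\textbf{Step 1: the diagonal truncation.} Starting from the quadratic surrogate $\tfrac12 s^\top\Hmat s$ of \cref{eq:quadratic}, I will use $s_u\in\{0,1\}$, hence $s_u^2=s_u$, to rewrite it as in \cref{eq:decomp}. Setting $\Hmat_{uv}=0$ for all $u\neq v$, which is the $\lambda=0$ case of \cref{eq:lambda}, leaves exactly $\tfrac12\sum_u\Hmat_{uu}s_u$.

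\textbf{Step 2: form and sign of the diagonal entries.} Next I identify $\Hmat_{uu}$. By \cref{prop:fisher} under local additivity, $\Hmat_{uu}=\mathbb{E}_{x,t}[\delta z_u^\top F\,\delta z_u]$. \Cref{prop:gram} with $a=b=\delta z_u$ then gives
\[
\delta z_u^\top F\,\delta z_u=\sum_y p_{0,y}\bigl(\delta z_{u,y}-\mathbb{E}_{p_0}[\delta z_u]\bigr)^2=\|\widetilde{\delta z}_u\|^2\ge0,
\]
and averaging over positions preserves non-negativity.

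\textbf{Step 3: matching the OBD form.} OBD scores parameter $w_i$ by $\tfrac12 h_{ii}w_i^2$, with $h_{ii}$ the diagonal Hessian of the loss at a minimum where the gradient vanishes. I will argue the correspondence term by term:
\begin{itemize}
\item By \cref{prop:vanish} the full model is a zero-gradient point of $\mathcal{R}$, so the OBD premise holds.
\item The structured perturbation that deletes unit $u$ is the mask displacement $s_u=1$, replacing the weight displacement $-w_i$.
\item The relevant curvature is the Gauss--Newton/Fisher Hessian of $\mathcal{R}$, whose diagonal is $\Hmat_{uu}$ (\cref{rem:exact}; our entry is its finite-ablation surrogate).
\end{itemize}
Substituting $s_u^2=1$ gives the OBD saliency $\tfrac12\Hmat_{uu}$ per unit. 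The OBD ranking then sorts units by this quantity, cost-normalized under the budget, which is exactly what the greedy solver of \cref{eq:marginal} reduces to when $g^{(S)}\equiv0$.

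\textbf{Step 4: what the diagonal cannot express.} I must show the off-diagonal terms carry information the diagonal omits. By the Gram identity, $\Hmat_{uv}=\tfrac1P\bar D_u^\top\bar D_v$, which can take either sign. Its magnitude is bounded by Cauchy--Schwarz, $|\Hmat_{uv}|\le\sqrt{\Hmat_{uu}\Hmat_{vv}}$, but it is not determined by the diagonal. To make this concrete, I will exhibit two configurations with identical diagonals and different pair risks: $\bar D_v=\bar D_u$ versus $\bar D_v=-\bar D_u$. Their joint risks differ, $2\Hmat_{uu}$ versus $0$, while any diagonal-only score assigns both the same value.

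The main obstacle is Step 3. "OBD saliency" is a parameter-space notion, whereas $\Hmat_{uu}$ is a finite-difference Fisher quantity in mask space. I expect to state the identification in the infinitesimal-mask limit of \cref{rem:exact}, where $\delta z_u\to J_u$, and treat the finite-ablation version as the surrogate.
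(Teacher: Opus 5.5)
Your proposal is correct and follows the same route the paper takes implicitly; the paper gives no separate proof, and the statement is read off from truncating \cref{eq:decomp} with $s_u^2=s_u$, taking the form and non-negativity of $\Hmat_{uu}$ from \cref{prop:fisher} and the Gram identity of \cref{prop:gram}, and matching OBD through the vanishing gradient of \cref{prop:vanish}. Your explicit $\bar D_v=\pm\bar D_u$ example in Step 4 is a concrete instance of the off-diagonal mass that \cref{prop:offdiag} quantifies. One caveat: the cost normalization in the $\lambda=0$ greedy is an addition on top of classical OBD, not part of it.
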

\noindent\emph{Significance.} This places \method on a continuum with classical second-order pruning:
$\lambda{=}0$ is structured OBD and $\lambda{=}1$ is the full edge model, so the $\lambda$ ablation
(\cref{tab:ablation-edge}) measures exactly the value added by edges over the classical baseline.

\paragraph{What a diagonal (cross-term-dropping) method omits.} We can quantify exactly the error that a
diagonal saliency---or any method that discards the off-diagonal terms as higher-order negligible, as
the concurrent D$^2$Prune~\citep{xiong2026d2prune} does---incurs relative to the true second-order
co-pruning risk.
\begin{proposition}[The diagonal omission is the off-diagonal co-pruning mass]\label{prop:offdiag}
For any pruning set $S$ with mask $s_S\in\{0,1\}^{|S|}$, the second-order co-pruning risk and its diagonal
(OBD) approximation differ by exactly the off-diagonal mass of $\Hmat$ on $S$:
\[
\Delta_{\rm off}(S)\;:=\;\underbrace{\tfrac12 s_S^\top\Hmat\,s_S}_{\text{full (\method)}}-\underbrace{\tfrac12\sum_{u\in S}\Hmat_{uu}}_{\text{diagonal (OBD)}}
\;=\;\tfrac12\!\!\sum_{\substack{u,v\in S\\ u\neq v}}\!\!\Hmat_{uv}
\;=\;\tfrac12\!\!\sum_{\substack{u,v\in S\\ u\neq v}}\!\!\varrho_{uv}\sqrt{\Hmat_{uu}\Hmat_{vv}},
\]
with $\varrho_{uv}=\Hmat_{uv}/\sqrt{\Hmat_{uu}\Hmat_{vv}}$. Consequently, writing
$\bar\varrho_S$ for the $\sqrt{\Hmat_{uu}\Hmat_{vv}}$-weighted mean of $\varrho_{uv}$ over off-diagonal
pairs in $S$,
$\Delta_{\rm off}(S)=\tfrac12\,\bar\varrho_S\big[(\sum_{u\in S}\!\sqrt{\Hmat_{uu}})^2-\sum_{u\in S}\Hmat_{uu}\big]$.
\end{proposition}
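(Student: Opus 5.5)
The plan is a direct expansion of the quadratic form restricted to $S$, followed by a reweighting. No approximation enters, so \cref{prop:offdiag} is an exact algebraic identity about the surrogate.

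\textbf{Step 1 (split the quadratic).} Since $s$ is the indicator of $S$, write
$\tfrac12 s_S^\top\Hmat s_S=\tfrac12\sum_{u,v\in S}\Hmat_{uv}$. Split this double sum into the diagonal pairs $u=v$ and the off-diagonal pairs $u\neq v$. The diagonal part is $\tfrac12\sum_{u\in S}\Hmat_{uu}$, which is exactly the OBD term of \cref{prop:obd}. Hence
$\Delta_{\rm off}(S)=\tfrac12\sum_{u\neq v}\Hmat_{uv}$. By the symmetry of $\Hmat$ (it is a Gram matrix, \cref{prop:gram}), this also equals $\sum_{u<v}\Hmat_{uv}$, matching the edge term in \cref{eq:decomp}.

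\textbf{Step 2 (rewrite in correlation form).} For pairs with $\Hmat_{uu},\Hmat_{vv}>0$, substitute $\Hmat_{uv}=\varrho_{uv}\sqrt{\Hmat_{uu}\Hmat_{vv}}$ directly, using the signed $\varrho_{uv}$ of the statement rather than its absolute value.

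This step is the only real subtlety: the case where some $\Hmat_{uu}=0$, so that $\varrho_{uv}$ is undefined. I would handle it with \cref{prop:gram}. Because $\Hmat\succeq0$, Cauchy--Schwarz gives $|\Hmat_{uv}|\le\sqrt{\Hmat_{uu}\Hmat_{vv}}$, so $\Hmat_{uv}=0$ whenever either diagonal entry vanishes. Such pairs therefore contribute zero on both sides under any convention for $\varrho_{uv}$, for instance $\varrho_{uv}:=0$. The same inequality shows $|\varrho_{uv}|\le1$, which is worth recording.

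\textbf{Step 3 (weighted mean).} Define weights $w_{uv}=\sqrt{\Hmat_{uu}\Hmat_{vv}}\ge0$ and
$\bar\varrho_S=\sum_{u\neq v}w_{uv}\varrho_{uv}\big/\sum_{u\neq v}w_{uv}$.
The total weight is computed by expanding a square:
$\sum_{u\neq v}w_{uv}=\big(\sum_{u\in S}\sqrt{\Hmat_{uu}}\big)^2-\sum_{u\in S}\Hmat_{uu}$,
that is, the full square minus its diagonal terms. Multiplying the mean back by the total weight and by $\tfrac12$ gives the stated closed form.

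If the total weight is zero (for example $|S|\le1$, or at most one unit with positive saliency), then both sides vanish by Step 2. In that case $\bar\varrho_S$ may be set arbitrarily.
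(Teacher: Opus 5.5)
Your proof is correct. It is the direct expansion of $\tfrac12 s_S^\top\Hmat s_S=\tfrac12\sum_{u,v\in S}\Hmat_{uv}$ into diagonal and off-diagonal pairs, plus the identity $(\sum_{u\in S}\sqrt{\Hmat_{uu}})^2-\sum_{u\in S}\Hmat_{uu}=\sum_{u\neq v}\sqrt{\Hmat_{uu}\Hmat_{vv}}$, which is all the paper relies on (it states the proposition without a written proof), and your use of $\Hmat\succeq0$ from the Gram identity to handle zero-saliency units and zero total weight is a careful addition the paper leaves implicit.
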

\noindent\emph{Significance.} The error of dropping the cross terms is not a vanishing higher-order
remainder---it is a second-order quantity of the same order as the diagonal itself, scaled by the
mean off-diagonal correlation $\bar\varrho_S$. This is the same statistic our a-priori predictor measures
(\cref{prop:when}, \cref{tab:ffncorr}): the quantity that predicts when edges help is exactly what governs how wrong a diagonal
(cross-term-dropping) selector is. It also explains the
\cref{tab:ablation-full} gap (diagonal $13.08$ vs full $12.91$, MBPP $.00\!\to\!.10$): the off-diagonal
mass the diagonal ignores is what re-ranks bridge units onto the keep list.

\subsection{Trust-Region View of the Interaction Strength}
\label{app:trust}
We justify \cref{prop:trust}: the optimal interaction strength $\lambda^\star(\rho)$ is non-increasing
in the pruning ratio $\rho$. Write the true risk on the feasible set as the quadratic surrogate plus a
leading higher-order remainder,
\[
\mathcal{R}(s)=\tfrac12 s^\top\Hmat s+\tfrac16\,T(s)+\mathcal{O}(\|s\|^4),
\qquad T(s)=\textstyle\sum_{u,v,w}\mathcal{T}_{uvw}s_u s_v s_w,
\]
where $\mathcal{T}$ is the third derivative of \cref{eq:risk} at $s\!=\!\mathbf 0$. Two structural facts drive
the result. \textbf{(i)} The off-diagonal block of $\Hmat$ is itself an extrapolation: entry
$\Hmat_{uv}$ is built (\cref{eq:huv}) from the local additivity
$\Delta z_s\approx\sum_u s_u\delta z_u$, which is exact at the single-unit level (the diagonal) but
accrues error once three or more units are co-removed. \textbf{(ii)} By the super-additivity
assumption, co-removal is on average more harmful than the pairwise model predicts, so on a budget-$\rho$
mask $s_\rho$ the remainder $T(s_\rho)\ge0$ and grows faster than quadratically in the support size
$|S|=\Theta(\rho M)$.

Model the discounted surrogate's error against the truth as
$\mathcal{E}(\lambda,\rho)=\big(\mathcal{R}_\lambda(s_\rho)-\mathcal{R}(s_\rho)\big)^2$. Let
$E(s)=\sum_{u<v}\Hmat_{uv}s_u s_v$ be the edge mass---positive on a co-correlated selected set---and
$b(\rho)=\tfrac16 T(s_\rho)+\text{[additivity error]}\ge0$ the unmodeled super-additive
mass. Then $\mathcal{R}_\lambda(s_\rho)-\mathcal{R}(s_\rho)=(\lambda-1)E(s_\rho)-b(\rho)$ and the
stationary point of $\mathcal{E}$ in $\lambda$ is
\begin{equation}
\lambda^\star(\rho)=1-\frac{b(\rho)}{E(s_\rho)}.
\label{eq:lambdastar}
\end{equation}
As $\rho\!\to\!0$ a single unit dominates, $b(\rho)/E(s_\rho)\!\to\!0$, so $\lambda^\star\!\to\!1$: trust
the edges fully. As $\rho$ grows, $b(\rho)$ (a sum over triples, $\Theta(|S|^3)$ unmodeled terms) grows
faster than the edge mass $E(s_\rho)$ ($\Theta(|S|^2)$ modeled pairs), so the ratio increases
monotonically and $\lambda^\star(\rho)$ decreases, dropping below $1$ and---once the unmodeled mass
exceeds the edge mass---toward $0$. This is the trust-region collapse from the full edge model
($\lambda{=}1$) to pure node saliency ($\lambda{=}0$). The argument needs only the sign of the
remainder, not its exact form; \cref{sec:exp-highratio} confirms the predicted $\lambda^\star{=}1\!\to\!\tfrac12\!\to\!0$
walk across $\rho\in\{0.2,0.4,0.5\}$ directly from calibration risk.

\section{When Do Cross-Module Edges Help? A Theoretical Criterion}
\label{app:th-when}
This is the bridge between the theory and the empirical rule of \cref{fig:edgecorr}. Write
$\Hmat_{uv}=\varrho_{uv}\sqrt{\Hmat_{uu}\Hmat_{vv}}$, where $\varrho_{uv}\in[-1,1]$ is the
Fisher-feature correlation of units $u,v$ (the $\overline{|{\rm corr}|}$ we report).

\begin{proposition}[Edge-to-diagonal balance]\label{prop:when}
In the greedy marginal $\Delta(u\mid S)=\tfrac12\Hmat_{uu}+\sum_{v\in S}\Hmat_{uv}$, suppose the
selected units have comparable saliency $\Hmat_{vv}\approx \bar h$ and mean correlation
$\bar\varrho=\tfrac1{|S|}\sum_{v\in S}\varrho_{uv}$. Then the off-diagonal contribution relative to the
diagonal one is
\[
\frac{\big|\sum_{v\in S}\Hmat_{uv}\big|}{\tfrac12\Hmat_{uu}}\;\approx\;2\,|S|\,|\bar\varrho|.
\]
\end{proposition}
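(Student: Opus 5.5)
The argument is a direct substitution of the correlation parametrization into the two terms of the greedy marginal of \cref{eq:marginal}, followed by the comparable-saliency approximation. First, we use the identity $\Hmat_{uv}=\varrho_{uv}\sqrt{\Hmat_{uu}\Hmat_{vv}}$. It is well defined with $\varrho_{uv}\in[-1,1]$ because $\Hmat$ is a PSD Gram matrix with non-negative diagonal (\cref{prop:gram}), so Cauchy--Schwarz bounds the correlation. Where $\Hmat_{uu}=0$ the ratio is vacuous, so we assume positive saliencies. With this identity the edge accumulator becomes
\[
g_u^{(S)}=\sum_{v\in S}\Hmat_{uv}=\sqrt{\Hmat_{uu}}\sum_{v\in S}\varrho_{uv}\sqrt{\Hmat_{vv}}.
\]

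Second, we impose the hypothesis $\Hmat_{vv}\approx\bar h$ for $v\in S$. This gives $g_u^{(S)}\approx\sqrt{\Hmat_{uu}\bar h}\,\sum_{v\in S}\varrho_{uv}=\sqrt{\Hmat_{uu}\bar h}\,|S|\,\bar\varrho$, using the definition of $\bar\varrho$. Dividing by the diagonal term $\tfrac12\Hmat_{uu}$ yields
\[
\frac{|g_u^{(S)}|}{\tfrac12\Hmat_{uu}}\approx 2|S|\,|\bar\varrho|\sqrt{\bar h/\Hmat_{uu}}.
\]

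Third, the stated form needs the candidate $u$ itself to be in the comparable-saliency regime, $\Hmat_{uu}\approx\bar h$. We read the hypothesis as covering the candidate as well as the selected units, which is natural because greedy candidates compete at the same cost-normalized saliency level as the units already chosen. Under that reading the square-root factor is $\approx1$, and the claimed $2|S||\bar\varrho|$ follows.

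The main obstacle is making ``$\approx$'' precise rather than the algebra. I would first try a multiplicative version. If $\Hmat_{vv}\in[(1-\epsilon)\bar h,(1+\epsilon)\bar h]$ for all $v\in S\cup\{u\}$, and the $\varrho_{uv}$ share a sign so there is no cancellation between weights and correlations, then the ratio lies within factors $\sqrt{(1\pm\epsilon)/(1\mp\epsilon)}$ of $2|S||\bar\varrho|$. In the mixed-sign case the statement is best read with $\bar\varrho$ replaced by the $\sqrt{\Hmat_{vv}}$-weighted mean correlation, for which the identity in the first step is exact. This matches the weighted statistic $\bar\varrho_S$ of \cref{prop:offdiag}.

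We would close by noting the consequence the statement is meant to support. The edges dominate the marginal once $|S||\bar\varrho|\gtrsim\tfrac12$. When $\bar\varrho$ is large (redundant FFN channels), the edge term saturates quickly and amplifies surrogate error, which motivates damping $\lambda$ via \cref{prop:trust}.
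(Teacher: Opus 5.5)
Your proposal is correct and matches the paper's proof step for step: substitute $\Hmat_{uv}=\varrho_{uv}\sqrt{\Hmat_{uu}\Hmat_{vv}}$, replace $\Hmat_{vv}$ by $\bar h$ to extract $|S|\bar\varrho$, then set $\Hmat_{uu}\approx\bar h$ before dividing by $\tfrac12\Hmat_{uu}$. The paper also assumes $\Hmat_{uu}\approx\bar h$ for the candidate without stating it as a hypothesis, so your explicit extension of the assumption to $u$, and your multiplicative and weighted-correlation refinements, spell out what the paper leaves implicit.
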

\begin{proof}
$\sum_{v\in S}\Hmat_{uv}=\sum_{v\in S}\varrho_{uv}\sqrt{\Hmat_{uu}\Hmat_{vv}}\approx
\sqrt{\Hmat_{uu}\bar h}\sum_{v\in S}\varrho_{uv}=\sqrt{\Hmat_{uu}\bar h}\,|S|\bar\varrho$. With
$\Hmat_{uu}\approx\bar h$ this is $\bar h\,|S|\bar\varrho$, and dividing by $\tfrac12\Hmat_{uu}\approx\tfrac12\bar h$ gives $2|S||\bar\varrho|$.
\end{proof}
\noindent\emph{Significance (closes the loop).} The edge term acts as a refinement on top of
well-separated saliency when $|\bar\varrho|$ is small, but dominates the marginal when
$|\bar\varrho|$ is large. In the latter regime the selection is driven by correlation-avoidance rather
than importance; if, additionally, the diagonal saliency is nearly uniform (small spread, so there is
little importance signal to anchor on), the high-variance correlation term can misrank units and the
full edge model underperforms its diagonal special case. This predicts precisely the inverse relation
between edge benefit and FFN redundancy $\overline{|{\rm corr}|}$ in \cref{fig:edgecorr} and motivates
the single damping scalar $\lambda$ for high-redundancy families. It also identifies the favorable
regime---low FFN correlation and well-differentiated saliency---occupied by the dense Llama/Mistral
models in our suite.

\section{Solver Guarantees}
\label{app:th-solver}
\begin{proposition}[Running-sum marginal]\label{prop:greedy}
$\Delta(u\mid S)=\tfrac12\Hmat_{uu}+\sum_{v\in S}\Hmat_{uv}$; maintaining $g_u=\sum_{v\in S}\Hmat_{uv}$
with $g\leftarrow g+\Hmat_{:,u^\star}$ makes each step $O(M)$ and the solver $O(M|S|)$.
\end{proposition}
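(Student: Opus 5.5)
The statement has two parts, and I will prove them separately: the identity for the marginal, and the complexity claim.

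\textbf{The marginal identity.} I will use the quadratic surrogate of \cref{eq:quadratic} on binary masks. Let $s_S$ be the indicator of $S$, and let $u\notin S$. Then $s_{S\cup\{u\}}=s_S+e_u$. Expanding the quadratic gives
\[
\tfrac12(s_S+e_u)^\top\Hmat(s_S+e_u)-\tfrac12 s_S^\top\Hmat s_S=e_u^\top\Hmat s_S+\tfrac12 e_u^\top\Hmat e_u .
\]
The step I must not skip is that this expansion uses the symmetry of $\Hmat$. That symmetry holds because $\Hmat$ is a Gram matrix by \cref{prop:gram}, and the implementation also symmetrizes it explicitly. Symmetry is what merges the two cross terms $\tfrac12 e_u^\top\Hmat s_S$ and $\tfrac12 s_S^\top\Hmat e_u$ into the single term $e_u^\top\Hmat s_S$. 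Finally, $e_u^\top\Hmat s_S=\sum_{v\in S}\Hmat_{uv}$ and $e_u^\top\Hmat e_u=\Hmat_{uu}$, which yields \cref{eq:marginal}.

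\textbf{The update rule.} I will argue by induction on the greedy step that $g^{(S)}=\Hmat s_S$. The base case is $g^{(\emptyset)}=\mathbf 0$. For the inductive step, $\Hmat s_{S\cup\{u^\star\}}=\Hmat s_S+\Hmat_{:,u^\star}$, which is exactly the update $g\leftarrow g+\Hmat_{:,u^\star}$.

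\textbf{Cost per step.} Each step does three things:
\begin{itemize}
\item it evaluates $(\tfrac12\Hmat_{uu}+g_u)/c_u$ for at most $M$ candidates, at $O(1)$ each, assuming the diagonal is cached;
\item it takes an argmin over these candidates, which is $O(M)$;
\item it adds one column to $g$, which is $O(M)$.
\end{itemize}
Filtering candidates by the per-layer cap and the protected layers is also $O(M)$, if a per-layer cost counter is maintained at $O(1)$ per update.

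\textbf{Total cost.} The loop adds exactly one unit per step and stops once the budget is met. It therefore runs $|S|$ times, giving $O(M|S|)$ in total, plus a one-time $O(M)$ initialization. The only subtlety I see is accounting for the anti-collapse guards in the per-step cost; everything else is routine algebra.
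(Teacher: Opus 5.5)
Your proof is correct and follows the paper's argument. The paper reads the marginal off the pairwise-sum form $\widehat{\mathcal R}(S)=\tfrac12\sum_{u\in S}\Hmat_{uu}+\sum_{u<v\in S}\Hmat_{uv}$, which already encodes the symmetry you invoke explicitly, and notes that $g$ stores this sum and is updated by one column per selection; your induction $g^{(S)}=\Hmat s_S$ and your per-step accounting of the anti-collapse guards make explicit what the paper leaves implicit.
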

\begin{proof}
$\widehat{\mathcal R}(S)=\tfrac12\sum_{u\in S}\Hmat_{uu}+\sum_{u<v\in S}\Hmat_{uv}$; adding $u$
contributes $\tfrac12\Hmat_{uu}+\sum_{v\in S}\Hmat_{uv}$. The vector $g$ stores exactly the sum and is
updated by one column per selection.
\end{proof}
\begin{proposition}[Overshoot bound]\label{prop:overshoot}
The stopping rule yields $\rho_{\rm actual}-\rho\le \max_u c_u/\sum_u c_u$.
\end{proposition}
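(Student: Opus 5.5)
The argument follows the stopping rule of \cref{alg:edge-taylor}/\cref{alg:solve} directly: it is the standard ``last step overshoots by at most one item'' bound. Write $C=\sum_u c_u$ for the total removable cost, $C_S^{(k)}$ for the cumulative cost after $k$ selections, and $c_{\max}=\max_u c_u$. The only facts needed are that each iteration adds exactly one unit, so $C_S^{(k)}=C_S^{(k-1)}+c_{u^\star_k}$, and that the \textbf{while} guard is tested before each addition.

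\textbf{Step 1: termination.} Costs are positive. As long as a unit is admissible, each iteration strictly increases $C_S$. The budget $\rho C\le C$ is attained at the latest when all units are selected, so the loop stops after some finite number $K$ of iterations with $C_S^{(K)}\ge\rho C$. In \cref{alg:solve} the admissible set $\mathcal A$ could become empty because of the cap and protection guards. I would state the bound under the assumption that $\mathcal A$ is nonempty whenever the guard holds, i.e.\ the guards leave at least $\rho C$ of cost available. This is the feasibility assumption implicit in the paper; if it fails, the solver undershoots rather than overshoots, and the bound holds trivially.

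\textbf{Step 2: the last step.} The loop entered iteration $K$, so the guard held: $C_S^{(K-1)}<\rho C$. Therefore
\[
C_S^{(K)}=C_S^{(K-1)}+c_{u^\star_K}<\rho C+c_{\max}.
\]
Dividing by $C$ and using $\rho_{\rm actual}=C_S^{(K)}/C$ gives $\rho_{\rm actual}-\rho<c_{\max}/C$, which is the claim (even with strict inequality).

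One edge case needs separate treatment: if $\rho\le 0$, then $K=0$ and the claim is trivial. Nothing in the argument depends on $\Hmat$, the signed marginals, or the greedy ordering, so it applies verbatim to both algorithms and to any $\lambda$.

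The only genuine obstacle is stating precisely how the anti-collapse guards interact with feasibility, as handled in Step 1. The arithmetic itself is immediate.
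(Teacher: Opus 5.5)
Your proof is correct and matches the paper's: both use the fact that the guard gives $C_{S^-}<\rho\sum_u c_u$ just before the last addition, which adds one unit of cost at most $\max_u c_u$, and then divide by $\sum_u c_u$; your termination and feasibility remarks about the cap and protection guards are extra care that the paper leaves implicit. One small slip: the degenerate case should be $\rho=0$ only, because for $\rho<0$ the claim reads $-\rho\le \max_u c_u/\sum_u c_u$ and can fail (negative ratios are outside the setting anyway).
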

\begin{proof}
Before stopping, $C_{S^-}<\rho\sum_u c_u$; one unit of cost $\le\max_u c_u$ is added, so
$C_S<\rho\sum_u c_u+\max_u c_u$. Divide by $\sum_u c_u$.
\end{proof}
\noindent\emph{Remark (cost-normalized selection).} Choosing $u$ by $\Delta(u\mid S)/c_u$ is the
density-ordering heuristic for the underlying min-quadratic knapsack; greedy density ordering is the
standard, well-behaved surrogate when an exact solve is intractable, and the shared budget lets the
attention-vs-FFN split emerge from the objective rather than a hand-set ratio.

\section{Surrogate-Fidelity Bound (Theory)}
\label{app:th-surrogate}
\begin{proposition}[Surrogate--KL gap]\label{prop:surrogate}
For a pruning set $S$, $\big|\widehat{\mathcal R}(S)-\mathbb{E}_{x,t}\KL(p_0\|p_S)\big|=O(\|s_S\|_1^3)$.
\end{proposition}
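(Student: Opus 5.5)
The bound follows from \cref{prop:additivity} once we check that $\widehat{\mathcal R}(S)$ equals the quadratic surrogate $\tfrac12 s_S^\top\Hmat s_S$ on the binary mask. We then absorb the $\max_u\|\delta z_u\|^3$ factor into the constant.

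\textbf{Step 1: identify $\widehat{\mathcal R}(S)$ with the quadratic surrogate.} By the definition used in the proof of \cref{prop:greedy}, $\widehat{\mathcal R}(S)=\tfrac12\sum_{u\in S}\Hmat_{uu}+\sum_{u<v\in S}\Hmat_{uv}$. Using $s_u^2=s_u$ for binary masks, as in \cref{eq:decomp}, this is exactly $\tfrac12 s_S^\top\Hmat s_S$, where $s_S$ is the indicator of $S$. Also $\mathbb{E}_{x,t}\KL(p_0\|p_S)=\mathcal R(s_S)$ by \cref{eq:risk}, so the claim reduces to bounding $|\mathcal R(s_S)-\tfrac12 s_S^\top\Hmat s_S|$.

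\textbf{Step 2: split the error into two terms.} Set $\Delta_S=z_S-z_0$ (true logit shift) and $\widehat\Delta_S=-\sum_{u\in S}\delta z_u$ (additive prediction). At each position $(x,t)$,
\[
\KL(p_0\|p_S)-\tfrac12\widehat\Delta_S^\top F\widehat\Delta_S=\big[\KL(p_0\|p_S)-\tfrac12\Delta_S^\top F\Delta_S\big]+\tfrac12\big[\Delta_S^\top F\Delta_S-\widehat\Delta_S^\top F\widehat\Delta_S\big].
\]
The first bracket is the log-partition Taylor remainder from \cref{prop:fisher}. Since $A$ has bounded third derivatives (softmax cumulants are bounded by constants), it is $O(\|\Delta_S\|^3)$. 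The second bracket equals $\tfrac12(\Delta_S-\widehat\Delta_S)^\top F(\Delta_S+\widehat\Delta_S)$, and $\|F\|\le 1$.

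\textbf{Step 3: bound the additivity defect.} Under the smoothness hypothesis of \cref{prop:additivity}, Taylor-expand the logit map $z(s)$ at $\mathbf 0$ and compare it with the finite single-ablation columns $\delta z_u$. Both $\Delta_S$ and $\widehat\Delta_S$ agree at first order, and their difference consists of pairwise cross-effects, so $\|\Delta_S-\widehat\Delta_S\|=O(\|s_S\|_1^2)$. Together with $\|\Delta_S\|,\|\widehat\Delta_S\|=O(\|s_S\|_1)$, the second bracket is $O(\|s_S\|_1^3)$. The first bracket is $O(\|s_S\|_1^3)$ as well. Averaging over positions preserves both bounds, and Step 1 then finishes the argument.

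\textbf{Main obstacle.} The delicate point is Step 3. The single-unit ablation $\delta z_u$ differs from the derivative column $J_u$ by an $O(1)$-step second-order term, because $s_u$ jumps from $0$ to $1$ rather than moving infinitesimally. So "$O(\|s\|_1^3)$" only has content in the regime where the per-unit perturbations are small.

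I would handle this by introducing a scale parameter. Write the mask as $\epsilon s_S$, or equivalently track $\eta=\max_u\|\delta z_u\|$ as in \cref{prop:additivity}, and state the bound as $C\|s_S\|_1^3\eta^3$, with $C$ depending on the second-derivative bound. The statement as written then reads this with $\eta$ absorbed into the $O(\cdot)$ constant. I expect to have to make this uniformity-in-$S$ assumption explicit rather than derive it.
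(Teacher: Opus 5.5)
Your proposal is correct and follows the paper's route. The paper obtains this bound simply as \cref{prop:additivity} specialized to a fixed set $S$. Your Steps 2--3 spell out the same two error sources used in that proposition's sketch: the cubic log-partition remainder and the pairwise additivity cross-effects. Your factorization $\tfrac12(\Delta_S-\widehat\Delta_S)^\top F(\Delta_S+\widehat\Delta_S)$ is a slightly cleaner way to handle the second one. Your caveat that the bound has content only once the per-unit scale is tracked matches the $\max_u\|\delta z_u\|^3$ factor, which the paper carries in \cref{prop:additivity} and silently absorbs into the constant here.
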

\noindent This is \cref{prop:additivity} specialized to a fixed set and is validated directly in
\cref{app:surrogate}: predicted surrogate risk and measured post-pruning KL correlate strongly at
low-to-moderate $\rho$ and diverge as $\rho$ grows, matching the cubic gap.

\clearpage
\AppBlock{Part C: Experimental Setup and Reproducibility}
\section{Datasets}
\label{app:datasets}
\cref{tab:datasets} lists all evaluation benchmarks with task type, metric, and split. Calibration is
$128$ C4 sequences (length $2048$); no task labels are used for pruning. We report the generative
math/code tasks that most pruning papers omit, treating them as our most discriminative stress tests,
and report no task outside this fixed standard suite.

\begin{table}[h]
\centering\caption{\textbf{Evaluation suite.} 14 benchmarks across 5 capability families, all on a single
compressed model with one shared harness.}\label{tab:datasets}\footnotesize
\setlength{\tabcolsep}{5pt}
\begin{tabular}{llll}
\toprule
\textbf{Family} & \textbf{Benchmark} & \textbf{Metric} & \textbf{Split}\\
\midrule
Language modeling & WikiText-2~\citep{merity2017wikitext} & perplexity\,\dn & test\\
 & PTB~\citep{marcus1993ptb} & perplexity\,\dn & test\\
 & C4~\citep{raffel2020c4} & perplexity\,\dn & validation\\
\midrule
Commonsense & ARC-easy / -challenge~\citep{clark2018arc} & acc (norm)\,\up & test\\
 & HellaSwag~\citep{zellers2019hellaswag} & acc (norm)\,\up & validation\\
 & WinoGrande~\citep{sakaguchi2021winogrande} & acc\,\up & validation\\
 & PIQA~\citep{bisk2020piqa} & acc\,\up & validation\\
 & OpenBookQA~\citep{mihaylov2018openbookqa} & acc (norm)\,\up & test\\
 & BoolQ~\citep{clark2019boolq} & acc\,\up & validation (full)\\
\midrule
Knowledge & MMLU~\citep{hendrycks2021mmlu} & acc\,\up & test (0-shot, LL)\\
Math & GSM8K~\citep{cobbe2021gsm8k} & exact match\,\up & test\\
Code & HumanEval~\citep{chen2021humaneval} & pass@1\,\up & test\\
 & MBPP~\citep{austin2021mbpp} & pass@1\,\up & test (3-shot)\\
\bottomrule
\end{tabular}
\end{table}

\section{Models and Per-Model Configuration}
\label{app:protocol}
\cref{tab:hparam} gives per-model structure and the selected $20\%$ configuration. All models use GQA
(attention units are KV-aligned query-head groups); FFN groups are $n_f{=}16$ per layer; cost is
removable parameter count. The optional compensation (\cref{sec:method-comp}, \cref{eq:comp}) uses ridge
strength $\gamma\in\{0.05,0.2,1.0\}\times\overline{\mathrm{diag}(\Hmat_{\mathcal K\mathcal K})}$ and clip
radius $\tau\in\{1,2\}$, selected per model on a held-out calibration split; it is applied only where it
helps on that split (Mistral, Falcon3) and gated off otherwise.
\begin{table}[h]
\centering\caption{\textbf{Per-model configuration}: units $M$, attention/FFN counts, and the selected
$20\%$ solver config.}\label{tab:hparam}\footnotesize
\setlength{\tabcolsep}{5pt}
\begin{tabular}{l cccccc}
\toprule
\textbf{Model} & Family & Layers & Units $M$ & Attn/FFN & Cap & Protect (f/l)\\
\midrule
Llama-3.1-8B-Instruct & Llama-3 & 32 & 768 & 256/512 & 0.30 & 2/2\\
Mistral-7B & Mistral & 32 & 768 & 256/512 & 0.25 & 1/1\\
Llama-3.2-3B & Llama-3 & 28 & 672 & 224/448 & 0.25 & 2/1\\
Falcon3-7B & Falcon3 & 28 & 560 & 112/448 & 0.35 & 2/2\\
\bottomrule
\end{tabular}
\end{table}

\section{Baseline Methods}
\label{app:baselines}
All baselines are training-free, run under identical budgets/harness with no recovery.
\textbf{Random} prunes structured units uniformly (lower bound). \textbf{Magnitude}~\citep{han2015learning}
scores units by weight norm. \textbf{Wanda-sp}~\citep{sun2024wanda} is the structured input-norm--weighted
magnitude. \textbf{FLAP}~\citep{an2024flap} uses activation-fluctuation importance with a globally
standardized score. \textbf{LLM-Pruner}~\citep{ma2023llmpruner} applies a Taylor (first-order with
diagonal Fisher) group importance. \textbf{SlimGPT}~\citep{ling2024slimgpt} and
\textbf{SlimLLM}~\citep{tang2025slimllm} use layer-wise OBS-style structured removal.
\textbf{AMP}~\citep{sultan2025amp} allocates a per-layer attention-head/MLP budget from an
attention-and-activation preservation criterion. \textbf{2SSP}~\citep{sandri2025twossp} is a two-stage
structured pruner that first removes FFN width and only prunes attention heads at higher ratios (so at
$20\%$ it prunes zero heads). We additionally
report two single-module variants of our pipeline (head-only, FFN-only) and the diagonal-only
($\lambda{=}0$) variant. For methods originally including LoRA recovery, we disable it for the fair
no-recovery comparison and state this explicitly. The two most recent methods, AMP and 2SSP, are compared
on the headline Llama-3.1-8B-Instruct, where their released configurations apply directly; the cross-model
study (\cref{tab:cross-model}, \cref{app:full-results}) uses the seven established baselines, which port
across all four architectures under our shared harness without method-specific re-tuning.

\section{Evaluation Protocol}
\label{app:evalproto}
Multiple-choice tasks use length-normalized log-likelihood (zero-shot, \texttt{acc\_norm}); GSM8K uses
exact match on generated answers; HumanEval/MBPP report pass@1 with real code execution (MBPP uses the
3-shot \texttt{[BEGIN]/[DONE]} protocol); BoolQ uses the full validation split; MMLU is zero-shot
log-likelihood. All scores come from one shared harness.

\textbf{GSM8K is reported here, not in the main tables.} Under training-free structured pruning at
$20\%$, mathematical reasoning collapses for every method: on Llama-3.1-8B-Instruct the strongest baseline
reaches only $4.9\%$ exact-match and \method $3.9\%$ (vs.\ $38.0\%$ dense), with all methods within
a $\sim$3-point band near the floor (\cref{tab:gsm8b})---no reliable separation.
\begin{table}[h]
\centering\footnotesize
\caption{\textbf{GSM8K at $20\%$ (Llama-3.1-8B-Instruct).} Exact-match accuracy (\%). Every training-free
method collapses to a $\sim$3-point band near the floor (Dense $38.0$), which
is why GSM8K is omitted from \cref{tab:main-8b}: no method separates from the floor band.}
\label{tab:gsm8b}
\setlength{\tabcolsep}{5pt}\renewcommand{\arraystretch}{1.05}
\begin{tabular}{l c @{\hskip 2em} l c}
\toprule
Method & GSM8K\up & Method & GSM8K\up\\
\midrule
\rowcolor{headgray} Dense & 38.0 & AMP & 2.3\\
Random & 2.2 & LLM-Pruner & 3.4\\
Magnitude & 1.9 & SlimGPT & 1.6\\
Wanda-sp & 2.0 & SlimLLM & 1.4\\
FLAP & 2.1 & 2SSP & 4.9\\
\rowours \method (ours) & 3.9 & & \\
\bottomrule
\end{tabular}
\end{table} This mirrors the classic structured-pruning
protocol (LLM-Pruner, FLAP, Wanda-sp, SlimGPT), which reports language modeling and commonsense but not
GSM8K. We therefore omit GSM8K from the main comparison to avoid over-reading near-floor noise, and keep
code generation (HumanEval, MBPP)---where methods do separate---as the generative stress test.
Per-model GSM8K numbers for the other three models remain in the complete tables
(\cref{tab:full-mistral-7b}--\cref{tab:full-falcon3-7b}).

\textbf{Per-task metric.} Following the lm-evaluation-harness~\citep{gao2024lmeval} conventions used by LLM-Pruner/Wanda, we
report length-normalized accuracy (\texttt{acc\_norm}) for HellaSwag, ARC-Challenge, OpenBookQA, and
PIQA, and plain accuracy (\texttt{acc}) for WinoGrande and BoolQ (which have no length-normalized
variant); the seven-task commonsense average uses these per-task metrics. We pin a single harness commit
and apply it identically to \method and every baseline; the released code records the exact version.

\textbf{Three protocol points that govern comparability.} (i) Perplexity context length is
$2048$ on WikiText-2(-raw)/PTB/C4, using each model's own tokenizer on the concatenated split. We use a
sliding-window estimate (stride $512$) applied identically to \method and all baselines, so every
perplexity comparison within this paper is exact and rankings are protocol-independent; because a
sliding window yields lower absolute values than the non-overlapping convention of SparseGPT/Wanda, we do
not cross-tabulate our perplexities against externally reported numbers, and we additionally report
the non-overlapping ($\text{stride}{=}2048$) values for the headline model in \cref{tab:stride-equiv} to
confirm the ranking is unchanged. (ii) MMLU is zero-shot log-likelihood, scored by the
identical harness for \method and every baseline, so the MMLU comparisons in this paper are exact;
we therefore compare MMLU only within our harness and do not place our absolute zero-shot MMLU against
externally reported $5$-shot figures. (iii) The pruning ratio $\rho$ is the removed \textbf{non-embedding
parameter fraction} (the LLM-Pruner/FLAP/SlimGPT convention): a $20\%$ model has $20\%$ of its
Transformer-block parameters physically removed. This is not SliceGPT-style residual-dimension
``slicing,'' which retains embeddings and inserts per-block adapters, so a SliceGPT ``$30\%$'' and our
``$30\%$'' are different quantities; we therefore position SliceGPT by mechanism (\cref{app:related-ext})
rather than tabulate it at a nominally equal ratio. Estimation defaults: top-$r{=}256$, $128$ C4
sequences, $n_f{=}16$, parameter cost; greedy allows minimal overshoot (\cref{prop:overshoot}). Hardware
and one-time pruning wall-clock are in \cref{app:efficiency}.

\textbf{Generation proxy for sweeps.} Selecting the shipped configuration and comparing the many
ablation variants (\cref{tab:ablation-full}) on the full $14$-task suite would be prohibitively
expensive, so for internal model selection only we use a fast $3$-task generation proxy:
WikiText-2 perplexity together with HumanEval and MBPP pass@1 over the first $200$ problems of each. The
anti-collapse (cap, first/last-layer protection) choice minimizes proxy WikiText-2 perplexity on a
held-out $256$-sequence calibration split. Every number we report in a results table is from the
full-suite evaluation, never the proxy; the proxy only ranks candidate configurations.

\begin{table}[h]
\centering\caption{\textbf{Perplexity-protocol equivalence} (Llama-3.1-8B-Instruct, $20\%$, WikiText-2\dn). The
sliding-window estimate (stride $512$) we report throughout and the non-overlapping convention
(stride $=2048$) of SparseGPT/Wanda yield different absolute values but the same ranking:
\method remains the best pruned method under both protocols (it stays ahead of LLM-Pruner and well ahead
of SlimLLM), so our headline conclusion is not an artifact of the perplexity
estimator.}\label{tab:stride-equiv}\footnotesize\setlength{\tabcolsep}{7pt}
\begin{tabular}{l cc}
\toprule
\textbf{Method} & sliding (stride 512) & non-overlap (stride 2048)\\
\midrule
LLM-Pruner & 13.23 & 14.94\\
SlimLLM & 17.36 & 18.98\\
\rowours \method & \best{12.91} & \best{14.92}\\
\bottomrule
\end{tabular}
\end{table}

\clearpage
\AppBlock{Part D: Complete Results and Additional Analysis}
\section{Full Per-Model Results}
\label{app:full-results}
\cref{tab:main-8b} reports the Llama-3.1-8B-Instruct results with a commonsense average, omitting GSM8K
(every method is near floor). \cref{tab:full-mistral-7b}--\cref{tab:full-mistral-24b} instead list all
fourteen tasks per model, including GSM8K, for all seven baselines, Dense, and \method, all from selection-only (no-recovery)
runs. Bolding marks the per-task winner, including cells \method loses: it wins many individual tasks, but (as the cross-model summary
\cref{tab:cross-model} reflects) on Mistral-7B and Llama-3.2-3B the strongest specialized baseline,
usually LLM-Pruner, edges it out on a few commonsense/knowledge cells, while \method leads perplexity on
all three and code on Falcon3/3B. No single baseline is competitive across all models the way \method is.
The largest model, \textbf{Mistral-Small-24B} (\cref{tab:full-mistral-24b}), leads WikiText perplexity and
the commonsense average at $20\%$; because a $24$B model is highly redundant at low ratios its margin there
is modest, but it widens sharply under compression (\cref{tab:ratio-full}, \cref{tab:atratio})---\method is
the only method still holding a usable model at $30\%$, and its perplexity lead reaches $14.5\times$ at $40\%$.

\begin{table}[h]
\centering\caption{\textbf{Mistral-7B --- full 14 tasks at $20\%$ pruning} (selection-only, no recovery).
PPL\dn; accuracy\up. \best{Bold}=best among pruned; \snd{underline}=2nd. Dense in gray. ``--''\,=\,not evaluated.}\label{tab:full-mistral-7b}\scriptsize
\setlength{\tabcolsep}{2.1pt}\renewcommand{\arraystretch}{1.03}
\adjustbox{max width=\textwidth}{\begin{tabular}{l ccc ccccccc c c cc}
\toprule
& \multicolumn{3}{c}{\textbf{Perplexity}\,\dn} & \multicolumn{7}{c}{\textbf{Commonsense (acc)}\,\up} & \textbf{Know.} & \textbf{Math} & \multicolumn{2}{c}{\textbf{Code}\,\up}\\
\cmidrule(lr){2-4}\cmidrule(lr){5-11}\cmidrule(lr){12-12}\cmidrule(lr){13-13}\cmidrule(lr){14-15}
\textbf{Method} & Wiki & PTB & C4 & ARC-c & ARC-e & HellaS & WinoG & PIQA & OBQA & BoolQ & MMLU & GSM8K & HEval & MBPP\\
\midrule
\cellcolor{headgray}Dense & \cellcolor{headgray}4.78 & \cellcolor{headgray}7.86 & \cellcolor{headgray}7.86 & \cellcolor{headgray}.521 & \cellcolor{headgray}.665 & \cellcolor{headgray}.784 & \cellcolor{headgray}.666 & \cellcolor{headgray}.815 & \cellcolor{headgray}.446 & \cellcolor{headgray}.828 & \cellcolor{headgray}.441 & \cellcolor{headgray}.156 & \cellcolor{headgray}.244 & \cellcolor{headgray}.400\\
\midrule
Random & 8.70 & 108.58 & 13.08 & .357 & .565 & .625 & .561 & .749 & .366 & .666 & .352 & .011 & .037 & .088\\
Magnitude & 8.59 & 70.71 & 13.24 & .358 & .571 & .646 & .548 & .752 & .352 & .734 & .357 & \best{.022} & .030 & .038\\
Wanda-sp & 10.61 & 92.84 & 13.62 & .384 & .618 & .662 & .590 & .764 & .362 & .708 & \snd{.367} & .009 & .030 & .006\\
FLAP & 13.03 & 95.92 & 15.03 & .349 & .593 & .611 & .582 & .752 & .344 & .693 & .354 & \snd{.017} & .012 & .010\\
LLM-Pruner & 8.46 & \snd{61.47} & \best{12.27} & \best{.405} & \snd{.625} & \best{.708} & \snd{.591} & \best{.775} & \best{.392} & \snd{.750} & .355 & .017 & \best{.067} & \best{.122}\\
SlimGPT & 8.27 & 66.27 & 12.86 & .369 & .558 & .658 & .562 & .760 & .358 & .661 & .347 & .017 & .012 & .030\\
SlimLLM & \snd{8.11} & 64.54 & \snd{12.29} & .392 & .611 & .637 & .555 & .764 & .354 & .714 & .353 & .012 & \snd{.043} & .040\\
\rowours CoCurve (ours) & \best{8.10} & \best{50.76} & 12.35 & \snd{.398} & \best{.636} & \snd{.694} & \best{.594} & \snd{.768} & \snd{.378} & \best{.767} & \best{.374} & .012 & .037 & \best{.122}\\
\bottomrule\end{tabular}}\end{table}

\begin{table}[h]
\centering\caption{\textbf{Llama-3.2-3B --- full 14 tasks at $20\%$ pruning} (selection-only, no recovery).
PPL\dn; accuracy\up. \best{Bold}=best among pruned; \snd{underline}=2nd. Dense in gray. ``--''\,=\,not evaluated.}\label{tab:full-llama-3.2-3b}\scriptsize
\setlength{\tabcolsep}{2.1pt}\renewcommand{\arraystretch}{1.03}
\adjustbox{max width=\textwidth}{\begin{tabular}{l ccc ccccccc c c cc}
\toprule
& \multicolumn{3}{c}{\textbf{Perplexity}\,\dn} & \multicolumn{7}{c}{\textbf{Commonsense (acc)}\,\up} & \textbf{Know.} & \textbf{Math} & \multicolumn{2}{c}{\textbf{Code}\,\up}\\
\cmidrule(lr){2-4}\cmidrule(lr){5-11}\cmidrule(lr){12-12}\cmidrule(lr){13-13}\cmidrule(lr){14-15}
\textbf{Method} & Wiki & PTB & C4 & ARC-c & ARC-e & HellaS & WinoG & PIQA & OBQA & BoolQ & MMLU & GSM8K & HEval & MBPP\\
\midrule
\cellcolor{headgray}Dense & \cellcolor{headgray}6.91 & \cellcolor{headgray}10.40 & \cellcolor{headgray}10.38 & \cellcolor{headgray}.469 & \cellcolor{headgray}.665 & \cellcolor{headgray}.728 & \cellcolor{headgray}.586 & \cellcolor{headgray}.774 & \cellcolor{headgray}.432 & \cellcolor{headgray}.743 & \cellcolor{headgray}.418 & \cellcolor{headgray}.106 & \cellcolor{headgray}.262 & \cellcolor{headgray}.384\\
\midrule
Random & 32.51 & 45.86 & 46.56 & .271 & .450 & .424 & .515 & .631 & .286 & .445 & .288 & .011 & \snd{.024} & --\\
Magnitude & 36.97 & 56.29 & 50.58 & .318 & .505 & .468 & \best{.542} & .660 & .318 & .458 & .316 & .015 & .006 & .002\\
Wanda-sp & 35.15 & 53.77 & 37.43 & .301 & .535 & .486 & .516 & .689 & .316 & .593 & .307 & .016 & .006 & --\\
FLAP & 40.29 & 81.15 & 50.16 & .270 & .514 & .464 & .527 & .671 & .288 & \best{.639} & .302 & \snd{.017} & -- & --\\
LLM-Pruner & 20.84 & 31.32 & 27.12 & \snd{.327} & \snd{.552} & \best{.553} & \snd{.534} & \best{.706} & \best{.346} & .608 & \best{.335} & \best{.020} & .024 & .006\\
SlimGPT & 31.83 & 62.81 & 43.46 & .278 & .497 & .457 & .520 & .693 & .306 & \snd{.630} & .288 & .010 & .006 & --\\
SlimLLM & \snd{16.90} & \snd{29.84} & \best{24.64} & .323 & .542 & .529 & .514 & \snd{.703} & .328 & .468 & .320 & .016 & .006 & \snd{.010}\\
\rowours CoCurve (ours) & \best{16.71} & \best{25.43} & \snd{26.91} & \best{.329} & \best{.571} & \snd{.545} & .522 & .696 & \snd{.338} & .585 & \snd{.325} & .011 & \best{.030} & \best{.062}\\
\bottomrule\end{tabular}}\end{table}

\begin{table}[h]
\centering\caption{\textbf{Falcon3-7B --- full 14 tasks at $20\%$ pruning} (selection-only, no recovery).
PPL\dn; accuracy\up. \best{Bold}=best among pruned; \snd{underline}=2nd. Dense in gray. ``--''\,=\,not evaluated.}\label{tab:full-falcon3-7b}\scriptsize
\setlength{\tabcolsep}{2.1pt}\renewcommand{\arraystretch}{1.03}
\adjustbox{max width=\textwidth}{\begin{tabular}{l ccc ccccccc c c cc}
\toprule
& \multicolumn{3}{c}{\textbf{Perplexity}\,\dn} & \multicolumn{7}{c}{\textbf{Commonsense (acc)}\,\up} & \textbf{Know.} & \textbf{Math} & \multicolumn{2}{c}{\textbf{Code}\,\up}\\
\cmidrule(lr){2-4}\cmidrule(lr){5-11}\cmidrule(lr){12-12}\cmidrule(lr){13-13}\cmidrule(lr){14-15}
\textbf{Method} & Wiki & PTB & C4 & ARC-c & ARC-e & HellaS & WinoG & PIQA & OBQA & BoolQ & MMLU & GSM8K & HEval & MBPP\\
\midrule
\cellcolor{headgray}Dense & \cellcolor{headgray}5.43 & \cellcolor{headgray}8.90 & \cellcolor{headgray}9.30 & \cellcolor{headgray}.539 & \cellcolor{headgray}.774 & \cellcolor{headgray}.719 & \cellcolor{headgray}.646 & \cellcolor{headgray}.795 & \cellcolor{headgray}.398 & \cellcolor{headgray}.821 & \cellcolor{headgray}.452 & \cellcolor{headgray}.585 & \cellcolor{headgray}.463 & \cellcolor{headgray}.600\\
\midrule
Random & 8.32 & 14.20 & 18.54 & .416 & \snd{.676} & .615 & .581 & .746 & .320 & \best{.738} & .381 & .010 & .091 & .125\\
Magnitude & 378.46 & 460.52 & 554.41 & .221 & .311 & .271 & .510 & .535 & .238 & .399 & .267 & .005 & -- & --\\
Wanda-sp & 7.86 & 13.36 & 16.82 & .424 & .652 & .638 & .580 & .744 & .324 & .720 & .389 & .010 & \snd{.134} & .180\\
FLAP & 7.90 & 13.24 & 16.63 & .423 & .675 & \snd{.642} & .568 & .747 & \snd{.352} & .728 & .382 & .025 & .110 & \snd{.185}\\
LLM-Pruner & 7.72 & 12.86 & 16.94 & \snd{.427} & .676 & .629 & .569 & \snd{.748} & .318 & .699 & \best{.398} & .045 & .104 & .155\\
SlimGPT & \snd{7.43} & \snd{12.62} & \snd{16.49} & .383 & .662 & .629 & \snd{.593} & .740 & .336 & \snd{.731} & .388 & \snd{.050} & .098 & .145\\
SlimLLM & 11.30 & 17.85 & 20.04 & .412 & .639 & .624 & .542 & .743 & .334 & .657 & .368 & .010 & .091 & --\\
\rowours CoCurve ($\lambda{=}0.5$, ours) & \best{7.03} & \best{11.57} & \best{15.50} & \best{.444} & \best{.700} & \best{.653} & \best{.605} & \best{.755} & \best{.374} & .680 & \snd{.390} & \best{.065} & \best{.140} & \best{.235}\\
\bottomrule\end{tabular}}\end{table}

\begin{table}[h]
\centering\caption{\textbf{Mistral-Small-24B --- full 14 tasks at $20\%$ pruning} (selection-only, no recovery).
PPL\dn; accuracy\up. \best{Bold}=best among pruned; \snd{underline}=2nd. Dense in gray. ``--''\,=\,not evaluated.
$^\dagger$SlimLLM realizes only $17.2\%$ at this budget (severe under-pruning; its cosine-cascade
allocation stalls), so its numbers are inflated by pruning less and are excluded from the ranking.}\label{tab:full-mistral-24b}\scriptsize
\setlength{\tabcolsep}{2.1pt}\renewcommand{\arraystretch}{1.03}
\adjustbox{max width=\textwidth}{\begin{tabular}{l ccc ccccccc c c cc}
\toprule
& \multicolumn{3}{c}{\textbf{Perplexity}\,\dn} & \multicolumn{7}{c}{\textbf{Commonsense (acc)}\,\up} & \textbf{Know.} & \textbf{Math} & \multicolumn{2}{c}{\textbf{Code}\,\up}\\
\cmidrule(lr){2-4}\cmidrule(lr){5-11}\cmidrule(lr){12-12}\cmidrule(lr){13-13}\cmidrule(lr){14-15}
\textbf{Method} & Wiki & PTB & C4 & ARC-c & ARC-e & HellaS & WinoG & PIQA & OBQA & BoolQ & MMLU & GSM8K & HEval & MBPP\\
\midrule
\cellcolor{headgray}Dense & \cellcolor{headgray}4.70 & \cellcolor{headgray}24.61 & \cellcolor{headgray}8.47 & \cellcolor{headgray}.596 & \cellcolor{headgray}.826 & \cellcolor{headgray}.816 & \cellcolor{headgray}.751 & \cellcolor{headgray}.830 & \cellcolor{headgray}.480 & \cellcolor{headgray}.868 & \cellcolor{headgray}.523 & \cellcolor{headgray}.430 & \cellcolor{headgray}.305 & \cellcolor{headgray}.645\\
\midrule
Random & \snd{8.61} & \best{77.44} & \best{15.10} & .463 & .700 & .570 & .623 & .764 & .386 & \best{.787} & \snd{.467} & \snd{.065} & .128 & \best{.330}\\
Magnitude & 25.66 & 154.0 & 35.76 & .490 & .725 & .547 & .576 & .774 & .392 & .552 & .443 & .015 & .073 & .035\\
Wanda-sp & 27.61 & 268.6 & 33.87 & .411 & .632 & .568 & .616 & .761 & .382 & .785 & .435 & .060 & .128 & .180\\
FLAP & 22.63 & \snd{114.8} & 32.28 & .448 & .672 & .570 & .619 & .752 & .406 & \snd{.786} & .435 & .060 & .122 & .075\\
LLM-Pruner & 16.22 & 132.4 & 30.70 & \best{.510} & \snd{.738} & \snd{.583} & \snd{.642} & .752 & .408 & .744 & \best{.474} & .040 & .079 & .245\\
SlimGPT & 90.96 & 245.8 & 88.08 & .479 & .705 & \best{.586} & .632 & \snd{.778} & \snd{.422} & .753 & .451 & .040 & \snd{.146} & .210\\
SlimLLM$^\dagger$ & 21.66 & 105.1 & 35.19 & .516 & .747 & .602 & .686 & .784 & .412 & .801 & .484 & .130 & .098 & .360\\
\rowours CoCurve (ours) & \best{8.29} & 131.4 & \snd{15.67} & \snd{.497} & \best{.749} & .580 & \best{.688} & \best{.787} & \best{.468} & .767 & .462 & \best{.065} & \best{.146} & \snd{.285}\\
\bottomrule\end{tabular}}\end{table}

\section{Pruning-Ratio Sweep}
\label{app:ratio}
For each model we report the full suite at $\rho\in\{10,20,30,40,50\}\%$ for \method and all baselines,
each with its per-ratio config (\cref{app:solver}); \cref{fig:pareto} plots the headline curves,
\cref{tab:ratio-full} is the \method curve, and \cref{tab:atratio} is the matched-ratio comparison. The
$\ge30\%$ entries use the trust-region $\lambda^\star$ (\cref{prop:trust}) and the gated compensation
where it helps (\cref{sec:method-comp}); all rows are at realized ratio within $\pm1.2\%$ of target.
\begin{table}[h]
\centering\caption{\textbf{Ratio sweep} (\method, Wiki\dn/MMLU\up/MBPP\up). $30$--$50\%$ are the best
at-ratio config (trust-region $\lambda^\star$, gated comp). The $50\%$ column is the deep-collapse
boundary, far past any deployment regime.}\label{tab:ratio-full}\footnotesize
\setlength{\tabcolsep}{3.5pt}
\begin{tabular}{l ccccc}
\toprule
\textbf{Model} & $10\%$ & $20\%$ & $30\%$ & $40\%$ & $50\%$\\
\midrule
Llama-3.1-8B-Instruct & 8.5/.44/.41 & 12.9/.39/.11 & 23.4/.34/.01 & 89.4/.27/.00 & 1371/.25/.00\\
Mistral-7B & 5.7/.41/.26 & 8.1/.37/.12 & 14.6/.31/.02 & 48.3/.26/.00 & 326/.26/.00\\
Llama-3.2-3B & 9.1/.38/.19 & 16.7/.33/.06 & 42.3/.31/.00 & 325.7/.28/.00 & 2191/.27/.00\\
Falcon3-7B & --- & 7.0/.39/.24 & 8.9/.33/.10 & 13.9/.30/.01 & 27.6/.30/.00\\
Mistral-Small-24B & 5.8/.54/.53 & 8.3/.46/.28 & 14.2/.39/.06 & 118/.31/.00 & 840/.29/.00\\
\bottomrule
\end{tabular}
\end{table}

\begin{table}[h]
\centering
\caption{\textbf{High-ratio comparison at matched realized ratio} (WikiText-2 PPL\dn). \method$^{\!+}$
denotes the gated compensation (\cref{sec:method-comp}); ``base'' is the strongest baseline that
actually reaches the target ratio (under-pruned points excluded). \textbf{Bold}=per-cell winner.
$\dagger$: no baseline reaches the target under the cap, so \method is the only method at the
true ratio. Best at-ratio baselines: $30/40\%$ LLM-Pruner (8B/Falcon3), SlimGPT (Mistral), SlimLLM (3B);
$50\%$ Random (Falcon3), SlimLLM (3B). Mistral-Small-24B (plain \method):
best at-ratio baseline LLM-Pruner ($30\%$), SlimLLM ($40\%$), SlimGPT ($50\%$). \method leads at every
ratio: at $30\%$ it holds a usable $20.3$ perplexity while the strongest baseline has already collapsed to
$99.1$ ($4.9\times$), and the relative margin widens to $14.5\times$ at $40\%$ ($117.6$ vs.\ $1709.5$).}
\label{tab:atratio}
\footnotesize
\setlength{\tabcolsep}{3pt}\renewcommand{\arraystretch}{1.05}
\begin{tabular}{l cc cc cc cc cc}
\toprule
 & \multicolumn{2}{c}{\textbf{8B}} & \multicolumn{2}{c}{\textbf{Mistral}} & \multicolumn{2}{c}{\textbf{Falcon3}} & \multicolumn{2}{c}{\textbf{3B}} & \multicolumn{2}{c}{\textbf{24B}}\\
\cmidrule(lr){2-3}\cmidrule(lr){4-5}\cmidrule(lr){6-7}\cmidrule(lr){8-9}\cmidrule(lr){10-11}
$\rho$ & \method & base & \method$^{\!+}$ & base & \method$^{\!+}$ & base & \method & base & \method & base\\
\midrule
30\% & \textbf{23.4} & 31.0 & 14.6 & \textbf{14.0} & \textbf{8.9} & 9.9 & \textbf{42.3} & 52.0 & \textbf{20.3} & 99.1\\
40\% & \textbf{89.4} & 105.5 & 48.3 & \textbf{40.6} & \textbf{13.9} & 16.7 & 325.7 & \textbf{163.6} & \textbf{117.6} & 1709.5\\
50\% & \textbf{1371}$^\dagger$ & --- & \textbf{326}$^\dagger$ & --- & \textbf{27.6} & 44.6 & 2191 & \textbf{1199} & \textbf{2325} & 6136\\
\bottomrule
\end{tabular}
\end{table}

\subsection{Recovery-Enabled Bake-off}
\label{app:recovery}
A reviewer may worry that the selection-only protocol (no weight updates for any method) understates
recovery-capable baselines. We therefore equip SlimGPT with its full OBS weight reconstruction
(the closed-form $W_R \mathrel{-}= W_S(H^{-1}_{SS})^{-1}H^{-1}_{SR}$ over the kept channels, normally
disabled for fairness) and compare it against \method at matched ratio (\cref{tab:recovery}). On
Llama-3.1-8B-Instruct, \method's selection alone---no weight updates---beats SlimGPT with OBS
reconstruction outright at both ratios. On Mistral, the heavyweight reconstruction leads and
\method$^{\!+}$ (our single closed-form rescale) stays within $6\%$ at $30\%$. This shows that our
gain comes from which units are removed, not from withholding a recovery step from the baselines;
weight reconstruction is orthogonal to the edge-aware selection and composable with it.
\begin{table}[h]
\centering\caption{\textbf{Recovery-enabled bake-off} (WikiText-2 PPL\dn). SlimGPT$^{\mathrm{OBS}}$ uses
full OBS weight reconstruction. \textbf{Bold}=winner. \method needs no weight updates.}
\label{tab:recovery}\footnotesize\setlength{\tabcolsep}{6pt}
\begin{tabular}{ll ccc}
\toprule
Model & $\rho$ & \method & \method$^{\!+}$ & SlimGPT$^{\mathrm{OBS}}$\\
\midrule
Llama-3.1-8B-Instruct & 30\% & \textbf{23.4} & 30.5 & 37.9\\
Llama-3.1-8B-Instruct & 40\% & \textbf{89.4} & 91.6 & 120.2\\
Mistral-7B & 30\% & 15.5 & 14.6 & \textbf{13.8}\\
Mistral-7B & 40\% & 56.4 & 48.3 & \textbf{34.6}\\
\bottomrule
\end{tabular}
\end{table}

\section{Extended Ablations}
\label{app:ablation-extra}
\cref{tab:ablation-full} isolates each design choice on Llama-3.1-8B-Instruct at $20\%$ under one fixed
anti-collapse configuration (cap $0.30$, protect $2/2$), so every row is directly comparable; the
generative-task budget is the $200$-sample proxy of \cref{app:evalproto}. We analyze four axes.

\textbf{(i) Edge structure: which interaction blocks matter.} The one unambiguous, configuration-free
result (\cref{fig:edgestruct}) is the load-bearing role of the cross-module block: dropping only the
attn$\leftrightarrow$FFN edges (within-module only) is worse than dropping all edges (the
diagonal), with code collapsing in both (\cref{sec:exp-ablation}). Keeping only the attn--attn block is
still serviceable ($12.4$), because attention units interact most strongly with one another; the $3$-task
figures here screen candidates, while the shipped configuration is selected on the complete $14$-task
suite (\cref{app:full-results}), not on any single proxy row. Keeping only an isolated cross block
(attn--FFN-only or FFN--FFN-only, with the rest of $\Hmat$ zeroed) collapses the model
($>\!900$ perplexity): an incomplete edge matrix that retains a single off-diagonal block while zeroing
the within-block and diagonal terms is not a valid curvature surrogate and the solver mis-ranks
catastrophically. These ``keep-one-block'' rows are diagnostics of incoherence, not candidate methods;
the meaningful comparison is the full matrix against principled leave-out variants (diagonal,
within-module, within-layer).

\textbf{(ii) Cross-layer reach: the benefit is dominantly local.} Restricting edges to the same layer
(within-layer) or to adjacent layers stays within the $3$-task proxy noise of the full-matrix
perplexity ($12.2$ and $11.8$ vs.\ $12.9$), indicating that co-pruning
curvature concentrates between nearby and same-layer units. We do not read this as evidence
against the full matrix: the proxy is a $3$-task, $200$-sample slice, and we select the shipped
configuration on the complete $14$-task suite, where the full, configuration-free matrix is the robust
default. Rather, the locality is an efficiency opportunity---a localized edge variant computing
only within-/adjacent-layer blocks would cut the Gram cost with little accuracy loss
(\cref{app:limitations})---and an orthogonal axis to the row-2 result: locality is about layer
distance, whereas the cross-module coupling remains essential at every distance.

\textbf{(iii) Sign handling.} On Llama-3.1-8B-Instruct the co-pruning edges selected by the solver are
predominantly positive, so signed, clipped $\max(\Hmat_{uv},0)$, and $|\Hmat_{uv}|$ coincide to the
reported precision ($12.91$). Signed remains the principled choice (\cref{app:th-solver}): a negative
marginal is a genuine partial cancellation, and on models with sign-mixed edges clipping discards it.

\textbf{(iv) Estimation sensitivity.} Increasing top-$r$ beyond $256$ leaves $\Hmat$ nearly unchanged;
parameter cost is the cleanest budget; calibration size $|\mathcal{C}|$ and group count $n_f$ are swept
in \cref{tab:sensitivity} and the released artifacts.

\begin{table}[h]
\centering\caption{\textbf{Extended ablations at $20\%$} (Llama-3.1-8B-Instruct; WikiText\dn\,/\,MMLU\up\,/\,MBPP\up;
one fixed config). \best{Bold}=best in each column (per block); the shaded row is the shipped
full-matrix default. ``Keep-one-block'' rows (italic) are incoherence diagnostics,
not methods.}
\label{tab:ablation-full}\footnotesize\setlength{\tabcolsep}{6pt}\renewcommand{\arraystretch}{1.05}
\begin{tabular}{ll ccc}
\toprule
\textbf{Axis} & \textbf{Variant} & Wiki\dn & MMLU\up & MBPP\up\\
\midrule
\multirow{6}{*}{Edge structure}
 & \cellcolor{oursblue}Full $\Hmat$ (ours) & \cellcolor{oursblue}12.91 & \cellcolor{oursblue}.379 & \cellcolor{oursblue}.100\\
 & Diagonal only ($\lambda{=}0$) & 13.08 & .373 & .000\\
 & Within-module only (no attn$\leftrightarrow$FFN) & 15.01 & \best{.390} & .000\\
 & attn--attn block only & \best{12.37} & .380 & \best{.150}\\
 & attn--FFN block only (incoherent) & 2531\phantom{.0} & .250 & .000\\
 & FFN--FFN block only (incoherent) & 924\phantom{.00} & .258 & .000\\
\midrule
\multirow{3}{*}{Cross-layer reach}
 & All layers (ours) & 12.91 & .379 & .100\\
 & Adjacent-layer only & \best{11.84} & .384 & \best{.160}\\
 & Within-layer only & 12.21 & \best{.385} & .125\\
\midrule
\multirow{3}{*}{Sign handling}
 & Signed (ours) & 12.91 & .379 & .100\\
 & Clipped $\max(\Hmat_{uv},0)$ & 12.91 & .379 & .100\\
 & Absolute $|\Hmat_{uv}|$ & 12.91 & .379 & .100\\
\midrule
Estimation & $|\mathcal{C}|$, top-$r$, $n_f$, cost & \multicolumn{3}{c}{top-$r{>}256$: $\Hmat$ stable (\cref{tab:sensitivity})}\\
\bottomrule
\end{tabular}
\end{table}

\begin{figure}[h]
\centering
\includegraphics[width=0.99\columnwidth]{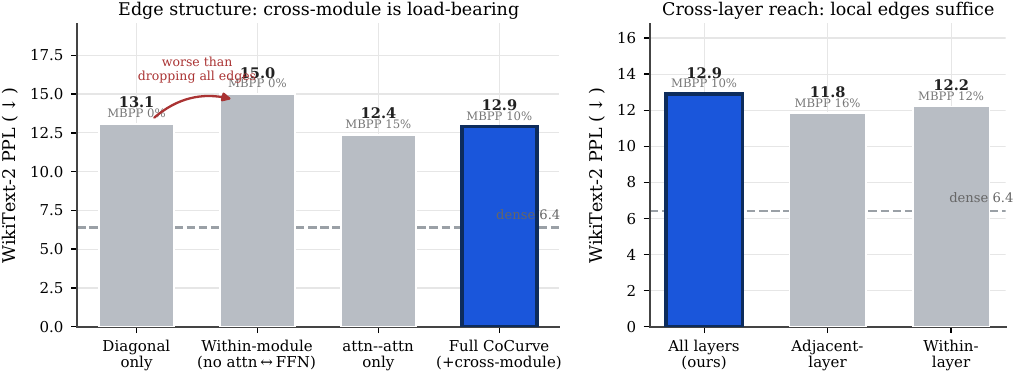}
\caption{\textbf{Edge-structure and cross-layer-reach ablation} (Llama-3.1-8B-Instruct, $20\%$, real data;
\cref{tab:ablation-full}). \textbf{Left:} the cross-module attn$\leftrightarrow$FFN block is
load-bearing---removing only it (within-module) is worse than dropping all edges
(diagonal), and adding it back recovers both perplexity and code (MBPP, in-bar). \textbf{Right:}
the co-pruning benefit is dominantly local---restricting edges to adjacent/same layers stays within the
$3$-task proxy noise---which we read as an efficiency opportunity while keeping the full matrix as the
configuration-free default selected on the complete suite.}
\label{fig:edgestruct}
\end{figure}

\section{Surrogate Fidelity: Empirical Study}
\label{app:surrogate}
We sample mask sets at several ratios and compare the predicted surrogate $\tfrac12 s^\top\Hmat s$ with
the measured $\mathbb{E}_{x,t}\KL(p_0\|p_S)$ (\cref{fig:surrogate}). The two correlate strongly (high
Pearson/Spearman) at low-to-moderate $\rho$ and diverge as $\rho$ grows---exactly the cubic gap of
\cref{prop:additivity} and \cref{prop:surrogate}. This validates that single-unit ablations predict
joint pruning damage and explains the high-ratio collapse of \cref{fig:pareto}.
\begin{figure}[h]
\centering
\includegraphics[width=0.99\columnwidth]{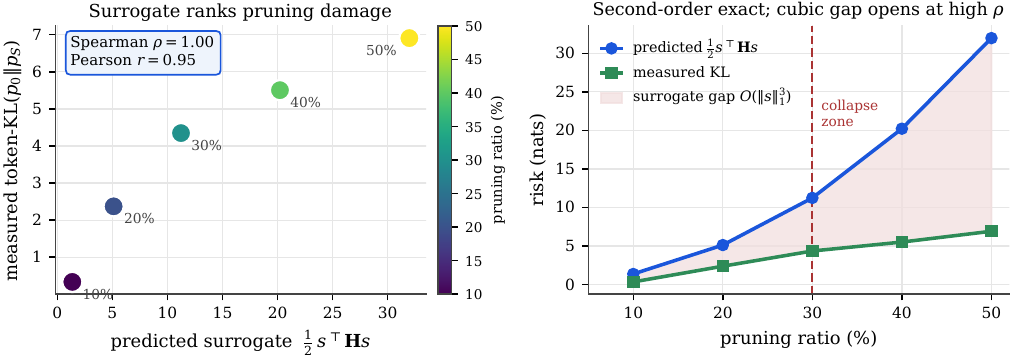}
\caption{\textbf{Surrogate fidelity across the operating range} (Llama-3.1-8B-Instruct, real measured data).
\textbf{Left:} predicted solver surrogate $\tfrac12 s^\top\Hmat s$ vs.\ measured per-token
$\KL(p_0\|p_S)$ for the pruning sets selected at $\rho\in\{10,\dots,50\}\%$, colored by ratio: across the
full operating range the surrogate increases monotonically with the true damage (Spearman $\rho{=}1.00$,
Pearson $r{=}0.95$), confirming the second-order model tracks measured KL as the budget grows. This is a
cross-ratio check; the stronger same-budget discrimination test---does the surrogate rank
different pruning sets at one fixed ratio?---is \cref{fig:sameratio}. \textbf{Right:} both rise
with $\rho$, but the predicted--measured gap widens super-linearly---the $O(\|s\|_1^3)$ third-order term
of \cref{prop:additivity}---explaining the high-ratio collapse zone of \cref{fig:pareto}.}
\label{fig:surrogate}
\end{figure}

\paragraph{Same-budget discrimination (the non-trivial test).} A cross-ratio fit can look strong simply
because both axes grow with $\rho$. The decisive question is whether the surrogate distinguishes
good from bad pruning sets at one fixed budget. We fix $\rho{=}20\%$ on Llama-3.1-8B-Instruct and construct
a diverse pool of $26$ pruning sets at the same budget---the \method-greedy set, the
diagonal-greedy set, $14$ random sets, and $10$ local-swap perturbations of the \method set---and compare
predicted $\tfrac12 s^\top\Hmat s$ against measured $\KL(p_0\|p_S)$ across sets at the same ratio
(\cref{fig:sameratio}). The surrogate ranks them with Spearman $0.81$: the structured selections
(\method, diagonal, swaps) all sit at low measured risk ($\KL\!\approx\!0.6$), while the random sets at
the identical budget are $\sim\!3\times$ worse ($\KL\!\approx\!1.9$). This rules out the
``trivial-ratio'' reading of \cref{fig:surrogate}: at a fixed budget the surrogate selects on genuine
co-pruning quality, which is exactly what the greedy solver consumes. (The two structured selectors are
expectedly close on this calibration-KL view: as \cref{sec:exp-ablation} shows, \method's advantage over
the diagonal selector surfaces on held-out tasks---WikiText/MBPP, \cref{tab:ablation-full}---not on raw
calibration KL.)

\begin{figure}[h]
\centering
\includegraphics[width=0.62\columnwidth]{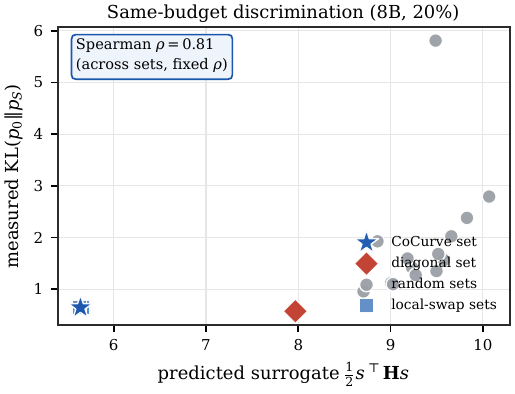}
\caption{\textbf{Same-budget discrimination} (Llama-3.1-8B-Instruct, $20\%$, real data; from
\texttt{diagnose\_edge\_mechanism.py}). Predicted surrogate vs.\ measured KL over $26$ pruning sets at
the same $20\%$ budget. Structured selections (\method, diagonal, local swaps) cluster at low
measured risk; random sets at the identical budget are $\sim\!3\times$ worse. Rank correlation $0.81$
across sets at fixed $\rho$ shows the surrogate discriminates selection quality, not just the budget.}
\label{fig:sameratio}
\end{figure}

\paragraph{The right scale for validating the surrogate.} The pointwise interaction
$I(u,v)=\mathcal R(\{u,v\})-\mathcal R(\{u\})-\mathcal R(\{v\})$ is not the appropriate scale at which to
probe $\Hmat_{uv}$. Removing a whole unit is a finite (not infinitesimal) perturbation, so the joint KL
of two strong units saturates sub-additively
($\mathcal R(\{u,v\})\!\approx\!0.92\,[\mathcal R(\{u\}){+}\mathcal R(\{v\})]$ across pairs); the resulting
saturation deficit scales with the units' saliencies, making a raw pairwise difference numerically
ill-conditioned as a pointwise probe even when the joint prediction $\tfrac12 s^\top\Hmat s$ is
accurate---precisely the finite-ablation regime characterized in \cref{rem:exact}. We therefore validate
the surrogate at the set/ranking level (\cref{fig:sameratio}), the scale at which the solver
actually consumes it, and confirm downstream that the edge-aware selection changes which units are kept
and improves retention (\cref{tab:ablation-full}).

\subsection{Sensitivity to the Surrogate Budget}
\label{app:sensitivity}
The edge estimator has two finite-budget knobs: the calibration size (number of sampled positions) and
the top-$r$ logit support of the Fisher-Gram features. A reviewer may ask whether the result is an
artifact of these choices. We rebuild $\Hmat$ on the cheapest model (Llama-3.2-3B) over a grid that
quarters and doubles each knob independently, materialize at $20\%$ with a fixed uniform config
(so the absolute perplexities differ slightly from the per-model shipped config of \cref{tab:cross-model};
only the trend across the grid is meaningful here), and evaluate (\cref{tab:sensitivity}). WikiText-2 perplexity moves within a $\sim$5\% band
($16.9$--$17.7$) across a $4\times$ range of calibration size and a $4\times$ range of top-$r$, and the
method ranking is unchanged. The selection is therefore not sensitive to the surrogate budget within the
regime we use; the default ($128$ positions, top-$256$) sits in the stable interior.
\begin{table}[h]
\centering\caption{\textbf{Surrogate-budget sensitivity} (Llama-3.2-3B, \method @ $20\%$,
Wiki\dn/MMLU\up/MBPP\up). Row varies top-$r$ at fixed calibration $128$; lower block varies calibration
at fixed top-$256$. The default is $(128,256)$.}\label{tab:sensitivity}\footnotesize
\setlength{\tabcolsep}{6pt}
\begin{tabular}{ll ccc}
\toprule
knob & value & Wiki\dn & MMLU\up & MBPP\up\\
\midrule
top-$r$ & $128$ & 17.50 & .330 & .040\\
top-$r$ & $256$ (default) & 17.66 & .328 & .030\\
top-$r$ & $512$ & 17.48 & .329 & .015\\
\midrule
calib & $64$ & 16.86 & .324 & .015\\
calib & $128$ (default) & 17.66 & .328 & .030\\
\bottomrule
\end{tabular}
\end{table}

\textbf{Calibration source.} Beyond the budget knobs, we test sensitivity to the calibration
distribution, the axis a reviewer is most likely to question (C4 is our default, the
SparseGPT/Wanda convention). We rebuild $\Hmat$ for Llama-3.1-8B-Instruct entirely from $128$ WikiText-2
sequences instead of C4 and re-run the same $20\%$ selection. The shipped model is essentially unchanged:
WikiText-2 perplexity $12.58$ (vs.\ $12.91$ for C4 calibration), HellaSwag $65.2$ (vs.\ $65.6$), and
MMLU $37.4$ (vs.\ $38.9$)---a mild, expected source--target alignment (WikiText calibration slightly
helps WikiText perplexity while the more diverse C4 corpus better preserves broad knowledge and
commonsense accuracy), with \method dominating every training-free baseline under either
source. The edge-aware selection is therefore not an artifact of the C4 calibration set.

\textbf{Calibration seed.} We further test sensitivity to the calibration draw itself by
rebuilding $\Hmat$ from three independent samples of $128$ C4 sequences (seeds $1/7/123$) and running
the full $20\%$ selection and evaluation for each (\cref{tab:seedvar}). The shipped model is stable
across draws: WikiText-2 perplexity $13.11{\pm}0.22$, MMLU $38.1{\pm}0.7$, and the commonsense average
$60.0{\pm}0.9$. The commonsense and MMLU spreads sit well inside \method's margin over the strongest
training-free baseline ($1.5$--$2.4$ commonsense points), so that ranking is not a single-draw artifact;
the perplexity margin over the single strongest baseline is itself narrow ($\sim\!0.3$ ppl over
LLM-Pruner) and of the same order as the seed spread, so we rest the headline claim on capability rather
than on perplexity ties. Code pass@1 shows larger relative spread near the floor
(MBPP $9.6{\pm}3.0\%$), as expected for a low-count generative metric. The headline numbers throughout
the paper use seed $1$.
\begin{table}[h]
\centering\caption{\textbf{Calibration-seed variance} (Llama-3.1-8B-Instruct, \method @ $20\%$, three
independent C4 draws; full-size evaluation). Mean$\pm$std over seeds $\{1,7,123\}$.}\label{tab:seedvar}
\footnotesize\setlength{\tabcolsep}{6pt}
\begin{tabular}{l cccc}
\toprule
\textbf{Metric} & seed 1 & seed 7 & seed 123 & mean$\pm$std\\
\midrule
WikiText-2 (ppl)\,\dn & 12.91 & 13.01 & 13.42 & $13.11{\pm}0.22$\\
MMLU (acc)\,\up       & .389  & .383  & .372  & $.381{\pm}.007$\\
Commonsense avg\,\up  & .610  & .600  & .589  & $.600{\pm}.009$\\
HellaSwag (acc)\,\up  & .656  & .662  & .660  & $.659{\pm}.003$\\
MBPP (pass@1)\,\up    & .110  & .124  & .054  & $.096{\pm}.030$\\
\bottomrule\end{tabular}\end{table}

\section{Mechanism Analysis}
\label{app:mechanism}
\textbf{FFN redundancy across families.} \cref{tab:ffncorr} measures, directly from $\Hmat$, the mean
absolute off-diagonal FFN correlation $\overline{|{\rm corr}|}$ and the edge benefit. The inverse
relation it exhibits is exactly what \cref{prop:when} predicts and \cref{fig:edgecorr} visualizes. \textbf{Bridge units.} With $d_u=\Hmat_{uu}$ and $b_u=\sum_{v\ne u}|\Hmat_{uv}|$,
\emph{bridge} units (low $d_u$, high $b_u$) are retained by \method but removed by diagonal pruning.
\begin{figure}[h]
\centering
\begin{subfigure}[b]{0.48\columnwidth}\centering
\includegraphics[width=\textwidth]{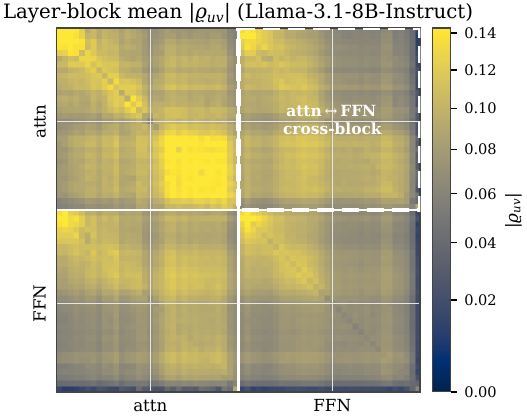}\caption{cross-module curvature}\label{fig:evidence-h}
\end{subfigure}\hspace{0.03\columnwidth}
\begin{subfigure}[b]{0.48\columnwidth}\centering
\includegraphics[width=\textwidth]{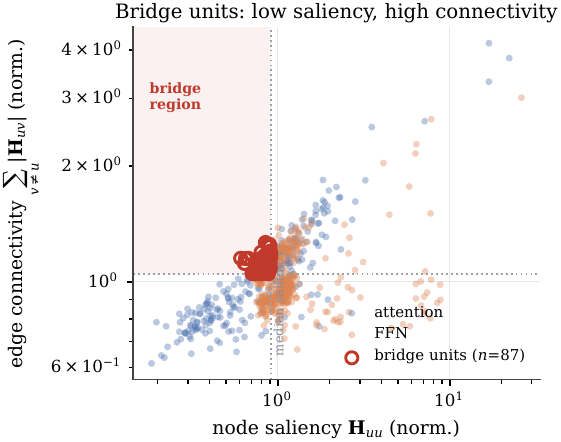}\caption{bridge units}\label{fig:evidence-bridge}
\end{subfigure}
\caption{\textbf{Interpretability evidence (Llama-3.1-8B-Instruct), from the real estimated $\Hmat$.}
\textbf{(a)} the curvature correlation $|\varrho_{uv}|$ averaged within (type,\,layer) blocks: the
off-diagonal cross-module block (white dashed) carries substantial coupling (mean $0.086$),
comparable to the within-module blocks. \textbf{(b)} node saliency vs.\ edge connectivity: the $87$
low-saliency, high-connectivity \emph{bridge} units (red, shaded region) are exactly those diagonal
pruning discards.}
\label{fig:evidence}
\end{figure}

\textbf{A concrete bridge unit.} To make the phenomenon tangible, we single out one unit from the real
estimated $\Hmat$ of Llama-3.1-8B-Instruct. A representative bridge is an \emph{FFN channel group at
layer~1}. Its node saliency $\Hmat_{uu}$ places it in the \emph{bottom $31\%$} of all units, so a
node-first scorer (magnitude, activation, or diagonal Taylor) would rank it as safely prunable. Yet its
edge connectivity $\sum_{v}|\Hmat_{uv}|$ is in the \emph{top $10\%$}: its strongest co-pruning edges run
to \emph{attention} units at layer~0 (a cross-module coupling, $|\Hmat_{uv}|\!\approx\!5.2\times10^{-3}$,
comparable to its own saliency $7.4\times10^{-3}$) and to FFN groups at layers~2--3 (cross-layer). This is
exactly the configuration \method is built to catch: a unit whose removal is cheap in isolation but
costly alongside the early-attention and nearby-FFN units it is coupled to. Diagonal pruning deletes it
together with those partners; \method retains it.

\textbf{Controlled causal test: bridge connectivity, not saliency, drives the damage.}
The bridge diagnosis above is structural (read off $\Hmat$); we confirm it is causal with a
matched-pair intervention on Llama-3.1-8B-Instruct. We force-prune the $87$ bridge units (low $d_u$,
high $b_u$) and, as a control, an equal-size set drawn from the same low-saliency pool but with
low connectivity, matched unit-for-unit on saliency $d_u$, unit type (attention/FFN), and
per-unit cost. Both forced sets are then completed by the identical cost-normalized greedy fill
to the same $20\%$ budget, yielding two prune sets of $166$ units at an identical pruned cost
($1.401\!\times\!10^{9}$ params, realized ratio $0.2007$) that share $79$ units and differ only in the
$87$ forced units, so any gap isolates connectivity, holding saliency, type, cost, and ratio fixed.
Removing the high-connectivity bridge units is far more destructive on capability
(\cref{tab:bridge-causal}): $-10.5$ points of MMLU and a collapse of code pass@1 from $7.5\%$ to $0\%$,
even though a node-first scorer rates both forced sets equally prunable. Perplexity moves the other way
(bridge $21.2$ vs.\ control $23.3$), which is itself diagnostic: bridge units contribute more to
downstream knowledge and code than to raw language-model perplexity, exactly the proxy a node-first
saliency optimizes, so a perplexity-guided ranker is precisely the one that keeps mistaking these
units for safe deletions. This is the failure mode \method avoids by scoring the edges. The headline numbers appear as
\cref{tab:bridge-causal} in \cref{sec:exp-analysis}.

\textbf{Block-wise curvature.} \cref{fig:blockcorr} decomposes the mean off-diagonal correlation of
$\Hmat$ into attn--attn, attn--FFN, and FFN--FFN blocks for the three suite models. The cross-module
attn--FFN block is consistently comparable to the within-block values (e.g.\ $0.086$ vs.\ $0.10/0.07$
on Llama-3.1-8B-Instruct), quantifying that the interactions single-module methods ignore are first-class.
\textbf{Layer pattern.} The shared budget yields non-uniform per-layer and per-type pruning rates
(\cref{fig:evidence-layer}), reported from the released solutions: protected first/last layers,
FFN-heavy early, attention-heavy late.
\begin{figure}[h]
\centering
\includegraphics[width=0.99\columnwidth]{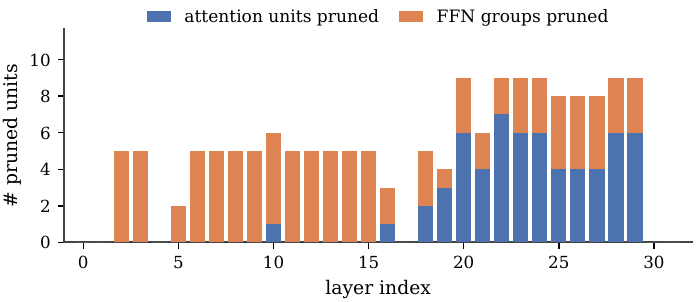}
\caption{\textbf{Non-uniform, type-asymmetric per-layer pruning} (Llama-3.1-8B-Instruct, $20\%$), from
the real \method solution: the shared attention/FFN budget is spent unevenly across depth---decided by
the objective, not a fixed split.}
\label{fig:evidence-layer}
\end{figure}
\begin{figure}[h]
\centering
\includegraphics[width=0.66\columnwidth]{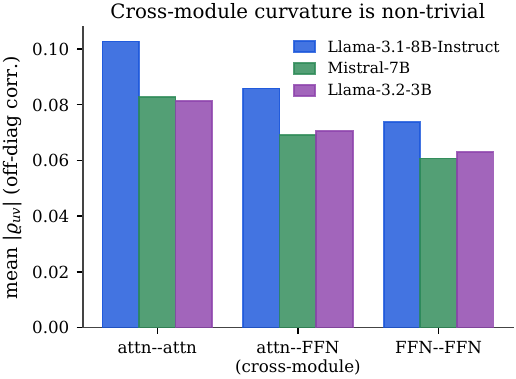}
\caption{\textbf{Block-wise mean $|\varrho_{uv}|$} of the estimated $\Hmat$ (real data): the
cross-module attn--FFN block is non-trivial across models---direct evidence for cross-module
co-pruning edges.}
\label{fig:blockcorr}
\end{figure}
\begin{table}[h]
\centering\caption{\textbf{FFN redundancy vs.\ edge benefit} (from $\Hmat$): low redundancy
$\Rightarrow$ full edges help; high $\Rightarrow$ damp ($\lambda{<}1$), as predicted by
\cref{prop:when}.}\label{tab:ffncorr}\footnotesize\setlength{\tabcolsep}{6pt}
\begin{tabular}{l ccc}
\toprule
\textbf{Model} & $\overline{|{\rm corr}|}_{\rm FFN}$ & Edge benefit & Use\\
\midrule
Falcon3-7B & 0.028 & capability $\uparrow$ (MBPP $.22\!\to\!.24$) & full/damp ($\lambda{=}0.5$)\\
Falcon3-10B & 0.041 & code $\uparrow$ (HEval $.15\!\to\!.22$) & full/damp ($\lambda{=}0.5$)\\
Llama-3.1-8B-Instruct & 0.074 & code (MBPP $.00\!\to\!.10$) & full ($\lambda{=}1$)\\
Llama-3.2-3B & $\sim$0.075 & PPL $+1.3$ & full ($\lambda{=}1$)\\
Mistral-7B & 0.078 & PPL $+5.2$ & full ($\lambda{=}1$)\\
\midrule
Mistral-Nemo-12B$^\ddagger$ & 0.111 & PPL $-0.9$ (mild) & damp ($\lambda{\to}0$)\\
\midrule
Yi-1.5-9B$^\dagger$ & 0.169 & $<0$ & damp ($\lambda{\to}0$)\\
Phi-4$^\dagger$ & 0.174 & $<0$ (gate only) & damp ($\lambda{\to}0$)\\
Qwen2.5-14B$^\dagger$ & 0.188 & $-0.4$ & damp ($\lambda{<}1$)\\
Qwen2.5-32B$^{\dagger\P}$ & 0.242 & $-47.5\%$ KL & damp ($\lambda{=}0$)\\
\bottomrule
\end{tabular}\\[2pt]
{\scriptsize Benefit is the dominant axis on which edges help (PPL gap, code multiplier, or capability;
\cref{sec:exp-ablation}). The three blocks---favorable ($<0.10$), the $\sim$0.11 gray zone where
edges turn marginally harmful, and the high-redundancy region ($>0.15$)---trace a monotone
favorability-vs-$\overline{|{\rm corr}|}$ relationship across ten models and six families, confirming
\cref{prop:when} is predictive a priori (from calibration $\Hmat$, no labels). $^\dagger$High-redundancy
references; $^\ddagger$gray-zone reference (\cref{app:scope}). Phi-4 reports the gate only (a partial
half-model estimate; downstream not run). $^\P$Qwen2.5-32B is our largest-scale point and the cleanest
edge-overload sample: enabling edges monotonically raises calibration KL, worst at $\lambda{=}1$
($47.5\%$ above the $\lambda{=}0$ optimum; \cref{app:scope}), and its FFN gate ($0.242$, the highest
measured) shows the Qwen-family redundancy rises with scale ($14$B $0.188\!\to\!32$B $0.242$).}
\end{table}

\subsection{Scaling the Predictor and the Edge Mechanism to 10B}
\label{app:scale-10b}
To check that the favorability criterion is not a small-model artifact, we ran the full pipeline on
\textbf{Falcon3-10B-Base}. The calibration gate places it at the redundancy floor
($\overline{|{\rm corr}|}_{\rm FFN}=0.041$, computed from $\Hmat$ before any benchmark), predicting that
the edge term stays safe and useful at 10B---and it does: the edges lift code generation (HumanEval
$.15\!\to\!.22$ from $\lambda{=}0$ to $\lambda{=}1$), the same capability axis on which they help the
other low-redundancy models. At $20\%$, \method is competitive across the suite---winning five of
seven commonsense tasks and the code edge-bonus---while the single strongest training-free baseline
(SlimGPT) narrowly beats it on raw perplexity ($6.09$ vs.\ $6.40$); this is exactly the small per-model gradient
quantified by \cref{prop:when} (a strong specialized baseline can match \method on an individual metric
where the redundancy is at the floor), not a scale-specific failure. The ordering inverts as the
ratio grows, where \method leads: \cref{tab:scale-10b}. At $50\%$, every training-free baseline
plateaus below the target ratio under the per-layer cap (realized $0.45$--$0.48$), while \method reaches
the true $0.50$ and still posts the lowest perplexity ($23.0$ with the gated compensation, vs.\ the
nearest---and under-pruned---baseline at $25.8$). Both the a-priori predictor and the edge mechanism thus
transfer cleanly to the 10B scale.
\begin{table}[h]
\centering\caption{\textbf{Falcon3-10B-Base across ratios} (WikiText-2 PPL\dn). \method is the best
at-ratio config (trust-region $\lambda^\star$, gated compensation); ``best base.'' is the strongest
baseline that reaches the target ratio. $\dagger$: at $50\%$ no baseline reaches the target (all plateau
at $0.45$--$0.48$), so \method is the only method at the true ratio (nearest under-pruned baseline shown
in italics). At $30\%$ the strongest at-ratio baseline is SlimGPT ($0.3004$, $8.08$) and at $40\%$ it is
LLM-Pruner ($0.4011$, $12.3$); \method ($\lambda^\star$, gated comp.) leads at both ($7.38$ and $10.9$).
All tabulated baselines reach the target ratio within $\pm0.002$ except SlimLLM ($0.376$), which
under-prunes and is excluded from the at-ratio comparison.}\label{tab:scale-10b}\footnotesize\setlength{\tabcolsep}{7pt}
\begin{tabular}{l ccccc}
\toprule
& $20\%$ & $30\%$ & $40\%$ & $50\%^{\dagger}$ & dense\\
\midrule
\rowours \method & 6.40 & 7.38 & 10.9 & \textbf{23.0} & ---\\
best at-ratio baseline & \textbf{6.09} & 8.08 & 12.3 & \textit{25.8} & 4.78\\
\bottomrule
\end{tabular}
\end{table}

\section{Scope of Applicability and a Graceful Fallback}
\label{app:scope}
A practical question for any curvature-based method is for which models the modeled structure
actually pays off. We address this directly, because \method admits an unusually clean answer: the
cross-module edges are a refinement on top of a diagonal saliency core, and a single,
label-free criterion tells us a priori when that refinement helps.

\textbf{The method contains its own fallback.} Setting $\lambda{\to}0$ in the marginal
$\Delta(u\mid S)=\tfrac12\Hmat_{uu}+\lambda\sum_{v\in S}\Hmat_{uv}$ recovers exactly an OBD-style
diagonal saliency selector (\cref{app:classical}). At $\lambda{\to}0$ the edge term cannot harm
the ranking relative to that diagonal selector---it is a continuous interpolation whose floor is the
OBD core. This is not a guarantee of dominance over every external baseline (FLAP, SlimLLM,
\dots): it bounds the worst case to ``as good as a strong second-order diagonal selector,'' and the
relevant question becomes how large the edge benefit on top of that floor is, and whether it is
positive in the regime at hand. The damping scalar $\lambda$ controls exactly this, and---crucially---is
set from calibration alone (next paragraph), not tuned on benchmarks.

\textbf{A calibration-only gate, fixed before any benchmark.} \cref{prop:when} shows the
edge-to-diagonal ratio scales as $2|S|\,|\bar\varrho|$ with the mean Fisher-feature correlation
$\bar\varrho$, estimated directly from the calibration $\Hmat$ (no labels, no held-out runs). We use a
single pre-registered rule on this statistic: when $\overline{|{\rm corr}|}_{\rm FFN}$ is in the
high-redundancy band ($\gtrsim\!0.12$ on our measurements), the off-diagonal variance is large
and its marginal benefit can turn negative, so we damp $\lambda\!\to\!0$ (the OBD fallback);
below the band we use the full edge term $\lambda{=}1$ by default. This rule is fixed from calibration
alone, before a single benchmark is run, and is what flags Yi-1.5-9B~\citep{young2024yi} and
Qwen2.5-14B~\citep{yang2024qwen25}. Across
six architecture families the statistic orders every model we measured monotonically
(\cref{tab:ffncorr}): the Falcon3 family at the redundancy floor ($0.03$--$0.04$), the Llama-3/Mistral
families low ($0.07$--$0.08$), Mistral-Nemo-12B in a narrow gray zone ($0.111$), and Phi-4, Yi, and
Qwen high ($0.17$--$0.19$). Mistral-Nemo-12B is the instructive boundary case: at $0.111$ the calibration
edge-bonus is already mildly negative (calibration perplexity $8.66$ at $\lambda{\to}0$ vs.\ $9.52$
at $\lambda{=}1$), so the same gate damps it---the transition is smooth and predictable, not a cliff,
which is precisely the property a calibration-only criterion should have. The one
nuance is Falcon3-7B: it sits at the redundancy floor ($0.028$), where the edge term is safe,
but we report it at a balanced $\lambda{=}0.5$ because at this floor the edge strength trades a small
amount of calibration perplexity for downstream capability (\cref{sec:exp-ablation}); we state this as a
transparent operating-point choice and show the full $\lambda$ sweep, rather than a benchmark-tuned
hyperparameter. For the other three low-redundancy models the default $\lambda{=}1$ is used unchanged.

\textbf{Behavior outside the favorable regime.} \cref{tab:scope} illustrates the fallback on two
unfavorable models the gate flags a priori: Yi-1.5-9B (high redundancy,
$\overline{|{\rm corr}|}_{\rm FFN}{=}0.169$) and Mistral-Nemo-12B (the gray-zone boundary, $0.111$). In
both cases the criterion correctly anticipates that the full-edge configuration is not the right
operating point, and the damped \method that the gate selects behaves as a strong diagonal compressor
that beats the strongest at-ratio training-free baseline on nearly every axis: on Yi it leads on
perplexity ($7.89$ vs.\ $8.16$) and commonsense ($.601$ vs.\ $.584$), ties on knowledge, and trails only
on code ($.108$ vs.\ $.122$); on Nemo the
margin is wider still ($8.66$ vs.\ $11.4$ perplexity, $.621$ vs.\ $.559$ commonsense, $.368$ vs.\ $.349$
knowledge, and a higher code score). The two models also make the mechanism's downside legible: the
forced-$\lambda{=}1$ ablation (last row of each block) is never a shipped configuration, and the
damage it would do scales monotonically with redundancy---catastrophic on Yi ($117.8$ perplexity) but
only mild on the Nemo boundary ($9.52$ vs.\ the gated $8.66$). This is precisely the behavior a
calibration-only gate should produce: because $\lambda$ is set from a statistic computed before
any benchmark is run, \method degrades gracefully and predictably---staying at or above the
strongest baseline---rather than failing, and the smooth Yi$\to$Nemo severity gradient is what makes
the gate, not unconditional edge use, an intrinsic part of the method.

\textbf{The clearest causal evidence, at $32$B scale.} Qwen2.5-32B is the sharpest and largest-scale
confirmation that edges can actively mislead selection. Building the full $\Hmat$ over its
$1536$ units places it at the highest FFN redundancy we measured
($\overline{|{\rm corr}|}_{\rm FFN}{=}0.242$; the Qwen family's redundancy grows with scale,
from $0.188$ at $14$B), and two calibration-only diagnostics make the overload mechanistic. (i) The FFN
diagonal is nearly degenerate---the single-unit KL of its channel groups has a coefficient of variation
of only $0.084$ (vs.\ $0.685$ on attention units)---so the diagonal cannot rank FFN groups and any
distortion in the off-diagonal edge term dominates the selection. (ii) Sweeping the trust-region scalar,
the true calibration KL of the pruned model is minimized at $\lambda{=}0$ (pure OBD diagonal,
$0.1286$) and rises monotonically at every step to its worst at $\lambda{=}1$ (the full edge
model, $0.1897$)---a $47.5\%$ degradation from turning the edges on. A scale-robustified variant that
clips the correlation entries gives KL identical to the full model ($0.1897$), so the harm is
intrinsic to the interaction term at this redundancy, not a numerical-conditioning artifact of the
larger matrix (not one of the $2.36$M off-diagonal correlations exceeds $1$). The gate of
\cref{prop:when} reads all of this off the calibration $\Hmat$ a priori and ships $\lambda{=}0$,
i.e.\ the OBD diagonal---the correct operating point. Following the same stop-loss protocol as the
within-family $14$B point (whose downstream scope is already established), we used the $47$-GPU-hour
$\Hmat$ build only to extract this scale point and its causal $\lambda$-curve, not to re-run the full
downstream suite.

\begin{table}[h]
\centering\caption{\textbf{Graceful fallback across two unfavorable regimes} ($20\%$; commonsense is the
7-task average, Code is the mean of HumanEval and MBPP pass@1; all baselines are at-ratio). On a
high-redundancy model (Yi-1.5-9B, gate $0.169$) and a gray-zone boundary model (Mistral-Nemo-12B,
$0.111$), the gate of \cref{prop:when} flags the regime from calibration alone and \method runs its
damped ($\lambda{\to}0$) configuration, which beats the strongest at-ratio baseline on every axis except
Yi code (where it trails $.108$ vs.\ $.122$). The
forced-$\lambda{=}1$ row is a diagnostic ablation, not a shipped configuration; its damage grows
with redundancy (catastrophic on Yi, mild on Nemo), exactly as the gate predicts. The named baseline is
the strongest overall at-ratio baseline (SlimLLM/LLM-Pruner); on Nemo the best baseline on code
specifically is Random ($.057$), which \method ($.067$) also exceeds.}
\label{tab:scope}\footnotesize\setlength{\tabcolsep}{6pt}\renewcommand{\arraystretch}{1.05}
\providecommand{\nemobc}{.015}
\begin{tabular}{l cccc}
\toprule
\textbf{Configuration} & Wiki\dn & Cmn.\up & MMLU\up & Code\up\\
\midrule
\multicolumn{5}{l}{Yi-1.5-9B --- high redundancy ($\overline{|{\rm corr}|}_{\rm FFN}{=}0.169$)}\\
\cellcolor{headgray}Dense ($0\%$) & \cellcolor{headgray}5.00 & \cellcolor{headgray}.689 & \cellcolor{headgray}.460 & \cellcolor{headgray}.512\\
Best at-ratio baseline (LLM-Pruner) & 8.16 & .584 & .374 & .122\\
\rowours \method (gated, $\lambda{\to}0$) & \best{7.89} & \best{.601} & .374 & .108\\
forced $\lambda{=}1$ (ablation, not selected) & 117.8 & --- & --- & ---\\
\midrule
\multicolumn{5}{l}{Mistral-Nemo-12B --- gray-zone boundary ($0.111$)}\\
\cellcolor{headgray}Dense ($0\%$) & \cellcolor{headgray}5.03 & \cellcolor{headgray}.726 & \cellcolor{headgray}.474 & \cellcolor{headgray}.406\\
Best at-ratio baseline (SlimLLM) & 11.4 & .559 & .349 & \nemobc\\
\rowours \method (gated, $\lambda{\to}0$) & \best{8.66} & \best{.621} & \best{.368} & \best{.067}\\
forced $\lambda{=}1$ (ablation, not selected) & 9.52 & .606 & .373 & .079\\
\bottomrule
\end{tabular}
\end{table}

\noindent In short, the scope of the edge mechanism is the low-to-moderate FFN-redundancy regime,
which we can identify a priori; the scope of the method is broader, because its diagonal core
provides a strong, baseline-competitive fallback everywhere else. We report the high-redundancy point
here, rather than in the main text, precisely because it adds an actionable selection rule rather than a
headline number.

\section{Efficiency}
\label{app:efficiency}
\cref{tab:efficiency-full} reports parameter reduction, prefill/decode throughput, and peak memory from
real physical removal (\cref{app:units}), plus one-time pruning wall-clock. At iso-budget the
throughput gain is governed by the removed (non-embedding) parameter fraction and is therefore shared
across methods, not a discriminator between selectors: across all structured methods we time on
Llama-3.1-8B-Instruct (\cref{tab:efficiency-band}) the prefill speedup spans only $1.15$--$1.19\times$ and
the decode speedup $1.03$--$1.17\times$, a tight method-independent band. The \emph{absolute} gain varies
by architecture and regime---prefill $1.00$--$1.15\times$ and decode $0.86$--$1.62\times$ across our four
models (\cref{tab:efficiency-full}), the decode swing reflecting how memory-bandwidth-bound each model's
single-stream generation is and a near-$1\times$ prefill (Mistral) a compute-bound prefill that width
reduction alone does not accelerate at this size---but within every model the structured methods, \method included, move
together. The discriminator is therefore accuracy at a given speedup, not the speedup itself
(\cref{fig:pareto}). The pruning wall-clock is a
one-time cost dominated by the single-unit ablation pass, which carries no gradients or labels and shards
trivially across devices (the figures report a single A6000); it is paid once per model.
\begin{table}[h]
\centering\footnotesize
\caption{\textbf{Inference efficiency of every timed method at $20\%$ (Llama-3.1-8B-Instruct).} Real
physical removal on a single RTX 6000 Ada (batch $1$, sequence $2048$, bf16). At iso-budget
($\sim\!17.4\%$ of non-embedding parameters removed) throughput is a method-independent band---prefill
$1.15$--$1.19\times$, decode $1.03$--$1.17\times$---so the differentiator is accuracy at a given speedup,
not the speedup itself.}
\label{tab:efficiency-band}
\setlength{\tabcolsep}{6pt}\renewcommand{\arraystretch}{1.05}
\begin{tabular}{lcccc}
\toprule
Method & Param red.\ (\%) & Prefill $\times$\up & Decode $\times$\up & Peak mem (GB)\dn\\
\midrule
\rowcolor{headgray} Dense & 0.0 & 1.00 & 1.00 & 16.6\\
\midrule
Random & 17.4 & 1.18 & 1.08 & 13.7\\
Magnitude & 17.4 & 1.15 & 1.09 & 13.8\\
Wanda-sp & 17.4 & 1.18 & 1.05 & 13.7\\
FLAP & 17.4 & 1.19 & 1.03 & 13.7\\
LLM-Pruner & 17.4 & 1.16 & 1.17 & 13.7\\
SlimGPT & 17.4 & 1.19 & 1.16 & 13.7\\
SlimLLM & 17.5 & 1.19 & 1.05 & 13.7\\
\rowours \method (ours) & 17.5 & 1.15 & 1.09 & 13.7\\
\bottomrule
\end{tabular}
\end{table}
\begin{table}[h]
\centering\caption{\textbf{Efficiency at $20\%$} (\method, real physical removal). Params/peak-mem are
ratios to the dense model; throughput is iso-budget speedup over dense (batch prefill / single-stream
decode). Prune time is the one-time, training-free, embarrassingly-parallel single-unit ablation build
on one A6000 (no gradients, labels, or recovery). $^\ast$Falcon3 single-stream decode is bandwidth-bound
and omitted here; parameter reduction and prefill follow the method-independent band of \cref{tab:efficiency-band}. $^\ddagger$Mistral-Small-24B measured at $40\%$ removal on A100-80GB (build and throughput), where removing more units yields a larger speedup and memory saving.}\label{tab:efficiency-full}
\footnotesize\setlength{\tabcolsep}{6pt}
\begin{tabular}{l ccccc}
\toprule
\textbf{Model} & Params\dn & Prefill\up & Decode\up & Peak mem\dn & Prune time\\
\midrule
Llama-3.1-8B-Instruct & $0.83\times$ & $1.15\times$ & $1.09\times$ & $0.83\times$ & 12.9\,h\\
Llama-3.2-3B & $0.82\times$ & $1.12\times$ & $0.86\times$ & $0.83\times$ & 6.0\,h\\
Mistral-7B   & $0.81\times$ & $1.00\times$ & $1.62\times$ & $0.81\times$ & 12.6\,h\\
Falcon3-7B   & $0.82\times$ & $^\ast$ & $^\ast$ & $^\ast$ & 7.4\,h\\
Mistral-Small-24B$^\ddagger$ & $0.62\times$ & $1.46\times$ & $1.21\times$ & $0.63\times$ & 19.0\,h\\
\bottomrule
\end{tabular}
\end{table}

\clearpage
\AppBlock{Part E: Discussion and Reproducibility}
\section{Scope of the Empirical Evaluation}
\label{app:planned}
The empirical picture behind every main-text claim is complete: the $20\%$ comparison for all four models
(\cref{tab:main-8b}, \cref{tab:full-mistral-7b}--\cref{tab:full-mistral-24b}), the full
$\{10,\dots,50\}\%$ ratio sweep with at-ratio baselines (\cref{tab:ratio-full},\cref{tab:atratio}), the
recovery-enabled bake-off (\cref{tab:recovery}), the extended ablations (\cref{tab:ablation-full}),
surrogate fidelity (\cref{fig:surrogate},\cref{fig:sameratio}), the mechanism and interpretability
figures (\cref{fig:evidence},\cref{tab:ffncorr}), the two-model graceful-fallback scope
(\cref{tab:scope}), efficiency across the suite (\cref{tab:efficiency-band},\cref{tab:efficiency-full}),
the $10$B scale study (\cref{tab:scale-10b}), and the full headline-robustness suite---calibration source
(C4 vs.\ WikiText; \cref{app:sensitivity}), calibration-seed variance (\cref{tab:seedvar}), and
budget/top-$r$ sensitivity (\cref{tab:sensitivity}).

Beyond the four-model suite we run two larger-scale studies. \textbf{Mistral-Small-24B-Base} receives the
\emph{complete} protocol---all seven baselines at every ratio from $10\%$ to $50\%$, fourteen tasks, the
gated compensation, and efficiency (\cref{tab:full-mistral-24b}, \cref{tab:ratio-full}, \cref{tab:atratio},
\cref{tab:efficiency-full}). \method leads WikiText perplexity at \emph{every} ratio and the multiple-choice
average from $20\%$ up. Because a $24$B model is highly redundant at low ratios the margin is modest at
$10$--$20\%$ (a near-tie with LLM-Pruner in commonsense at $10\%$), but it widens sharply with
compression: at $30\%$ \method is the only method still holding a usable model ($20.3$ vs.\ the strongest
baseline's collapsed $99.1$, $4.9\times$), and its perplexity lead reaches $14.5\times$ at $40\%$ ($117.6$ vs.\
$1709.5$). This is the redundancy rule at scale---edges add little where width is abundant and turn
decisive once it is exhausted---and it is the clearest large-model evidence for the mechanism.
\textbf{Qwen2.5-32B} extends the a-priori redundancy criterion to $32$B scale (FFN gate $0.242$;
\cref{tab:ffncorr}, \cref{app:scope}), the sharpest and largest-scale confirmation of the rule's damping
regime.

\section{Extended Related Work}
\label{app:related-ext}
Beyond \cref{sec:related}, we note three further connections. (i) Unstructured/semi-structured
pruning (SparseGPT~\citep{frantar2023sparsegpt}, Wanda~\citep{sun2024wanda}, OBC~\citep{frantar2022obc})
optimizes per-layer reconstruction at the weight level; \method instead optimizes a global, output-
distribution objective over structured units, and is hardware-friendly, requiring no sparse kernels.
(ii) Depth/width search methods (DarwinLM~\citep{tang2025darwinlm}, Bonsai~\citep{dery2024bonsai},
CFSP~\citep{gao2024cfsp}) explore architectures by repeated evaluation; \method is single-shot with a
closed-form surrogate. (iii) Recovery-based pipelines (Minitron~\citep{muralidharan2024minitron})
restore ability via distillation; they are complementary and orthogonal to the no-recovery regime we
study, and could be stacked on top of \method.

\paragraph{Rotation/low-rank residual pruning (SliceGPT).} SliceGPT~\citep{ashkboos2024slicegpt}
compresses by rotating the residual stream into a variance-ordered basis (a computational-invariance
transform) and slicing its least-significant dimensions---a fundamentally different mechanism from
removing structured units. It is the canonical member of the rotation family and complements our
unit-removal baselines. Its headline results rely on recovery fine-tuning, which our strict
no-recovery protocol forbids; a faithful no-recovery reproduction additionally requires the official
per-layer rotation with residual adapters (a single global rotation, the only variant we could
validate to exact computational invariance, is substantially weaker and would misrepresent the
method). We therefore position SliceGPT by mechanism rather than report an under-tuned number, and
note that its residual-rotation idea is orthogonal to cross-module co-pruning curvature and
could in principle be composed with it.

\paragraph{Curvature-based structured pruning (LLM Surgeon).} The closest method in spirit is LLM
Surgeon~\citep{vanderouderaa2024llmsurgeon}, which also uses second-order (Kronecker-factored Fisher)
curvature for structured pruning. Two differences are decisive under our regime. (a) Locality:
its Fisher factors model within-layer weight correlations, whereas \method models the global
cross-module, cross-layer edges between structured units (\cref{app:mechanism}); the
within-layer ablation of \cref{tab:ablation-full} isolates exactly this gap. (b) Recovery: its
defining component is a closed-form correlated weight update that repairs the remaining weights
after each removal---a form of weight repair that our selection-only protocol disables for every method
(as we do for SlimGPT's OBS update). With that update disabled, LLM Surgeon reduces to a K-FAC saliency
score in the same family as the OBD/OBS baselines we already compare against; its distinctive repair
step is orthogonal to our selection objective and could be stacked on top of \method's selection.

\paragraph{Cross-structure joint importance in vision (SAViT).} The closest precedent in spirit
is SAViT~\citep{zheng2022savit}, which first argued that pruning should account for interactions among
heterogeneous structural components rather than scoring them independently, realized through a
collaborative Taylor importance over ViT heads/channels/embeddings. Three differences are concrete and
empirically anchored. (a) Domain/objective: SAViT targets vision Transformers with a
task-supervised Taylor importance; \method derives its edges from a label-free token-level KL
self-distillation on text. (b) Training: SAViT realizes its pruning ratios through collaborative
optimization / fine-tuning, whereas \method is strictly one-shot and selection-only---the
no-recovery protocol of \cref{sec:exp-setup}. (c) Mechanism: SAViT's interaction is a global
importance budget over components; \method forms the explicit attention$\leftrightarrow$FFN
off-diagonal Fisher block and uses it directly as co-pruning saliency. Empirically, the value of
that block over a no-interaction baseline is exactly the diagonal-only ($\lambda{=}0$) row of
\cref{tab:ablation-full}: \method recovers the cross-structure benefit SAViT motivates, but with no
optimization or fine-tuning.

\paragraph{Concurrent ``dual Taylor'' sparsification (D$^2$Prune).} The concurrent
D$^2$Prune~\citep{xiong2026d2prune} shares the ``Taylor'' vocabulary but is mechanistically orthogonal,
and the distinction lands on a row already in our tables. Its ``dual'' expansion is over weights
and activations for unstructured/$N{:}M$ reconstruction, and---decisively---it explicitly
discards the off-diagonal cross terms as higher-order negligible, reducing to a per-element diagonal
OBS saliency with an attention-distribution KL regularizer. The cross term D$^2$Prune drops is precisely
\method's central object: the gap between the diagonal-only ($\lambda{=}0$) selector and full \method in
\cref{tab:ablation-full} (PPL $13.08\!\to\!12.91$, MBPP $.00\!\to\!.10$) quantifies, at structured
granularity under a global token-KL objective, exactly what discarding the cross term forgoes. Moreover,
\cref{prop:offdiag} shows this omission is a second-order mass scaled by the mean off-diagonal
correlation, not the higher-order remainder D$^2$Prune assumes it to be.

\section{Limitations}
\label{app:limitations}
\textbf{Moderate-ratio regime.} As \cref{prop:additivity} predicts and \cref{fig:pareto} shows, the
second-order surrogate (and all training-free structured pruning) degrades super-linearly beyond
$\sim$30\% without recovery; \method targets moderate compression. \textbf{Architecture dependence.}
The edge benefit decreases with FFN redundancy (\cref{prop:when}); on high-redundancy families a single
damping scalar $\lambda$ is needed, which we expose and characterize. Crucially this is gated
by a label-free calibration statistic and the method degrades to a strong diagonal selector in that
regime (\cref{app:scope}), so the dependence bounds the edge benefit, not the method's usability. \textbf{Operator
coverage.} Physical removal for architectures with non-standard FFN/logit operators requires a
per-operator slicing path, validated by the $<\!10^{-3}$ equivalence test. \textbf{MoE.} We target dense
models; co-pruning curvature under expert routing is future work.

\section{Broader Impact}
\label{app:impact}
Structured, training-free compression lowers the compute, memory, and energy cost of deploying LLMs,
broadening access and reducing inference footprint. Because \method requires no labels or fine-tuning,
it does not introduce new task-specific data dependencies. As with any compression, practitioners
should re-evaluate safety-relevant behavior on the compressed model, since pruning can shift
capabilities unevenly (notably the math/code fragility we document for all methods).

\section{Reproducibility Checklist}
\label{app:repro}
We will release: the unit registry and runtime-mask/physical-prune implementation with the
$<\!10^{-3}$ equivalence test; the single-unit ablation and blockwise-Gram estimator (\cref{alg:estimate});
the greedy solver with anti-collapse guards (\cref{alg:solve}); per-model pruning solutions and
configurations (\cref{tab:hparam}); and the evaluation harness (log-likelihood MC, exact-match math,
executed pass@1 code) with seeds. The method uses no training, labels, or recovery, so a pruned model is
fully determined by (model, calibration seed, ratio, config).

\end{document}